\documentclass{article} 
\usepackage[nonatbib,final]{neurips_2025}

\usepackage{times}
\RequirePackage{algorithm}
\RequirePackage{algorithmic}
\usepackage{breakcites}
\usepackage{times}
\usepackage{graphicx} % Required for inserting images
\usepackage{mathtools}
\usepackage{amsfonts, amsmath, amssymb}
\usepackage{amsthm}
\usepackage{thm-restate}
\usepackage{microtype}
\usepackage{booktabs}  % for professional tables
\usepackage[utf8]{inputenc} % allow utf-8 input
\usepackage[T1]{fontenc}    % use 8-bit T1 fonts
\usepackage{hyperref}       % hyperlinks
\usepackage{url}            % simple URL typesetting  
\usepackage{xcolor}         % colors
\usepackage[utf8]{inputenc} % allow utf-8 input
\usepackage[T1]{fontenc}    % use 8-bit T1 fonts      % hyperlinks
\usepackage{url}            % simple URL typesetting
\usepackage{nicefrac}       % compact symbols for 1/2, etc.    % microtypography
\usepackage{xcolor}
\usepackage{caption} %new
\usepackage{subcaption}
\usepackage{wrapfig}
\usepackage{mathtools}
\usepackage{xspace}
\usepackage[utf8]{inputenc}
\usepackage{amsfonts, amsmath, amssymb, color}
\usepackage{hyperref}
\usepackage{ulem}
\usepackage{textcomp}
\usepackage{comment}
\usepackage[dvipsnames]{xcolor}
\usepackage{bbm}
\usepackage{notations}
\usepackage{tikz}
\usepackage[hang,flushmargin]{footmisc} %new package for removing indent in footnote

\usepackage{enumitem}

\title{Active Seriation: Efficient Ordering Recovery with Statistical Guarantees}
\author{%
  James Cheshire \qquad   \qquad Yann Issartel \\
  \\
  LTCI, Télécom Paris, Institut Polytechnique de Paris %\thanks{Use footnote for providing further information
}

\begin{document}

\maketitle

\begin{abstract}
Active seriation aims at recovering an unknown ordering of $n$ items by adaptively querying pairwise similarities. 
The observations are noisy measurements of entries of an underlying $n \times n$ permuted Robinson matrix, whose permutation encodes the latent ordering.
The framework allows the algorithm to start with partial information on the latent ordering, including seriation from scratch as a special case.
We propose an active seriation algorithm that provably recovers the latent ordering with high probability. Under a uniform separation condition on the similarity matrix, optimal performance guarantees are established, both in terms of the probability of error and the number of observations required for successful recovery.
\end{abstract}

\section{Introduction}

%\ya{To update refs (arXiv papers now published)} 

The seriation problem involves ordering $n$ items based on noisy measurements of pairwise similarities. 
This reordering problem originates in archaeology, where it was used for the chronological dating of graves~\cite{Robinson51}. 
More recently, it has found applications in data science across various domains, including envelope reduction for sparse matrices~\cite{barnard1995spectral}, read alignment in de novo sequencing~\cite{garriga2011banded,bioinfo17}, time synchronization in distributed networks~\cite{Clock-Synchro04,Clock-Synchro06}, and interval graph identification~\cite{fulkerson1965incidence}. 

In many of these settings, pairwise measurements can be made in an adaptive fashion, leveraging information from previously chosen pairs. 
Motivated by these applications, we study the problem of recovering an accurate item ordering from a sequence of actively selected pairwise measurements.

The classical seriation problem has attracted substantial attention in the theoretical literature~\cite{atkins1998spectral,fogel2013convex,janssen2020reconstruction,giraud2021localization,cai2022matrix,issartel2024minimax}. 
In this line of work, the learner receives a single batch of observations, whereas in the active setting considered in this paper, the algorithm can adaptively select which pairs to observe;
to the best of our knowledge, this active seriation setting has not been previously analyzed.

Related problems include adaptive ranking and sorting under noisy observations~\cite{jamieson2011active,braverman2009sorting,heckel2019active}. 
However, these problems typically rely on pairwise comparisons (e.g., is item~$i$ preferred to item~$j$?) to infer a total order. 
In contrast, seriation builds on pairwise similarity scores that encode proximity in the underlying ordering. 
This distinction leads to different statistical and algorithmic challenges.

%%%%%%%%%%%%

\paragraph{Problem setup.} In the seriation paradigm, we assume the existence of an unknown symmetric matrix $M$ representing pairwise similarities between a collection of $n$ items. 
The matrix $M$ is structured so that the similarities $M_{ij}$ are correlated with an unknown underlying ordering of the items, which is encoded by a permutation $\pi=(\pi_1,\ldots,\pi_n)$ of $[n]$.
The similarity $M_{ij}$ between items $i$ and $j$ tends to be large when their positions $\pi_i$ and $\pi_j$ are close, and small when they are far apart. 
To model this structure formally, the literature assumes that $M$ is a Robinson matrix whose rows and columns have been permuted by the latent permutation $\pi$~\cite{fogel2013convex,recanati2018reconstructing,janssen2020reconstruction,giraud2021localization}; see Section~\ref{section-setting} for a formal presentation. 

We consider a general framework in which the algorithm may be initialized with partial information on the latent ordering~$\pi$.
Specifically, it can be provided with a correct ordering of a subset of the items that is consistent with~$\pi$.
This setting includes seriation from scratch as the special case where no such information is given initially.

The algorithm adaptively selects pairs of items and observes noisy measurements of their similarities. 
A total of $T$ such observations are collected, and the noise is controlled by an unknown parameter $\sigma$.
The goal is to recover the latent ordering $\pi$ of the $n$ items from these $T$ observations.
We evaluate the performance of an estimator $\hat \pi$ by its probability of failing to identify~$\pi$.
See Section~\ref{section-global-setting} for details.

%%%%%%%%%%%%%%%

\paragraph{Contribution.}
We introduce \textit{Active Seriation by Iterative Insertion} (ASII), an active  procedure for estimating the ordering~$\pi$. 
Unlike most existing seriation methods, which are non-active, ASII is remarkably simple, runs in polynomial time, and yet achieves optimality guarantees both in terms of error probability and sample complexity.

In the general framework considered in this paper, we analyze the performance of ASII and establish exponential upper bounds on its probability of error.
In the special case of seriation from scratch, where no prior ordering information is available, we provide a sharp characterization of the statistical difficulty of ordering recovery over the class of similarity matrices with minimal gap $\Delta$ between adjacent coefficients.
This difficulty is governed by the signal-to-noise ratio (SNR) of order {{\small $\Delta^2 T / (\sigma^2 n)$}},
which can be interpreted as the number of observations per item, {{\small $T/n$}}, multiplied by the SNR per observation, {{\small $(\Delta / \sigma)^2$}}. 
Our results identify a phase transition at the critical level $\textup{SNR} \asymp \ln n$: 
below this threshold, ordering recovery is information-theoretically impossible, while above it, ASII achieves recovery with a probability of error that decays exponentially fast with the SNR.
Moreover,  we show that no algorithm can achieve a faster decay rate, establishing optimality in this regime.

Finally, we illustrate the performance of ASII through numerical experiments and a real-data application.

%%%%%%%%%%%%%%%%%%%%%%%%%%%%%%%%%%%%%%%%%%%%%%%

\subsection{Related work}

\textbf{Classical seriation.} 
The non-active seriation problem was first addressed by~\cite{atkins1998spectral} in the noiseless setting, using a spectral algorithm. 
Subsequent works analyzed this approach under noise~\cite{fogel2013convex,giraud2021localization,natik2021consistency}, typically relying on strong spectral assumptions to establish statistical guarantees. 
More recent contributions proposed alternative polynomial-time algorithms with guarantees under different and sometimes weaker assumptions~\cite{janssen2020reconstruction,giraud2021localization,cai2022matrix,issartel2024minimax}. 
Our analysis falls within the line of work on Lipschitz-type assumptions~\cite{giraud2021localization,issartel2024minimax}, as our $\Delta$-separation condition can be viewed as a lower Lipschitz requirement.

\textbf{Statistical-computational gaps.}
Prior studies have suggested statistical–computational gaps in the non-active seriation problem~\cite{giraud2021localization,cai2022matrix,berenfeld2024seriation}, 
where known polynomial-time algorithms fall short of achieving statistically optimal rates under certain noise regimes or structural assumptions. 
While some of these gaps have recently been closed~\cite{issartel2024minimax}, the resulting algorithms tend to be complex and may not scale well in practice. 
In contrast, in the active setting, we show that a simple and computationally efficient algorithm achieves statistically optimal performance.

\textbf{Bandit models.} 
In classical multi-armed bandits (MAB)~\cite{bubeck2012}, each arm yields independent rewards, and the goal is to maximize cumulative reward or identify the best arm. 
In contrast, in active seriation, each query $(i,j)$ measures the similarity between two interdependent items, and all measurements must be consistent with a single latent ordering. 
This interdependence prevents a direct application of standard MAB algorithms such as UCB or Thompson Sampling, which treat arms as independent and do not exploit structural relationships between them.

Algorithmically, our approach is  related to noisy binary search and thresholding bandits~\cite{feige1994computing,karp2007noisy,ben2008bayesian,Nowak09binary,emamjomeh2016deterministic,cheshire2020,cheshire2021problem}, 
which rely on adaptive querying under uncertainty. 
However, these methods operate on low-dimensional parametric models, 
whereas seriation involves a combinatorial  ordering that must remain globally consistent across  item pairs.

\textbf{Active ranking.} 
A related but distinct problem is active ranking~\cite{heckel2019active,shah2017simple}, where a learner infers a total order  based on noisy pairwise comparisons or latent score estimates.  
Extensions include Borda, expert, and bipartite ranking~\cite{pmlr-v202-saad23b,cheshire2023active}. 
These methods typically assume that each item is associated with an intrinsic scalar score, and that pairwise feedback expresses a directional preference between items. 
In contrast, seriation relies on pairwise similarity information, which encodes proximity rather than preference. 
Recovering the latent ordering therefore requires global consistency among all pairwise similarities, 
making the problem more constrained and structurally different from standard ranking tasks.

%%%%%%%
% Section Problem
%%%%%%%

\section{Active Seriation: Problem Setup}
\label{section-global-setting}
%In this section, we formally state the seriation problem considered in this paper.

\subsection{Similarity matrix and ordering}
\label{section-setting}

Given a collection of  items $[n] :=\{1,\ldots,n\}$, let $M=[M_{ij}]_{1\leq i,j \leq n}$ denote the (unknown)  \textit{similarity matrix},  
where the coefficient $M_{ij}\in \mathbb{R}$ measures the similarity between items~$i$ and $j$.
Our structural assumption   on $M$ is related to the class of Robinson matrices,  introduced below.

\begin{defi}
\label{def-robinson}
   A  matrix $R\in\R^{n\times n}$ is called a Robinson matrix (or R-matrix) if it is symmetric and %satisfies the inequalities
   \[ R_{i, j} > R_{i-1, j}   \quad \text{and} \quad   R_{i, j} > R_{i, j+1} \ , \]
   for all $i \le j$ on the upper triangle of $R$. 
\end{defi}
The entries of a Robinson matrix decrease as one moves away from the (main) diagonal (see Figure~1, left).
In other words, each row / column  is unimodal and attains its maximum on the diagonal.
\begin{wrapfigure}{r}{6cm}
    \centering
    \includegraphics[width=0.75\linewidth]{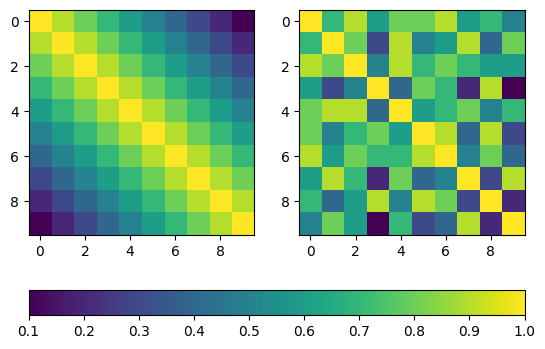}
    \caption{{{\footnotesize R-matrix \& a permuted version.}}}
    \label{fig:rob}
\end{wrapfigure}
Following \cite{atkins1998spectral}, a matrix is said to be \textit{pre-R} if it can be transformed into an R-matrix by simultaneously permuting its rows and columns (Figure~1, right). 
In this paper, we assume the similarity matrix $M$ is pre-R, i.e., 
\begin{equation}
\label{model}
M = R_\pi := [R_{\pi_i, \pi_j}]_{1 \leq i,j \leq n} \ ,
\end{equation}
for some R-matrix $R$ and some permutation $\pi = (\pi_1, \ldots, \pi_n)$ of $[n]$. 
The permutation $\pi$ represents the latent \textit{ordering} of the items.

In~\eqref{model}, the similarities $M_{ij}$ reflect the ordering~$\pi$ as follows: $M_{ij}$ tends to be larger when the positions $\pi_i$ and $\pi_j$ are close together, and smaller when they are far apart.

%We call $\pi$ an \textit{ordering} of the $n$ items with similarity matrix $M$ if it satisfies~\eqref{model} for some R-matrix. 

\begin{rem}
\label{two-orderings}
The items have exactly two orderings: 
if {{\small $\pi$}} is an ordering, then the reverse permutation~{{\small $\pi^{\mathrm{rev}}$}}, defined by {{\small $\pi^{\mathrm{rev}}_i = n+1 - \pi_i$}}, is also an ordering.
Indeed, if {{\small $M = R_{\pi}$}}, then {{\small $M = R^{\mathrm{rev}}_{\pi^{\mathrm{rev}}}$}}, where {{\small $R^{\mathrm{rev}}$}} is obtained by reversing the rows and columns of {{\small $R$}}.
In the sequel, we refer to either of these two orderings as \textit{the latent ordering}.
\end{rem}

%%%%%%%%%%%%

\subsection{Active observation model}

In the active seriation problem, the algorithm sequentially queries $T$ pairs of items and receives noisy observations of their pairwise similarities, encoded by the unknown similarity matrix $M$ in~\eqref{model}.
The goal is to recover the latent ordering~$\pi$ of the $n$ items from these noisy observations.

Initially, the algorithm is provided with partial information on the latent ordering $\pi$: it receives as input a correct ordering $\tilde{\pi}$ of the items $\{1,\ldots,n-\tilde n\}$, corresponding to the restriction of $\pi$ to this subset.
This framework interpolates between online seriation ($\tilde n = 1$), where a single item is inserted into an existing ordering, and seriation from scratch ($\tilde n = n$).

More precisely, for some $\tilde n \in [n]$, the algorithm is given a permutation $\tpi = (\tpi_1, \ldots, \tpi_{n-\tn})$ of $[n-\tn]$ satisfying\footnote{In full generality, condition~\eqref{new-relative-order-incremental} should be assumed to hold either for the latent ordering~$\pi$ or for its reverse~$\pi^{\mathrm{rev}}$; for simplicity of exposition, we state it in terms of~$\pi$, as this distinction plays no essential role in what follows.}
\begin{align}
\label{new-relative-order-incremental}
\forall\, i, j \in [n-\tn] \ , \qquad
\pi_i < \pi_j \quad \Longleftrightarrow \quad \tpi_i < \tpi_j \  .
\end{align}
In other words, the input permutation $\tpi$ preserves the relative order of the items $\{1,\ldots,n-\tn\}$ in the latent ordering $\pi$.

At each round $t=1,\ldots,T$, the algorithm selects a pair $(i_t, j_t)$ with $i_t \neq j_t$, possibly depending on the outcomes of previous queries.
It then receives a noisy observation of the similarity~$M_{i_t j_t}$, given by a $\sigma$-sub-Gaussian random variable\footnote{A random variable $X$ is said to be \textit{$\sigma$-sub-Gaussian} if $\mathbb{E}[\exp(u X)] \leq \exp\left(\frac{u^2\sigma^2}{2}\right)$ for all $u\in\mathbb{R}$.} with mean $M_{i_t j_t}$.
This sub-Gaussian setting covers standard observation models, including Gaussian noise and bounded random variables.

After $T$ queries, the algorithm outputs a permutation $\hat{\pi}$ of $[n]$ as its estimate of the latent ordering~$\pi$.

%%%%%%%%%%%%

\subsection{Error probability and minimal gap}

The algorithm is considered successful if $\hat{\pi}$ recovers either $\pi$ or its reverse $\pi^{\mathrm{rev}}$, as both permutations are valid orderings (Remark~\ref{two-orderings}). 
The probability of error is thus defined as:\footnote{
This lack of identifiability between $\pi$ and $\pi^{\mathrm{rev}}$ persists in the seriation from scratch case, since condition~\eqref{new-relative-order-incremental} is vacuous.
}
\begin{equation}
\label{objective}
p_{M, T} \ := \ \mathbb{P}_{M, T} \left\{ \hat{\pi} \neq \pi \text{ and } \hat{\pi} \neq \pi^{\mathrm{rev}} \right\} \ ,
\end{equation}
where the probability is over the randomness in the $T$ observations collected on the matrix $M$.

A key quantity in our analysis of~\eqref{objective} is the separation between consecutive entries in the underlying R-matrix. Specifically, for any pre-R matrix $M$ as in \eqref{model}, we define its \textit{minimal gap} as
\begin{equation}
\label{minimal-gap-def}
\Delta_M := \min_{\substack{1 < i < j \leq n-1}} \left\{ (R_{i,j} - R_{i-1,j}) \wedge (R_{i,j} - R_{i,j+1}) \right\} \ ,
\end{equation}
where $\wedge$ denotes the minimum.
The quantity~\eqref{minimal-gap-def} measures the smallest difference between neighboring entries in the R-matrix associated with $M$. Note that $\Delta_M > 0$ (by Definition~\ref{def-robinson}) and that $\Delta_M$ is well-defined even though $M$ can be associated with different R-matrices (Remark~\ref{two-orderings}).

Some of our guarantees are stated over classes of matrices with a prescribed minimal gap.
Namely, for any $\Delta>0$, we introduce
\begin{equation}
\label{model-set}
\mathcal{M}_\Delta := \left\{ M \in \mathbb{R}^{n \times n} : M \text{ is  pre-R, and } \Delta_M \geq \Delta \right\} \ ,
\end{equation}
as the set of pre-R  matrices with minimal gap at least $\Delta$.

To simplify the presentation of our findings, we focus on the challenging regime where
\begin{equation}
\label{challenging-regime}
\frac{\Delta}{\sigma} \leq 1 \ , \qquad \ \ (\sigma > 0) \ ,
\end{equation}
i.e., the signal-to-noise ratio  per observation is at most $1$. This  excludes mildly stochastic regimes where the problem has essentially the same difficulty as in the noiseless case.

%%%%%%%%%%%%%%%%%

%%%%%%%%%%%%%

%%%%%%
% ALgo
%%%%%%

\section{Seriation procedure}
\label{algo-section}

\textsc{ASII} (Active Seriation by Iterative Insertion) reconstructs the underlying ordering by iteratively inserting each new item into an already ordered list.
At iteration $k$, given a current estimated ordering {{\small $\hat{\pi}^{(k-1)}$}} of the items {{\small $\{1,\ldots,k-1\}$}}, the algorithm inserts item $k$ at its correct position to form an updated ordering {{\small $\hat{\pi}^{(k)}$}} of {{\small $\{1,\ldots,k\}$}}.
This process is repeated until the full ordering of all $n$ items is obtained.
The algorithm is fully data-driven: it takes as input only the sampling budget~$T$; it does not rely on any knowledge of the noise parameter~$\sigma$, the similarity matrix~$M$, nor its minimal gap~$\Delta_M$.

To perform this insertion efficiently, two key subroutines are used:

\begin{enumerate}[label=(\roman*), leftmargin=2em, itemsep=2pt, topsep=2pt]
    \item \textbf{Local comparison rule.}
    To decide where to insert $k$, the algorithm must compare its position relative to items already ordered in {{\small $\hat{\pi}^{(k-1)}$}}.
    This is achieved through the subroutine \textsc{Test}, which determines whether $k$ lies to the left, in the middle, or to the right of two reference items $(l,r)$.
    
    \item \textbf{Efficient insertion strategy.}
    To minimize the number of comparisons, the algorithm performs a binary search over the current ordering, where each comparison is made via \textsc{Test}.
    Because these tests are noisy, the procedure is further stabilized through a backtracking mechanism. %which allows the algorithm to detect and correct occasional local errors without increasing the overall sample complexity.
\end{enumerate}

%The next paragraphs describe these two subroutines in more detail. 

\paragraph{(i) Local comparison rule.}
Given two items $(l,r)$ such that $\pi_l < \pi_r$, the goal is to determine whether $k$ lies to the left, in the middle, or to the right of $(l,r)$ in the unknown ordering~$\pi$. 
Formally,  this means deciding whether $\pi_k < \pi_l$, or $\pi_k \in (\pi_l,\pi_r)$, or $\pi_k > \pi_r$, respectively.

The subroutine \textsc{Test} is based on the following property of Robinson matrices: 
when $k$ lies between $l$ and $r$, its similarity to both $l$ and $r$ tends to be higher than the similarity between $l$ and $r$ themselves.
Accordingly, \textsc{Test} compares the three empirical similarities {{\small $\widehat M_{kl}$, $\widehat M_{kr}$,}} and {{\small $\widehat M_{lr}$,}}
and identifies the smallest one as the pair of items that are farthest apart.
For example, if {{\small $\widehat M_{lr}$}} is the smallest, then $k$ lies in the middle. 
%if {{\small $\widehat M_{kr}$}} is the smallest, $k$ lies to the left of $(l,r)$;
%and if {{\small $\widehat M_{kl}$}} is the smallest, $k$ lies to the right.
%The subroutine \textsc{Test} returns $b \in \{-1,0,1\}$ accordingly. 
The pseudocode of \textsc{Test} is provided in Appendix~\ref{app:pseudocode}. 

Each call to the subroutine \textsc{Test} is performed with a limited sampling budget, typically of order {{\small $O\!\left(T / (\tilde n \log_2 k)\right)$}}, in order to be sampling-efficient.
With such a budget, individual tests are not designed to be reliable with high probability.
This design trades local test accuracy for global budget efficiency; occasional incorrect tests are later corrected by a backtracking mechanism.

Remark: Higher reliability is required only at a few critical steps of \textsc{ASII} (e.g., during the initialization of the binary search), where a larger budget of order {{\small $O(T / \tilde n)$}} is allocated to ensure correctness with high probability.

%%%%%%%%%%%%%%%

\paragraph{(ii) Efficient insertion strategy.} 
The idea of incorporating backtracking into a noisy search has appeared in several studies, e.g., \cite{feige1994computing, ben2008bayesian, emamjomeh2016deterministic,cheshire2021problem}.
Here, we adopt this general principle to design a robust insertion mechanism that remains reliable under noisy relational feedback.

At this stage of the algorithm, the items {{\small $\{1,\ldots,k-1\}$}} have already been placed into an estimated ordering {{\small $\hat{\pi}^{(k-1)} = (\hat{\pi}^{(k-1)}_1,\ldots,\hat{\pi}^{(k-1)}_{k-1})$}} constructed so far.
To insert the new item $k$ into this list, we use the subroutine \textsc{Binary \& Backtracking Search} (\bbs), which determines the relative position of~$k$ within the current ordering {{\small $\hat{\pi}^{(k-1)}$}}.
The search proceeds by repeatedly using the subroutine \textsc{Test} to decide whether $k$ lies in the left or right half of a candidate interval, thereby narrowing down the possible insertion range.

Because the outcomes of \textsc{Test} are noisy, even a single incorrect decision can misguide the search and lead to an erroneous final placement.
A naive fix would be to allocate many samples per \textsc{Test} to ensure highly reliable outcomes, but this would increase the sample complexity and undermine the benefit of active sampling.
Instead,  \bbs\ uses a small number of samples per \textsc{Test}, of order {{\small $O\!\left(T / (\tilde n \log_2 k)\right)$}}, just enough to ensure a constant success probability (e.g., around~$3/4$).
Backtracking then acts as a corrective mechanism that prevents local errors from propagating irreversibly.

The backtracking mechanism operates as follows: 
the algorithm keeps track of previously explored intervals and performs sanity checks at each step to detect inconsistencies in the search path.
When an inconsistency is detected, it backtracks to an earlier interval and resumes the search.
This prevents local mistakes from propagating irreversibly.
Theoretical analysis shows that, as long as the number of correct local decisions outweighs the number of incorrect ones --- an event that occurs with high probability under the assumption $M\in \cM_{\Delta}$--- the final insertion position is accurate.

Hence, the backtracking mechanism allows the algorithm to detect and correct occasional local errors.
As a result, we can use only a small number of samples per call to \textsc{Test}, while still ensuring correct insertions at the global level.
The \textsc{ASII} procedure thus provides both sampling efficiency and robustness, despite noisy observations.
A pseudocode of this procedure is given in Appendix~\ref{app:pseudocode}.

%%%%%%%%%%%%%%%%%%%%%%%

%%%%%%%%%%%%%%%%%

%Theoretical RESULTS
%%%%%%%%%%%%%%%

\section{Performance analysis}
\label{section-performance-analysis}
We study the fundamental limits of ordering recovery in active seriation, 
deriving information-theoretic lower bounds and algorithmic upper bounds on the error probability defined in~\eqref{objective}.

%%%%%%%%%%%%%%%%%
%Upper bounds with partial knowledge on the ordering
%%%%%%%%%%%%%%%%%

\subsection{Upper bounds for ASII}

We analyze the performance of \asii\ when the algorithm is provided with a permutation $\tpi$ of the items $\{1,\ldots,n-\tn\}$, consistent with the latent ordering~$\pi$ of the $n$ items.
The following theorem gives an upper bound on the error probability of ASII for recovering $\pi$.
Its proof is in Appendix~\ref{app:simple}.

\begin{thm}[Upper bound with partial information]
\label{thm:UB:insert-n2-into-n1}
There exists an absolute constant $c_0>0$ such that the following holds.
Let $n \ge 3$ and $\tn \in [n]$, and assume that the input permutation
$\tpi$ of $[n-\tn]$ satisfies~\eqref{new-relative-order-incremental}.
Let $(\Delta,\sigma,T)$ be such that condition~\eqref{challenging-regime} holds and 
\[
\frac{\Delta^2 T}{\sigma^2 \tn} \ge c_0 \ln n  \ .
\]
If $M \in \mathcal M_\Delta$, then the error probability of \asii\ satisfies %\ya{old $c_1$ became  $c_0$ here; (to adjust proofs)}
\begin{equation}
\label{eq:UB-partial}
p_{M,T} \le \exp\!\left(-\,\tfrac{1}{2400}\,\frac{\Delta^2 T}{\sigma^2 \tn}\right) \ .
\end{equation}
\end{thm}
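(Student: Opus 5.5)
The plan is to reduce the global error to a union bound over the $\tn$ insertion steps $k=n-\tn+1,\ldots,n$, and to show that each insertion fails with probability at most $\exp(-c\,\Delta^2 T/(\sigma^2\tn))$. The factor $\tn\le n$ coming from the union bound is then absorbed by the assumption $\Delta^2T/(\sigma^2\tn)\ge c_0\ln n$ with $c_0$ large, since this gives $\tn\le\exp(\tfrac{1}{c_0}\Delta^2T/(\sigma^2\tn))$. We argue by induction. On the event that $\hat\pi^{(k-1)}$ is a correct ordering of $[k-1]$ (true at $k=n-\tn+1$ by \eqref{new-relative-order-incremental}), we bound the conditional probability that item $k$ is inserted at a wrong position. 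Each step uses fresh samples, so conditioning is harmless.

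\textbf{Step 1: reliability of a single \textsc{Test}.} Suppose $\pi_l<\pi_r$ are correctly ordered. The Robinson structure together with $M\in\mathcal M_\Delta$ shows that the true smallest of $M_{kl},M_{kr},M_{lr}$ is smaller than the other two by at least $\Delta$. For instance, if $\pi_l<\pi_k<\pi_r$, then $M_{lr}\le M_{kr}-\Delta$ and $M_{lr}\le M_{lk}-\Delta$, by monotonicity along a row and column of $R$. The edge cases with adjacent positions or boundary indices in \eqref{minimal-gap-def} need care; I expect gaps of size $\Delta$ to still be available, except possibly at the corner entries. With $m$ samples per pair, each empirical mean is $\sigma/\sqrt m$-sub-Gaussian. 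Hoeffding and a union bound over the comparisons then give an error probability of at most $4\exp(-m\Delta^2/(16\sigma^2))$. With $m\asymp T/(\tn\log_2 k)$, the SNR condition together with \eqref{challenging-regime} makes this at most $1/4$, say. With $m\asymp T/\tn$ (the initialization tests at the ends of the list), it is at most $\exp(-c\Delta^2T/(\sigma^2\tn))$ directly.

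\textbf{Step 2: the binary and backtracking search as a biased random walk.} I model \bbs\ as a walk on the tree of dyadic intervals of the current list. At each step, a correct test, together with the sanity checks, moves the search one step closer to the leaf containing the true position of $k$: either down toward it, or back up when the current node is inconsistent. An incorrect test moves it at most one step away. The search runs for $N\asymp\log_2 k$ steps, with $N$ a constant multiple of the tree depth. If at least, say, $(1/2+\delta)N+\log_2k/2$ of the tests are correct, the final node is the correct leaf. Tests are independent given the past, each with success probability at least $3/4$, so by a Chernoff bound the number of wrong tests exceeds $N/3$ with probability at most $\exp(-cN)$. This is only $\exp(-c\log k)$, which is too weak. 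To get the target rate, I will instead use that each test's error is $\exp(-c'\,m\Delta^2/\sigma^2)$. Then $\mathbb P(\text{at least } N/3 \text{ errors})\le 2^N\exp(-c'(N/3)m\Delta^2/\sigma^2)$, and with $Nm\asymp T/\tn$ this is at most $\exp(-c''\Delta^2T/(\sigma^2\tn))$. The $2^N=k^{O(1)}$ factor is absorbed by the $c_0\ln n$ assumption. Here the choice of the per-test error $1/4$ is not what matters; what matters is the exponential per-test bound.

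\textbf{Main obstacle.} The delicate part is Step 2's deterministic claim that the backtracking rule makes the potential (distance to the correct leaf) decrease on every correct test and increase by at most one on a wrong one. This requires checking the algorithm's sanity checks against the three-way outputs of \textsc{Test}, including the boundary cases where $k$ falls outside the current extremes, which is why the initialization uses the larger budget. Once that potential argument is established, tracking the constants (Hoeffding's $1/16$, the number of steps, and the union bounds) should yield the stated $1/2400$.
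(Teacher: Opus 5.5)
This is essentially the paper's proof: induction over the at most $\tn$ insertions with a final union bound absorbed by the $c_0\ln n$ condition, and a Hoeffding bound at margin $\Delta/2$ for the initialization \textsc{Test}. Like the paper, you need the gap inequalities for every triple, corners included, because that test routinely compares the two current extremes. For \bbs\ you use, as the paper does, a potential argument plus a Chernoff bound driven by the exponentially small per-step error $\bar p=12\exp(-T\Delta^2/(144\,\tn T_k\sigma^2))$; the paper's bound $\mathrm{kl}(1/4,\bar p)\ge\frac14\ln(1/\bar p)-\ln 2$ is exactly your $2^{N}\bar p^{\,cN}$ union over error patterns.

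The one point to fix in your ``main obstacle'' is the choice of potential. Use the signed quantity $N_t=|L_t|+\lceil\log_2 k\rceil-2w_t$, where $w_t$ is the index of the last good interval in the stack $L_t$, not a distance to the leaf floored at $0$. Forward steps at the good leaf append duplicate copies of it, so $N_t$ still drops by one on every correct step, and a later wrong sanity check merely pops a copy. With a floored distance, a single late error would defeat your counting. Finally, $N_{T_k}\le 0$ forces the top of the stack to be the good leaf.
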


Once the SNR, $\Delta^2 T / (\sigma^2 \tn)$, exceeds a logarithmic threshold in~$n$, the error probability of \asii\ decays exponentially fast with the SNR.
We emphasize that ASII achieves this performance without requiring any knowledge of the model parameters $(\Delta,\sigma)$.

Beyond this uniform guarantee over the class $\cM_\Delta$, the performance of ASII admits a finer, instance-dependent characterization.
The same bound holds with $\Delta$ replaced by the true minimal gap $\Delta_M$ of the underlying matrix $M$ (defined in~\eqref{minimal-gap-def}).
Precisely, for the instance-dependent $\mathrm{SNR}_M = \Delta_M^2 T / (\sigma^2 \tn)$, the bound becomes $p_{M,T} \le \exp(-\mathrm{SNR}_M/2400)$ whenever $\mathrm{SNR}_M \ge c_0 \ln n$.

%%%%%%%
% optimality from scratch
%%%%%%%%

\subsection{Minimax optimal rates for seriation from scratch}
\label{section:minimax-rates-Delta-separated-mat}

We establish matching information-theoretic lower bounds in the case of active seriation from scratch ($\tn = n$), where no prior ordering information is available.
This analysis identifies two regimes, depending on whether the SNR is below or above a critical threshold.

%%%%%%%%%%

\paragraph{4.2.1  \ \ Impossibility regime.} When the SNR satisfies $\frac{\Delta^2 T}{\sigma^2 n} \lesssim \ln n$, no algorithm can recover the ordering with vanishing error probability. 
The following theorem formalizes this impossibility, establishing a constant lower bound on the error probability~\eqref{objective} for any algorithm in this regime. 

\begin{thm}[Impossibility regime]
\label{thm:LB2}
There exists an absolute constant $c_1>0$ such that the following holds.
Let $n \ge 9$ and $(\Delta,\sigma,T)$ be such that 
\[\frac{\Delta^2 T}{\sigma^2 n} \le c_1 \ln n \ .\]
Then, for any algorithm $A$, there exists a matrix $M \in \mathcal M_\Delta$ such that the error probability of $A$ satisfies $p_{M,T} \ge 1/2$. %\ya{old $c_0$ became  $c_1$ here(to adjust proofs)}
\end{thm}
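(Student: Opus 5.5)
We prove Theorem~\ref{thm:LB2} by a Fano-type argument adapted to adaptive sampling, over a family of $\Theta(n)$ matrices in $\mathcal M_\Delta$ that differ by a local swap of two adjacent items. We take Gaussian noise $\mathcal N(0,\sigma^2)$, which is $\sigma$-sub-Gaussian, so that a lower bound in this submodel suffices.

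\textbf{Construction.} Fix the base R-matrix $R_{ij} = -\Delta |i-j|$ for $i\ne j$, with a constant diagonal large enough. Its minimal gap is exactly $\Delta$. For $m \in \{1,\ldots,\lfloor n/2\rfloor - 1\}$ (say), let $\pi^{(m)}$ be the identity permutation composed with the transposition of the items at positions $2m$ and $2m+1$, and set $M^{(m)} = R_{\pi^{(m)}}$. Add a reference hypothesis $M^{(0)} = R$.

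Distinct $m$ give orderings that are neither equal nor reverses of one another, since $n \ge 9$ rules out reversal coincidences. An estimator that recovers the ordering up to reversal therefore identifies $m$. The matrices $M^{(m)}$ and $M^{(0)}$ differ only in the rows and columns of the two swapped items $a=2m$ and $b=2m+1$, except for the entry $(a,b)$ itself. Each differing entry has $|M^{(m)}_{aj}-M^{(0)}_{aj}| = \Delta$, because swapping adjacent positions changes $|\pi_a-\pi_j|$ by exactly one.

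\textbf{Divergence bound for adaptive algorithms.} Let $N_j$ be the random number of queries involving item $j$ under $\mathbb P_0$. By the chain rule for KL divergence over adaptive sampling (the bandit divergence decomposition),
\[
\mathrm{KL}(\mathbb P_0, \mathbb P_m) = \sum_{\text{pairs } e} \mathbb E_0[N_e]\,\frac{(M^{(0)}_e - M^{(m)}_e)^2}{2\sigma^2} \le \frac{\Delta^2}{2\sigma^2}\bigl(\mathbb E_0[N_{2m}] + \mathbb E_0[N_{2m+1}]\bigr).
\]
Each query touches at most two items, so $\sum_m (\mathbb E_0 N_{2m}+\mathbb E_0 N_{2m+1}) \le 2T$. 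Averaging over the $K \asymp n/2$ hypotheses gives
\[
\frac1K\sum_m \mathrm{KL}(\mathbb P_0,\mathbb P_m) \le \frac{2\Delta^2 T}{\sigma^2 K} \lesssim \frac{\Delta^2 T}{\sigma^2 n} \le c_1' \ln n .
\]

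\textbf{Conclusion via Fano.} We apply the version of Fano's inequality with a reference measure: for any test $\psi$ among $K+1$ hypotheses,
\[
\max_m \mathbb P_m(\psi\ne m) \ge 1 - \frac{\frac1K\sum_m \mathrm{KL}(\mathbb P_m\|\mathbb P_0) + \ln 2}{\ln K}.
\]
Here we must use $\mathrm{KL}(\mathbb P_m,\mathbb P_0)$ with expectations taken under $\mathbb P_0$. Since the Gaussian KL divergence is symmetric per observation, we handle this either by writing the decomposition in the appropriate direction or by using the Tsybakov form
\[
\frac1K\sum_m \mathrm{KL}(\mathbb P_m,\mathbb P_0) \le \alpha \ln K,
\]
where the counts are taken under $\mathbb P_m$. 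In the latter case the sum over $m$ of $\mathbb E_m N$ is no longer directly bounded by $2T$.

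This direction issue is the main obstacle. The clean route is to bound $\mathrm{KL}(\mathbb P_0,\mathbb P_m)$, which involves only $\mathbb E_0$ counts, and then use the "reverse" Fano / Birgé lemma. That lemma only requires
\[
\frac1K\sum_m \mathrm{KL}(\mathbb P_0,\mathbb P_m) \le \alpha \ln K
\]
and yields $\max_m \mathbb P_m(\psi\ne m) \ge 1/2$ for small $\alpha$. Alternatively, we can use the data-processing bound
\[
\mathrm{kl}\bigl(\mathbb P_0(\psi=m),\,\mathbb P_m(\psi=m)\bigr) \le \mathrm{KL}(\mathbb P_0,\mathbb P_m),
\]
sum over $m$ using $\sum_m \mathbb P_0(\psi=m)\le 1$, and solve the resulting inequality directly.

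Choosing $c_1$ small enough that $\alpha\ln K + \ln 2 \le \tfrac12 \ln K$ for $n\ge 9$ gives error at least $1/2$ for some $M^{(m)}\in\mathcal M_\Delta$. The constant $c_1$ must be tuned at this step, possibly also absorbing $\ln 2$ into the hypothesis count for small $n$.
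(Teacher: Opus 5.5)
\textbf{The gap: the final step, where the KL direction has to be resolved.} Your construction and the bound $\mathrm{KL}(\mathbb{P}_0,\mathbb{P}_m)\le\frac{\Delta^2}{2\sigma^2}(\mathbb{E}_0[N_{2m}]+\mathbb{E}_0[N_{2m+1}])$ are correct, and you identify the right obstacle. But the argument breaks exactly there, and neither proposed fix closes it.

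Birgé's lemma, like Fano's, requires $\frac{1}{K}\sum_m\mathrm{KL}(\mathbb{P}_m,\mathbb{P}_0)$, that is, counts taken under the alternatives. There is no valid version with $\mathrm{KL}(\mathbb{P}_0,\mathbb{P}_m)$ and a $\ln K$ threshold; the example in the next paragraph is a counterexample.

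Your data-processing route shows why. Write $q_m=\mathbb{P}_0(\psi=m)$ and $p_m=\mathbb{P}_m(\psi=m)$. When $q_m$ is small, $\mathrm{kl}(q_m,p_m)$ is only about $\ln\frac{1}{1-p_m}$. So if all errors are below $1/2$, convexity and $\sum_m q_m\le 1$ give merely $\mathrm{kl}(1/K,1/2)\le\frac{1}{K}\sum_m\mathrm{KL}(\mathbb{P}_0,\mathbb{P}_m)$, and $\mathrm{kl}(1/K,1/2)<\ln 2$. This yields impossibility only when $\Delta^2T/(\sigma^2 n)$ is below an absolute constant, which loses the $\ln n$ that is the whole content of the theorem. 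The symmetry of the per-observation Gaussian divergence does not help, because the weights are expected counts under different measures.

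\textbf{The deeper problem: your fixed family cannot give the $\ln n$ threshold at all.} For your $K+1$ matrices the threshold is simply false, whatever inequality you use. An algorithm tailored to the family can probe location $m$ through the pairs $\{2m,2m+2\}$ and $\{2m-1,2m+1\}$. Their averaged mean is shifted by $+\Delta$ under $M^{(m)}$, by $-\Delta/2$ under $M^{(m\pm1)}$, and by $0$ otherwise. The algorithm then runs a successive-elimination (distilled-sensing) search. It discards locations whose statistic is nonpositive, which roughly halves the active set each round, while the per-location sample size grows geometrically. Finally it tests the $O(1)$ survivors with half of the budget. This identifies the true hypothesis with error below $1/2$ as soon as $\Delta^2T/(\sigma^2n)$ exceeds a large absolute constant. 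Meanwhile its reverse divergences average $O(1)$, which is why it also refutes the reverse lemma.

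\textbf{What the paper does instead.} The $\ln n$ appears only when the algorithm cannot know where to spend its budget, and the paper enforces this by symmetrization. It applies Lemma~\ref{lem:fonda} in the Fano direction, bounding $\mathrm{kl}(x',y')$ by the average of $\mathrm{KL}(\mathbb{P}_{\nu^{\pi^k}},\mathbb{P}_{\nu^{\pi}})$ over all base permutations $\pi\in\mathcal{S}_n$ and all $k\in[n-1]$. Here $\pi^k=(k,k+1)\circ\pi$, $x'$ is the average success probability under the alternatives, and $y'\le 1/(n-1)$.

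These divergences involve counts under the alternatives. However, $\pi\mapsto\pi^k$ is a bijection of $\mathcal{S}_n$, so summing over $\pi$ turns them into counts under $\nu^{\pi}$. Each pair is then charged for at most four values of $k$, so the average divergence is at most $2T\Delta^2/((n-1)\sigma^2)$. The refined Pinsker inequality then gives $x'\le 1/2$.

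Your disjoint-swap family could be rescued the same way: averaging over relabelings restores the bound $2T$ for counts under the alternatives. That symmetrization is precisely the missing idea.
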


As expected, the impossibility regime is more pronounced when the minimal gap $\Delta$ is small or when the noise parameter $\sigma$ is large.
To escape this regime, the number of observations per item, $T/n$, must grow at least quadratically with $\sigma/\Delta$ (up to logarithmic factors).
The proof is in Appendix~\ref{app:LB}.

%%%%%%%%%%

\paragraph{4.2.2  \ \ Recovery regime.}

In the complementary regime where the SNR satisfies
$\frac{\Delta^2 T}{\sigma^2 n} \gtrsim \ln n$,
exact recovery becomes achievable.
The \asii algorithm attains an exponentially small error probability,
and this rate is minimax optimal over the class $\mathcal M_\Delta$, up to absolute constants in the exponent.

Specifically, the upper bound in this regime follows directly from
Theorem~\ref{thm:UB:insert-n2-into-n1} by taking $\tn=n$.
If
\begin{equation}
\label{SNR-condition-upper-bound-from-scratch}
    \frac{\Delta^2 T}{\sigma^2 n} \ge c_0 \ln n \ ,
\end{equation}
where $c_0$ is the same absolute constant as in Theorem~\ref{thm:UB:insert-n2-into-n1}, then for any $M\in\mathcal M_\Delta$, the probability of error of \asii satisfies,
\begin{equation*}
p_{M,T} \le \exp\!\left(-\,\tfrac{1}{2400}\,\frac{\Delta^2 T}{\sigma^2 n}\right) \ .
\end{equation*}
Conversely, the next theorem gives a matching lower bound, showing that no algorithm can achieve a faster error decay than exponential in the SNR.
Its proof is in Appendix~\ref{app:LB}.

\begin{thm}[Recovery regime]
\label{thm:UB}
Let $n \ge 4$ and $(\Delta,\sigma,T)$ be such that condition~\eqref{SNR-condition-upper-bound-from-scratch} holds.
Then, for any algorithm $A$, there exists $M \in \cM_{\Delta}$ such that the error probability of $A$ satisfies
\begin{equation*}
p_{M,T} \ge \exp\!\left(-\,8\,\frac{\Delta^2 T}{\sigma^2 n}\right) \ .
\end{equation*}
\end{thm}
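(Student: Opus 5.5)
The plan is a two-hypothesis reduction that exploits the fact that an active algorithm must spread its budget over $n$ items. We use Gaussian noise $\mathcal N(0,\sigma^2)$, which is $\sigma$-sub-Gaussian, and a base matrix $M^{(0)}\in\cM_\Delta$ that is a Robinson matrix with gaps exactly $\Delta$. Concretely, $R_{ij} = -\Delta|i-j|$ suffices. For each item $a$ we build a perturbed matrix $M^{(a)}\in\cM_\Delta$ whose orderings differ from those of $M^{(0)}$ (neither equal to $\pi$ nor $\pi^{\mathrm{rev}}$), while $M^{(a)}$ differs from $M^{(0)}$ only in entries of row/column $a$, and only by $O(\Delta)$. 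The natural choice is to swap $a$ with a neighbour $a+1$ in the latent order and then redefine the similarities so that both matrices remain Robinson with minimal gap $\ge\Delta$.

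To keep the perturbation confined to row/column $a$, it is simpler to take a structured $R$: for example, points on a line at positions $x_i$ with $R_{ij}=-\Delta'|x_i-x_j|$. Moving item $a$ across one neighbour then changes only its own similarities, each by at most $c\Delta$ with $c\le 2$. The constant $8$ in the target suggests that the per-observation KL divergence is $\le (2\Delta)^2/(2\sigma^2)=2\Delta^2/\sigma^2$ on affected pairs, possibly with a factor for pairs counted twice.

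Next comes the counting step. Let $N_a$ be the number of queries involving item $a$ under $\mathbb P_0$ (the law for $M^{(0)}$). Each query involves two items, so $\sum_a \mathbb E_0 N_a = 2T$, and there exists $a^\star$ with $\mathbb E_0 N_{a^\star}\le 2T/n$. By the divergence decomposition for adaptive sampling, $\mathrm{KL}(\mathbb P_0,\mathbb P_{a^\star}) \le \mathbb E_0 N_{a^\star}\cdot \max_j (M^{(0)}_{a^\star j}-M^{(a^\star)}_{a^\star j})^2/(2\sigma^2)\le C\Delta^2T/(\sigma^2 n)$, with $C\le 4$ after choosing the perturbation sizes carefully.

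We then apply the Bretagnolle--Huber inequality. Since the correct outputs under the two hypotheses are disjoint sets, $p_{M^{(0)},T}+p_{M^{(a^\star)},T}\ge \tfrac12\exp(-\mathrm{KL})$, so one of the two error probabilities is at least $\tfrac14\exp(-C\,\Delta^2T/(\sigma^2n))$. To absorb the factor $1/4$ into the stated constant $8$, we use condition~\eqref{SNR-condition-upper-bound-from-scratch}. That condition gives $\Delta^2T/(\sigma^2 n)\ge c_0\ln n\ge c_0\ln 4$, so with $C\le 4$ we get $\tfrac14 e^{-4s}\ge e^{-8s}$ whenever $e^{4s}\ge4$, i.e.\ $s\ge \ln 4/4$. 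This holds if $c_0\ge 1/4$, which we may assume (enlarging $c_0$ if necessary, as in Theorem~\ref{thm:UB:insert-n2-into-n1}).

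The main obstacle is the construction of $M^{(a)}$. It must satisfy all of the following at once:
\begin{enumerate}[label=(\roman*)]
\item it stays in $\cM_\Delta$, including the boundary rows, which is why $n\ge4$ is assumed;
\item it changes only row/column $a$;
\item its change in each entry is at most about $2\Delta$;
\item its latent ordering is genuinely different from both $\pi$ and $\pi^{\mathrm{rev}}$.
\end{enumerate}
If the line embedding makes these requirements conflict for extreme items, I would restrict the choice of $a^\star$ to interior items. The averaging still gives $\mathbb E_0N_{a^\star}\le 2T/(n-2)\le 4T/n$ for $n\ge4$, and I would adjust the perturbation size to keep the total constant within $8$.
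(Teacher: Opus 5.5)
\textbf{Gap: the construction of $M^{(a)}$, which you flag as the main obstacle, is never carried out, and the concrete base you choose cannot support it.}

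The rest of your reduction is sound: base law $\mathbb P_0$, perturb the item least queried under $\mathbb P_0$, divergence decomposition, Bretagnolle--Huber, and absorbing the $\tfrac14$ via the SNR condition.

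Your base $R_{ij}=-\Delta|i-j|$ is exactly the uniform line embedding $x_i=i$. Suppose $M^{(a)}$ agrees with $M^{(0)}$ outside row/column $a$. The unchanged submatrix fixes the order of the other items up to reversal, so moving $a$ across $a+1$ forces the order $\ldots,a-1,a+1,a,a+2,\ldots$. Take any item $j\notin\{a,a+1,a+2\}$ (one exists since $n\ge4$). In row $j$, the entries in columns $a+1,a,a+2$ are then consecutive on one side of the diagonal. They must therefore be monotone with steps at least $\Delta$, which forces $|M_{j,a+1}-M_{j,a+2}|\ge 2\Delta$. But this difference is untouched by the perturbation and equals $\Delta$.

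Hence (i) and (ii) are incompatible for every $a$ with your base. A line embedding helps only once the spacings change. Uniform spacing $\ge 2\Delta$ forces a shift $\ge 3\Delta$ and a $\mathrm{KL}$ of about $9s$, too costly for the constant $8$. Keeping the shift at $2\Delta$ therefore needs non-uniform spacings.

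The perturbation also cannot be shrunk. Compare $M_{aj}$ with the unchanged $M_{a+1,j}$ before and after the move: for any base in $\mathcal M_\Delta$, such a move changes every $M_{aj}$ with $j\notin\{a,a+1\}$ by at least $2\Delta$. Your interior-items fallback then gives, at $n=4$, only $\mathrm{KL}\le 8\,\Delta^2T/(\sigma^2 n)$, which leaves no room for the factor $\tfrac14$.

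\textbf{Two repairs.} Either one gives a proof genuinely different from, and shorter than, the paper's. Throughout, $s:=\Delta^2T/(\sigma^2 n)$.

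\emph{Keep (ii) with non-uniform spacings.} Let consecutive gaps alternate $\Delta,2\Delta,\Delta,2\Delta,\ldots$. Move each item across its neighbour at distance $\Delta$ to the point at distance $\Delta$ beyond that neighbour; the neighbour's far side is then a $2\Delta$ gap or the boundary. Every entry of row $a$ changes by at most $2\Delta$, and at least $n-1$ items are eligible. This gives $\mathrm{KL}\le\frac{4n}{n-1}\,s\le\frac{16}{3}s$, which still absorbs the $\tfrac14$.

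\emph{Drop (ii).} Use the paper's instance $R_{ij}=\Delta(n-|i-j|)$ and swap the adjacent items $k,k+1$. This changes only rows/columns $k,k+1$, each entry by at most $\Delta$. Since $\sum_{k=1}^{n-1}\mathbb E_0[N_k+N_{k+1}]\le 4T$, some $k$ has $\mathrm{KL}(\mathbb P_0,\mathbb P_k)\le\frac{4T}{n-1}\cdot\frac{\Delta^2}{2\sigma^2}\le\frac{8}{3}s$. Bretagnolle--Huber then gives $\max(p_0,p_k)\ge\tfrac14e^{-8s/3}\ge e^{-8s}$ once $s\ge\frac{3}{16}\ln 4$.

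\textbf{Comparison with the paper.} The paper uses these same transpositions but averages over all $n!$ base permutations and all $k$. It bounds the $\mathrm{kl}$ between averaged success probabilities by $2T\Delta^2/((n-1)\sigma^2)$, then concludes via a case split on $x'$ and $\mathrm{kl}(p,q)\ge p\ln(1/q)-\ln 2$. That averaging is inherited from its impossibility proof, where many hypotheses are needed to produce the $\ln n$ threshold. For the exponential rate, a two-point argument with the perturbed pair chosen according to $\mathbb P_0$ already handles arbitrary, non-invariant strategies.
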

Together, Theorems~\ref{thm:LB2} and~\ref{thm:UB} delineate the statistical landscape of active seriation from scratch, establishing a sharp phase transition between impossibility and recovery at the critical SNR level $\frac{\Delta^2 T}{\sigma^2 n} \asymp \ln n$.

%%%%%%%%%%%%%%%%%

\paragraph{4.2.3  \ \ Discussion: intrinsic hardness and invariance to model assumptions.}

Both lower bounds (Theorems~\ref{thm:LB2} and~\ref{thm:UB}) are established under a Gaussian noise model with centered, homoscedastic entries of variance~$\sigma^2$, 
whereas our upper bound is proved in the more general sub-Gaussian setting allowing heterogeneous noise levels. 
Since these bounds match, potential heterogeneity in the noise variances does not affect the minimax rates (at least in terms of exponential decay in SNR).

Moreover, the lower bounds are derived for the simple, affine, Toeplitz matrix
\begin{equation}
\label{simple-affine-toeplitz}
    R_{ij} = (n - |i-j|) \Delta  \ ,  
\end{equation} 
yet the attainable rates coincide with those obtained under the general assumption $M \in \cM_\Delta$. Hence, allowing heterogeneous, non-Toeplitz matrices comes at no statistical cost.
This may appear surprising, since the Toeplitz assumption is classical in batch seriation (e.g.,~\cite{cai2022matrix}).

Even when the latent matrix is fully known, as in the one-parameter family~\eqref{simple-affine-toeplitz} with known parameter~$\Delta$, the attainable rates remain unchanged.
This indicates that the hardness of active seriation arises from the combinatorial nature of the latent ordering, rather than from uncertainty about the latent matrix.

%%%%%%%%%%%%%%%%%%%%%%%%%

\subsection{Sample complexity for high probability recovery}

We summarize our results in terms of sample complexity, defined as the number of observations required to achieve exact recovery with probability at least $1-1/n^2$.
Combining the impossibility result of Theorem~\ref{thm:LB2} with the recovery guarantee of Theorem~\ref{thm:UB:insert-n2-into-n1}, we obtain a sharp characterization of the sample complexity in the from-scratch case  ($\tilde n = n$).

\begin{cor}[Sample complexity]
\label{cor:sample-complexity}
Let $n \ge 9$ and $\tilde n \in [n]$, and assume that the input permutation $\tpi$ of $[n-\tilde n]$ satisfies~\eqref{new-relative-order-incremental}.
Let $(\Delta,\sigma)$ be such that condition~\eqref{challenging-regime} holds.
Then \asii achieves exact recovery with probability at least $1 - 1/n^2$, if
\[
T \gtrsim \frac{\sigma^2 \tilde n \ln n}{\Delta^2} \ .
\]
In particular, in the from-scratch case $\tilde n = n$, the minimax-optimal number of observations required for exact recovery with probability at least $1 - 1/n^2$, satisfies
$T^{\star} \asymp \frac{\sigma^2 n \ln n}{\Delta^2}$.
\end{cor}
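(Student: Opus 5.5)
The corollary follows by combining the upper bound of Theorem~\ref{thm:UB:insert-n2-into-n1} with the impossibility result of Theorem~\ref{thm:LB2}. The work is mostly bookkeeping on constants.

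\textbf{Upper bound.} We take $M \in \cM_\Delta$ and assume $T \ge C \sigma^2 \tn \ln n / \Delta^2$ for a constant $C$ to be chosen. Setting $\mathrm{SNR} := \Delta^2 T/(\sigma^2 \tn)$, this assumption reads $\mathrm{SNR} \ge C \ln n$. We choose $C \ge \max(c_0, 4800)$. Then the hypothesis $\mathrm{SNR} \ge c_0 \ln n$ of Theorem~\ref{thm:UB:insert-n2-into-n1} holds, since $n \ge 9 \ge 3$, condition~\eqref{challenging-regime} is assumed, and $\tpi$ satisfies~\eqref{new-relative-order-incremental}. Theorem~\ref{thm:UB:insert-n2-into-n1} then gives
\[
p_{M,T} \le \exp(-\mathrm{SNR}/2400) \le \exp(-2\ln n) = n^{-2}.
\]
This is exactly recovery with probability at least $1-1/n^2$, uniformly over $\cM_\Delta$. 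Specializing to $\tn = n$ shows $T^\star \lesssim \sigma^2 n \ln n/\Delta^2$.

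\textbf{Lower bound (from scratch).} We argue by contraposition. Suppose some algorithm achieves error at most $1/n^2$ for every $M\in\cM_\Delta$ with budget $T$. Because $n \ge 9$, we have $1/n^2 < 1/2$. If $T \le c_1 \sigma^2 n \ln n/\Delta^2$, Theorem~\ref{thm:LB2} provides some $M \in \cM_\Delta$ with $p_{M,T} \ge 1/2 > 1/n^2$, which is a contradiction. Hence any $T$ that guarantees recovery with probability $1-1/n^2$ over the class must satisfy $T > c_1 \sigma^2 n\ln n/\Delta^2$. So $T^\star \gtrsim \sigma^2 n \ln n/\Delta^2$, and combined with the upper bound this gives $T^\star \asymp \sigma^2 n\ln n/\Delta^2$.

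\textbf{Main point of care.} The only delicate step is interpreting $T^\star$ as a minimax quantity. It should be the smallest $T$ for which some algorithm attains $\sup_{M\in\cM_\Delta} p_{M,T} \le 1/n^2$. The upper bound exhibits ASII as such an algorithm at $T \asymp \sigma^2 n\ln n/\Delta^2$, and the lower bound rules out every algorithm below $c_1\sigma^2 n\ln n/\Delta^2$. One further check is integrality: if $C\sigma^2\tn\ln n/\Delta^2$ is not an integer, we take $T$ to be its ceiling. Rounding up only increases the SNR, so the upper-bound argument is unaffected. Nothing else requires new arguments.
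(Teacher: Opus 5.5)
Your proposal is correct and matches the paper's argument, which obtains the corollary by combining Theorem~\ref{thm:UB:insert-n2-into-n1} (with the constant in $T \gtrsim \sigma^2\tilde n\ln n/\Delta^2$ large enough that $\exp(-\mathrm{SNR}/2400)\le n^{-2}$) with Theorem~\ref{thm:LB2} (using $1/n^2<1/2$ for $n\ge 9$). Your constant bookkeeping, the contrapositive lower-bound step, and your reading of $T^\star$ as a minimax quantity are all sound.
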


Thus, in active seriation from scratch, \asii\ attains the minimax-optimal sample complexity $T^{\star}$, which depends transparently on the problem parameters $(\Delta, \sigma, n)$.
Crucially, \asii\ can achieve high probability recovery with a number of queries $T \ll n^2$, highlighting a substantial advantage over the classical batch setting where all $n^2$ pairwise similarities are observed.

%%%%%%%%%%%%%%%%

\subsection{Extension beyond uniform separation}
\label{sec:extension}

Our analysis so far has focused  on exact recovery under the uniform separation assumption $M \in \cM_\Delta$.
While this assumption is natural for characterizing the fundamental limits of exact recovery, it is also idealized: in practice, some items may be nearly indistinguishable in terms of their pairwise similarities, making their relative ordering statistically impossible to recover.

We therefore consider arbitrary pre-R matrices $M$, as defined in model~\eqref{model}, without any separation assumption.
Even in this setting, a direct extension of the \asii\ procedure continues to provide meaningful and statistically optimal guarantees: it correctly recovers the relative ordering of items that are sufficiently well separated.
%This parameter should not be confused with the model parameter $\Delta$ appearing in the uniform separation assumption $M \in \mathcal M_\Delta$ used in Section~\ref{section-performance-analysis}.

%%%%%%%%%%%%%%%

\paragraph{Algorithmic extension of ASII.}
The procedure follows the same iterative insertion scheme as our \asii algorithm, but takes as input a tolerance parameter $\tilde \Delta > 0$, which sets the resolution at which the algorithm attempts to distinguish items.
Now, when \asii\ identifies a candidate insertion location for a new item in the current ordering, this location is checked through an additional high-probability validation test at precision $\tilde\Delta/2$.
The item is inserted if the test succeeds; otherwise it is discarded and does not appear in the final output. 
%Here $\tilde\Delta$ denotes a user-chosen tolerance parameter, which sets the resolution at which the algorithm attempts to distinguish items.

As a consequence, this extension no longer outputs a permutation $\hat{\pi}$ of all $n$ items.
Instead, it returns a subset $S \subset [n]$ together with a rank map $\hat \pi : S \to \{1,\ldots,|S|\}$, which serves as an estimator of the relative ordering of the items in $S$.
The corresponding pseudo-code is deferred to Appendix~\ref{appendix:extension-beyond-unif-speration}.

%%%%%%%%%%%%%%%%%%%%%%%%%%%%%%%%%%%%

\paragraph{Robustness beyond uniform separation.}
We now state robustness guarantees for the extension described above.
To this end, we introduce a notion of $\Delta$-maximality, inspired by the class $\mathcal M_\Delta$ defined in~\eqref{model-set}.
Specifically, for any subset $S \subset [n]$, we write $M_S$ for the submatrix of the similarity matrix $M$ restricted to the items in $S$.
By a slight abuse of notation, we write $M_S \in \mathcal M_{\Delta}$ to mean that $M_S$ is pre-R and satisfies $\Delta_{M_S} \ge \Delta$, where  $\Delta_{M_S}$ is the minimal gap defined in~\eqref{minimal-gap-def}.

\begin{defi}[$\Delta$-maximal subset]
\label{def:Delta-maximal}
For any $\Delta>0$, a subset $S \subset [n]$ is said to be $\Delta$-maximal  if
$M_S \in \mathcal M_{\Delta}$
and
$M_{S \cup \{k\}} \notin \mathcal M_{\Delta}$ for all $k \in [n] \setminus S$.
\end{defi}
Intuitively, a $\Delta$-maximal subset cannot be enlarged without adding items that are too similar (within~$\Delta$) to those already in $S$.

We now show that the guarantee of Theorem~\ref{thm:UB:insert-n2-into-n1} can be extended beyond the uniformly separated setting, to any pre-R matrix $M$, by operating at a user-chosen resolution level $\tilde\Delta$. 
The conditions on the parameters are the same as in Theorem~\ref{thm:UB:insert-n2-into-n1}, with $\Delta$ replaced by the input parameter $\tilde\Delta$; that is:
\begin{equation}
\label{condtion-parameter-extension}
    \frac{\tilde \Delta}{\sigma} \leq 1 , \qquad \quad  \frac{\tilde \Delta^2 T}{\sigma^2 \tn} \ge c \ln n \ .
\end{equation}

\begin{thm}[Guarantees beyond uniform separation]
\label{thm:UB-extension}
There exist absolute constants $c,c'>0$ such that the following holds. Let $n \ge 3$, $\tn \in [n]$ and the input permutation $\tpi$ of $[n-\tn]$ satisfy~\eqref{new-relative-order-incremental}.
Let  $(\tilde\Delta,\sigma,T)$ be such that~\eqref{condtion-parameter-extension} holds.
Then, for any pre-R matrix $M \in \R^{n \times n}$, the algorithmic extension of \asii\ outputs, with probability at least
$1 - \exp\!\left(
-\, c' \tfrac{\tilde\Delta^2 T}{\sigma^2 \tilde n}
\right)$,
 a $\tilde\Delta$-maximal subset $S \subset [n]$  in the sense of Definition~\ref{def:Delta-maximal}, and
 a correct ordering $\hat{\pi}_S$ of the items in $S$.
\end{thm}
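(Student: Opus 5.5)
We prove Theorem~\ref{thm:UB-extension} by reducing it, item by item, to the analysis behind Theorem~\ref{thm:UB:insert-n2-into-n1}, with a validation test added at each insertion. The extension inserts items $k=n-\tn+1,\dots,n$ in turn. For every $k$ it keeps a current set $S_{k-1}\supseteq[n-\tn]$ together with an ordering of it. We define a good event $\mathcal{E}$ on which, for every $k$, two things hold: (a) every empirical similarity used by a validation test lies within $\tilde\Delta/8$ of its mean; (b) the \bbs\ search on $S_{k-1}$ has more correct than incorrect local tests at the relevant depth. The validation tests use budget of order $T/\tn$ per item. So by sub-Gaussian concentration and a union bound over the $O(n)$ validations, (a) fails with probability at most $n\exp(-c\tilde\Delta^2T/(\sigma^2\tn))$. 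Under~\eqref{condtion-parameter-extension} with $c$ large, this is at most $\exp(-c'\tilde\Delta^2T/(\sigma^2\tn))$. For (b) we reuse the backtracking analysis of Theorem~\ref{thm:UB:insert-n2-into-n1}: the counting argument there (a Chernoff bound on the number of wrong \textsc{Test} outcomes along the search path) only needs each local test to be correct with constant probability. That holds as long as the gaps involved in the compared triples are at least $\tilde\Delta$. This is where we need the current set to satisfy $M_{S_{k-1}}\in\mathcal M_{\tilde\Delta}$.

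On $\mathcal E$ we prove by induction on $k$ the invariant that $M_{S_k}\in\mathcal M_{\tilde\Delta}$ and the maintained ordering of $S_k$ is correct. The base case is $S_{n-\tn}=[n-\tn]$, ordered correctly by~\eqref{new-relative-order-incremental}. Here there is a subtlety: $M_{[n-\tn]}$ need not be $\tilde\Delta$-separated. We handle it either by running the same validation filter on the initial items, or by taking $\tn=n$ effectively in the worst case. We will state the result for the filtered initial set. For the inductive step there are two cases.

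\emph{Case 1: $M_{S_{k-1}\cup\{k\}}\in\mathcal M_{\tilde\Delta}$.} All triples compared by \bbs\ have gaps at least $\tilde\Delta$, so the argument of Theorem~\ref{thm:UB:insert-n2-into-n1} returns the true position. The validation at precision $\tilde\Delta/2$ then checks, for the neighbours $l,r$ of $k$ in the candidate position, the local Robinson inequalities that define the minimal gap~\eqref{minimal-gap-def} around $k$. By (a), true gaps of at least $\tilde\Delta$ show up empirically as at least $3\tilde\Delta/4>\tilde\Delta/2$, so $k$ is accepted.

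\emph{Case 2: otherwise.} Whatever position is proposed, some required inequality has true gap below $\tilde\Delta$ or has the wrong sign. We must show that the validation detects one of them, so that empirically it falls below $\tilde\Delta/2$ and $k$ is rejected. The main obstacle is to design the validation so that it certifies the full condition $M_{S_{k-1}\cup\{k\}}\in\mathcal M_{\tilde\Delta}$ with correct placement, not only local consistency. Our first attempt is to check all inequalities $R_{i,j}-R_{i-1,j}\ge\tilde\Delta$ involving the new row and column of $k$ in the candidate ordering. This costs $O(|S|)$ entries, which is too many for the budget. So we then use the Robinson structure of $M_{S_{k-1}}$ to argue that only the comparisons with the adjacent items, and with the row entries adjacent to $k$, can newly fail. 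This reduces the check to $O(1)$ well-sampled entries per item, at the cost of a threshold $\tilde\Delta/2$ leaving a constant-factor slack. If the reduction holds only up to constants, the conclusion should be weakened to ``$M_S\in\mathcal M_{\tilde\Delta/2}$ and maximal at level $\tilde\Delta$'', and we would adjust the constants accordingly.

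Finally, maximality. Let $S$ be the output set and consider a discarded $k$. At its insertion time, Case 2 gave $M_{S_{k-1}\cup\{k\}}\notin\mathcal M_{\tilde\Delta}$. Since $S_{k-1}\subseteq S$ and separation is inherited by subsets (a minimal gap of a submatrix is at least that of the matrix, by transitivity of the Robinson inequalities), we get $M_{S\cup\{k\}}\notin\mathcal M_{\tilde\Delta}$. Together with the invariant, this shows that $S$ is $\tilde\Delta$-maximal and correctly ordered on $\mathcal E$. The probability bound for $\mathcal E$ then gives the claim.
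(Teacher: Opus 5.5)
\textbf{There is a genuine gap, in Case~2, and it takes the invariant $M_{S_k}\in\mathcal M_{\tilde\Delta}$ with it.}

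\emph{The reduction fails.} You claim that, by the Robinson structure of $M_{S_{k-1}}$, only comparisons involving the neighbours of $k$ can newly fail. This is false. Inserting $k$ creates a whole new row $(M_{kj})_{j\in S_{k-1}}$, and nothing in $M_{S_{k-1}}$ constrains the new gaps $M_{kj}-M_{kj'}$ for consecutive $j,j'$ far from $k$, or $M_{kj}-M_{lj}$. One such gap can be any positive number, so $M$ stays pre-R, while every triple near $k$ is perfectly separated. No $O(1)$-entry validation can detect this. Retreating to $\mathcal M_{\tilde\Delta/2}$ does not help, because the loss is not a constant factor.

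\emph{No procedure can certify separation at this budget.} Take $R_{ij}=2\tilde\Delta(n-|i-j|)$; its unique $\tilde\Delta$-maximal set is $[n]$. Lower a single interior entry $M_{ab}=M_{ba}$ by $\tfrac32\tilde\Delta$. The matrix stays pre-R, but $M_{[n]}$ leaves $\mathcal M_{\tilde\Delta}$. Some interior pair has $\mathbb E[N_{\{a,b\}}]\lesssim T/n^2$. For $T\asymp\sigma^2 n\ln n/\tilde\Delta^2$ the two instances then have KL divergence $O(\ln n/n)$, so they cannot be told apart. Hence ``reject whenever $M_{S_{k-1}\cup\{k\}}\notin\mathcal M_{\tilde\Delta}$'' is unattainable by any algorithm. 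The same objection defeats your idea of filtering the initial set $[n-\tilde n]$.

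\emph{The paper's route.} The paper never tries to certify separation. Its key fact, Proposition~\ref{prop:local-test-correctness-new}(i), is that on the $\tilde\Delta/4$-concentration event any non-null output of \textsc{Test}$_{\tilde\Delta}$ (margin $\tilde\Delta/2$) equals the population decision $b^*$. So the second call, on the pair $(\tilde l,\tilde r)$ adjacent to the \textsc{BBS} candidate, returns $0$ only at the true position, even when \textsc{BBS} has failed. This is (P2), ``discard or insert correctly''. Property (P1), which is your Case~1, guarantees that separated items are accepted. The maintained invariant is therefore only that $\pi^{(k)}$ correctly orders $S^{(k)}$. Non-extendability of the output follows from (P1) together with exactly your heredity argument; the paper leaves this step implicit.

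\emph{What is actually provable.} This route does not deliver $M_S\in\mathcal M_{\tilde\Delta}$ either. Property (P2) explicitly allows non-separated insertions, and the algorithm always outputs $S\supseteq[n-\tilde n]$. So both obstacles you flagged, the unseparated initial set and Case~2, are genuine problems with the statement itself, not just with your write-up. However, the repairs you propose cannot close them. What this line of argument proves is that $S$ is correctly ordered and that $M_{S\cup\{k\}}\notin\mathcal M_{\tilde\Delta}$ for every discarded $k$.
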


The proof is given in Appendix~\ref{appendix:extension-beyond-unif-speration}.

%%%%%%%%%%%%
%%summary of empirical expe
%%%%%%%%

\section{Empirical results}
\label{sec:empirical}

We illustrate the behavior of \asii through numerical experiments and a real-data example.

\paragraph{Numerical simulations.}
We assess the empirical behavior of \textsc{ASII} on synthetic data and compare it to three benchmark methods:
(i) the batch seriation algorithm \textsc{Adaptive Sampling}~\cite{cai2022matrix},
(ii) the batch \textsc{Spectral Seriation}~\cite{atkins1998spectral},
and (iii)  an active, naive insertion variant of \textsc{ASII} without backtracking.
Since, to the best of our knowledge, the literature does not contain seriation methods designed for the active setting, these three procedures serve as reference points.
All methods are evaluated under identical sampling budgets on four representative scenarios, covering both homogeneous (Toeplitz) and non-homogeneous (non-Toeplitz), Robinson matrices.
Full experimental details are deferred to Appendix~\ref{sec:exp}.

Figure~\ref{fig:scenario-main} provides a visual illustration of the four scenarios considered.
Scenario (1) corresponds to a Toeplitz Robinson structure, while the remaining scenarios (2-3-4) depart from the Toeplitz assumption and exhibit more heterogeneous Robinson geometries.

\begin{figure}[htbp]
    \centering
    \includegraphics[width=0.57\linewidth]{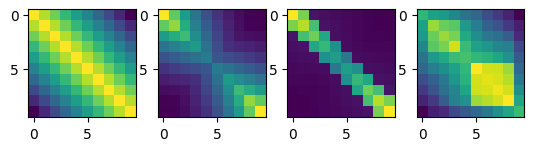}
    \caption{{{\small Robinson matrices for scenarios (1)-(4), from left to right.}}}
    \label{fig:scenario-main}
\end{figure}

Figure~\ref{fig:simer-main} reports the empirical probability of error of all methods as a function of the minimal gap~$\Delta$.
Across all four scenarios, \textsc{ASII} consistently outperforms the naive iterative insertion procedure, highlighting the benefits of its backtracking corrections; 
these gains are consistent with the logarithmic improvement predicted by our theoretical analysis.

As expected, in scenario~(1), where the underlying matrix is Toeplitz, \textsc{ASII} performs below the two batch methods, which are known to perform well in this setting.
This behavior can be attributed to the fact that \textsc{ASII} is designed for general Robinson matrices and does not exploit Toeplitz regularity (unlike its competitors);
moreover, its sampling budget is distributed across multiple binary-search iterations, which can be less efficient than batch sampling.
This does not contradict our theoretical guarantees, which establish rates up to absolute constants that are not characterized by the analysis and may be large for active, iterative
procedures such as \textsc{ASII}.

In contrast, in the heterogeneous (non-Toeplitz) scenarios~(2-3-4), \textsc{ASII} remains consistently accurate, whereas both batch methods exhibit unstable behavior and may fail entirely in some cases.

Overall, these experiments illustrate that \textsc{ASII} maintains stable empirical performance across various scenarios, and can be particularly effective on matrices with localized variations (scenarios~2-3-4), where batch methods tend to struggle.

\begin{figure}[htbp]
\begin{minipage}[b]{0.50\linewidth}
    \centering
    \includegraphics[width=0.57\linewidth]{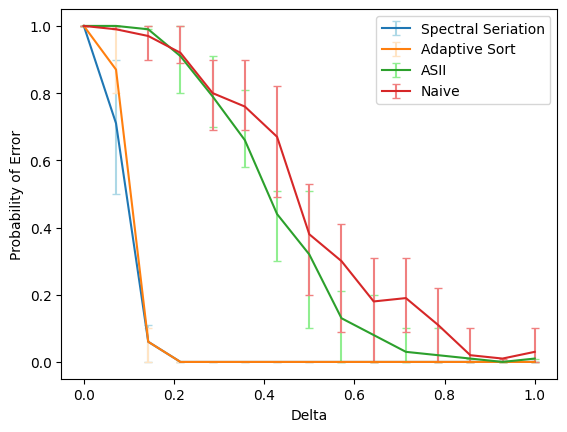}  
\end{minipage}\hfill
\begin{minipage}[b]{0.50\linewidth}
    \centering
    \includegraphics[width=0.57\linewidth]{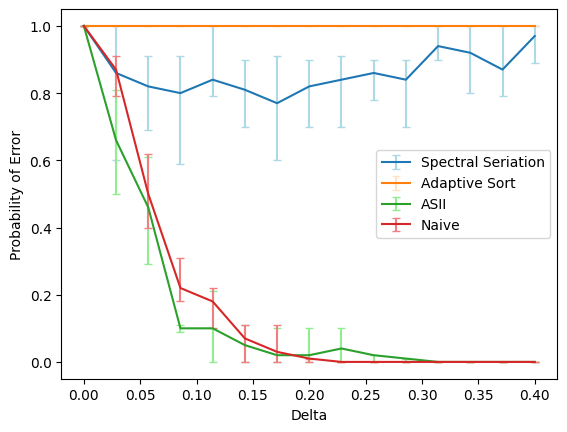}
\end{minipage}\hfill
\begin{minipage}[b]{0.50\linewidth}
    \centering
    \includegraphics[width=0.60\linewidth]{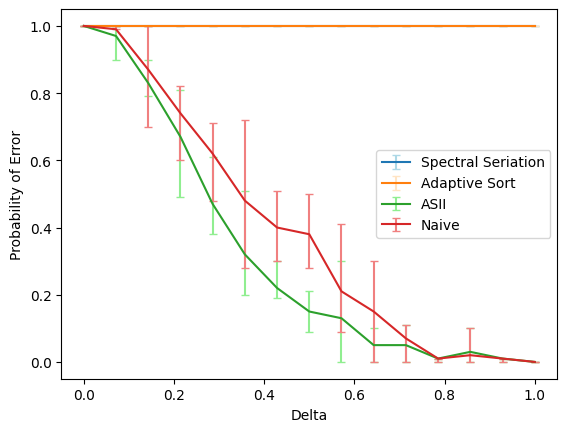}
\end{minipage}\hfill
\begin{minipage}[b]{0.50\linewidth}
    \centering
    \includegraphics[width=0.57\linewidth]{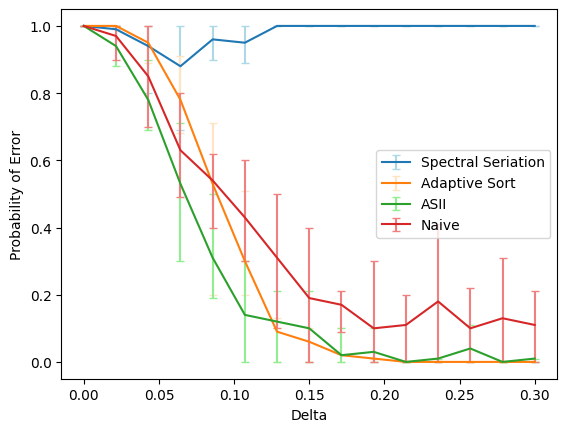}
\end{minipage}
\caption{ 
Empirical error probabilities  for \textsc{ASII} and  three benchmark methods as the  parameter $\Delta$ varies. Scenarios (1-2-3-4) are displayed from left to right and top to bottom. Each experiment uses $n=10$ items and $T=10{,}000$ observations. For each value of~$\Delta$, 100 Monte Carlo runs are split into $10$ equal groups; 
error bars show the $0.1$ and $0.9$ quantiles of the empirical error across groups.}
\label{fig:simer-main}
\end{figure}  

%%%%%%%%%%%

\paragraph{Application to real data.}
We further assess the robustness of \textsc{ASII} on real single-cell RNA sequencing data (human primordial germ cells, from~\cite{guo2015transcriptome}, previously analyzed by~\cite{cai2022matrix}). 
Although such biological data depart substantially from the idealized Robinson models assumed in our theory, \textsc{ASII} still produces a meaningful reordering of the empirical similarity matrix, revealing coherent developmental trajectories among cells. 
This example highlights the potential practical relevance of the proposed approach beyond the stylized assumptions of our theoretical framework.
Full experimental details are provided in Appendix~\ref{sec:exp}.

\begin{figure}[htbp]
\begin{minipage}[b]{0.50\linewidth}
    \centering
    \includegraphics[width=0.60\linewidth]{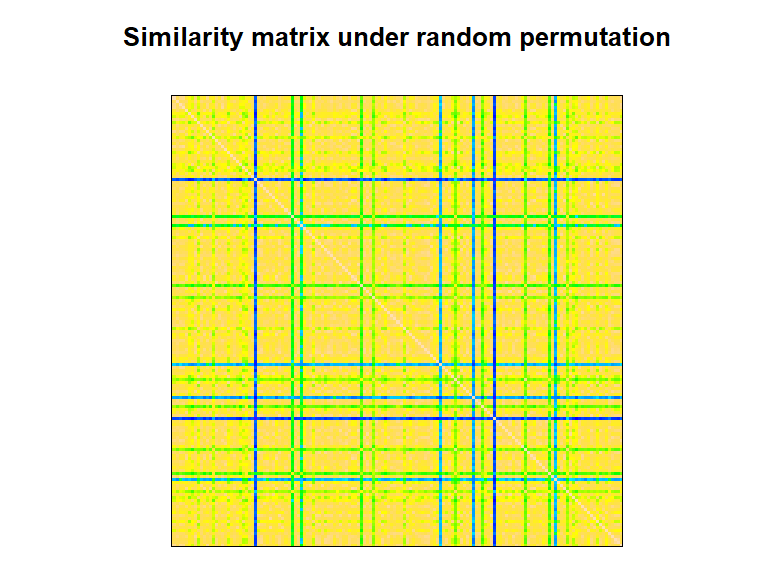}  
\end{minipage}\hfill
\begin{minipage}[b]{0.50\linewidth}
    \centering
    \includegraphics[width=0.60\linewidth]{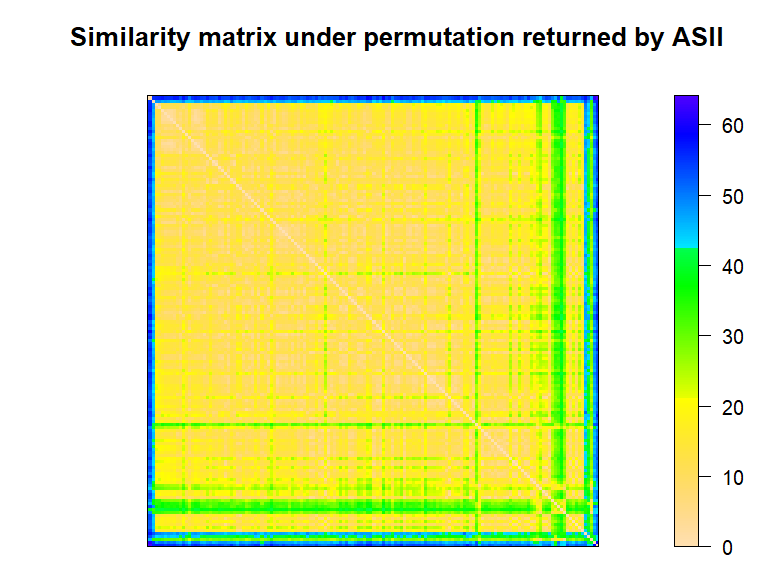}
\end{minipage}
\caption{
Similarity matrix $M$ of a single-cell RNA-seq dataset before and after reordering by ASII.  
The recovered ordering reveals a clear block-diagonal structure consistent with developmental progression: dissimilar regions (blue) are pushed to the boundaries, while groups of highly similar cells (yellow and green) align along the diagonal.
}
\label{fig:rna-main}
\end{figure}

%%%%%%%%%%%%%

%%%%%%%%%%%%%%%%
% Disucssion
%%%%%%%%%%%%%%%%%%%
\section{Discussion}

This work introduces an active-learning formulation of the seriation problem, together with sharp theoretical guarantees and a simple polynomial-time algorithm.
We characterize a phase transition in sample complexity governed by the {{\small $\mathrm{SNR} = \Delta^2 T / (\sigma^2 n)$}}: recovery is impossible when {{\small $\mathrm{SNR} \lesssim \ln n$}}, while \asii\ achieves near optimal performances once this threshold is exceeded.
Our analysis highlights how adaptive sampling combined with corrective backtracking can substantially improve statistical efficiency.
We now turn to a discussion of several complementary aspects of the problem.

\paragraph{Noise regimes.}
Our analysis focused on the stochastic regime where the per-observation signal-to-noise ratio $\Delta/\sigma$ is at most~$1$, which captures the most challenging setting for active seriation. 
The results, however, extend naturally to less noisy regimes ($\Delta/\sigma>1$): in that case, accurate recovery  requires only $T \gtrsim n\ln n$ queries, reflecting the intrinsic $O(n\ln n)$ cost of performing $n$ adaptive binary insertions. 
Further details are provided in Appendix~\ref{discussion:non-stochastic-regime}.

\paragraph{Gain from active learning.}
Our active framework enables recovery of the underlying ordering without observing the entire similarity matrix.
Whereas batch approaches require {{\small $O(n^2)$}} observations, our active algorithm \asii\ succeeds with only {{\small $T \gtrsim (\sigma/\Delta)^2 n \ln n$}} samples.
This corresponds to a fraction $\tfrac{\ln n }{(\Delta/\sigma)^2 n}$ of the full matrix and yields a substantial reduction in sample complexity, 
as long as {{\small $\Delta/\sigma$}} is not too small.
This gain arises from the ability of adaptive sampling to draw information from a well-chosen, small subset of pairwise similarities, from which the entire matrix can be reordered, achieving strong statistical guarantees under limited sampling budgets.

In certain scenarios, when differences between candidate orderings are highly localized, our active algorithm can succeed under weaker signal conditions than some batch methods, see Appendix~\ref{sec:as} for a detailed comparative example.

\paragraph{Fixed-budget formulations.} Throughout this work, we focused on the fixed-budget setting, where the total number of samples~$T$ is fixed in advance and the objective is to minimize the error probability within this budget. 
This way, the algorithm does not require prior knowledge of the noise parameter~$\sigma$, nor of the minimal signal gap~{{\small $\Delta_M$}} of the latent matrix {{\small $M$}}; it simply allocates the available budget~{{\small $T$}} across tests. 
Yet, its performance depends on the unknown~$\sigma$ and~$\Delta_M$ through the signal-to-noise ratio {{\small $\mathrm{SNR}_M = \Delta_M^2 T / (\sigma^2 n)$}}, which determines the achievable accuracy.

\paragraph{Potential applications.}
Seriation techniques are broadly relevant in domains where pairwise similarity information reflects a latent one-dimensional structure.
Examples include genomic sequence alignment, where seriation helps reorder genetic fragments by similarity, and recommendation systems, where item-item similarity matrices can reveal latent preference orderings.
We also illustrated, on real single-cell RNA sequencing data, that \textsc{ASII} can recover biologically meaningful trajectories despite the data departing strongly from our theoretical model.
These settings often involve noisy or costly pairwise measurements, for which active seriation provides an appealing alternative to batch reordering methods.

\paragraph{Future directions.}
We also proposed an extension to settings where the uniform separation assumption does not hold, in which \asii outputs an ordering of a subset of well-separated items. 
How to exploit more global ordering structure beyond this setting is an important open question.

Another natural extension is to study the fixed-confidence setting, where the algorithm must adaptively decide when to stop sampling in order to achieve a prescribed confidence level. 
Such a formulation would typically require variance-aware sampling policies and data-driven stopping rules, possibly involving online estimation of~$\sigma$. 
Developing such an adaptive, fixed-confidence version of~\asii\ is an interesting avenue for future work.

%%%%%%%%%%

%%%%%%%%%%%%%

\subsubsection*{Acknowledgements}
The work of J.~Cheshire is supported by the FMJH, ANR-22-EXES-0013. 
\newpage

%%%%%%%%%%%%%%%%
%\input{unfinished}

\bibliography{biblio_seriation}

\bibliographystyle{apalike}

\newpage

%%%%%%%%%%%%%%%%%%%%%%%%%%

\appendix

% \crefalias{section}{appendix} % uncomment if you are using cleveref

%%%%%%%%%%
% Additional discussion
%%%%%%%%%%

\section{Additional discussion}

\subsection{Beyond the stochastic regime $\Delta/\sigma \leq 1$}
\label{discussion:non-stochastic-regime}
To simplify the presentation of our results, we have primarily focused on the challenging regime where the signal-to-noise ratio per observation, $\Delta/\sigma$, is at most $1$. In this subsection, we discuss how our findings extend to less noisy regimes where $\Delta/\sigma$ may exceed $1$.

In fact, our upper bound can be stated in a slightly more general form than that of Theorem~\ref{thm:UB:insert-n2-into-n1}, covering all values of $\Delta/\sigma$. Specifically, we show that if
\begin{equation}
\label{general-condition-ccl}
\ln n \lesssim \left(\frac{\Delta^2}{\sigma^2} \wedge 1\right) \frac{T}{\tilde n}\,,
\end{equation}
then for any $M \in \cM_{\Delta}$, the error probability of \asii satisfies
\[
p_{M,T} \leq  \exp\left(-c \frac{\Delta^2 T}{\sigma^2 \tilde n}\right)
\]
for some absolute constant $c > 0$. When $\Delta/\sigma \leq 1$, this recovers the bound stated in Theorem~\ref{thm:UB:insert-n2-into-n1}. 
In contrast, when $\Delta/\sigma \geq 1$, the condition~\eqref{general-condition-ccl} reduces to $T \gtrsim n \ln n$, which reflects the fact that \asii performs $O(n)$ binary insertions, each requiring up to $O(\ln n)$ queries.

Whether the lower bound of Theorem~\ref{thm:LB2} extends to the entire complementary regime, where the inequality in~\eqref{general-condition-ccl} is reversed, is an open and more delicate question. Nevertheless, computational lower bounds are well-established for related settings. For instance, in the case from scratch ($\tilde n = n$), any comparison-based sorting algorithm (with noiseless comparisons) requires at least $O(n \ln n)$ comparisons in the worst case~\cite[Section 8.1]{book-intro-algo}. More generally, such lower bounds apply to broader algorithmic classes, such as bounded-degree decision trees or algebraic computation trees~\cite{algebraic-computaiton-trees}. Taken together, these computational barriers and the information-theoretic lower bound from Theorem~\ref{thm:LB2} provide a more comprehensive understanding of the fundamental limitations of the problem. In particular, they support condition~\eqref{general-condition-ccl} as a natural threshold for the success of sorting algorithms modeled as bounded-degree decision trees.

%%%%%%%%%%%%

\subsection{Comparison with batch methods: illustrative example}
\label{sec:as}

The seriation problem has mainly been studied in non-adaptive settings, notably by~\cite{cai2022matrix}, who analyze exact reordering of Robinson Toeplitz matrices from a single noisy observation of the full matrix. 
There exist classes of instances for which their guarantees require a signal-to-noise ratio growing polynomially with~$n$, whereas our active approach succeeds under dramatically weaker conditions, with an effective threshold of order $\sqrt{(\ln n)/n}$. 
An illustrative example of this gap is presented below (Appendix~\ref{appendix-tony-cai-comparison}). 
This contrast stems from a fundamental difference in design: our active algorithm concentrates sampling effort on locally ambiguous regions, while the batch analysis of~\cite{cai2022matrix} relies on global matrix discrepancies,  naturally leading to stronger separation requirements in scenarios involving highly localized differences between orderings.

\subsection{Theoretical comparison with \textsc{Adaptive-Sorting} from~\cite{cai2022matrix}}
\label{appendix-tony-cai-comparison}

The exact reordering of Robinson Toeplitz matrices has recently been analyzed in the batch (non-adaptive) setting by~\cite{cai2022matrix}.  
In their framework, the learner observes a single noisy realization of the entire similarity matrix,
\[
Y_{ij}=R_{\pi^{\star}_i\,\pi^{\star}_j}+\sigma Z_{ij}, \qquad 1\le i,j\le n,
\]
where $R$ is a Robinson Toeplitz matrix and $Z$ is a symmetric noise matrix.  
The goal is to recover the correct reordering of~$R$, equivalently the permutation~$\pi^{\star}$.

The authors introduce a signal-to-noise ratio (SNR) over a parameter space $\mathcal{R} \times \mathcal{S}$,  
where $\mathcal{R}$ is a class of Robinson Toeplitz matrices and $\mathcal{S}$ a set of permutations.  
It is defined as
\[
\rho(\mathcal{R}\times\mathcal{S}) = 
\min_{R\in\mathcal{R},\,\pi,\pi'\in\mathcal{S}}
\|R_\pi - R_{\pi'}\|_F,
\]
where $\|\cdot\|_F$ denotes the Frobenius norm.  
Their main result shows that the \textsc{Adaptive-Sorting} algorithm exactly reorders the matrix with high probability whenever
\begin{equation}
\label{cai-guarantees}
    \rho(\mathcal{R}\times\mathcal{S}) \gtrsim \sigma n^2.
\end{equation}

Although this result is not directly comparable to ours ---since it pertains to the batch observation model--- it is instructive to focus on a simple case where the two settings overlap.  
Let $\mathcal{R}_\Delta$ denote the set of Robinson Toeplitz matrices with minimal gap~$\Delta$, and let $\mathcal{S}_n$ be the set of all permutations of~$[n]$.  
Assume moreover that the number of queries in our active framework equals $T=n^2$, corresponding to the batch regime.

\paragraph{A concrete example.}
Consider the Toeplitz matrix $R_{ij}=\Delta(n-|i-j|)$, and compare the identity permutation $\pi_{\mathrm{id}}$ with the transposition~$(1,2)$.  
The two permuted matrices differ only in their first two rows and columns, and a direct calculation gives
\[
\|R_{\pi_{\mathrm{id}}}-R_{(1,2)}\|_F \asymp \Delta \sqrt{n}.
\]
Substituting into~\eqref{cai-guarantees} yields the sufficient condition
\[
\Delta \gtrsim \sigma n^{3/2}.
\]
Hence, in this instance, \textsc{Adaptive-Sorting} requires the signal gap~$\Delta$ to grow polynomially with~$n$.

By contrast, our Theorem~\ref{thm:UB:insert-n2-into-n1} (in the batch case $\tilde n = n$) guarantees exact recovery by \asii\ under
\[
T \gtrsim \frac{\sigma^2 n \ln n}{\Delta^2},
\]
which, for $T=n^2$, becomes
\[
\Delta \gtrsim \sigma \sqrt{\frac{\ln n}{n}}.
\]
For this restricted class of matrices and permutations, the required signal level is therefore exponentially smaller in~$n$ than in the batch setting, revealing the potential statistical benefit of adaptive sampling.

\paragraph{Remark.}
The above comparison relies on a specific, highly localized perturbation of the permutation: the identity versus the transposition~$(1,2)$.  
In this situation, the two hypotheses differ only through a few rows and columns, creating a local variation in the similarity structure.  
Our active procedure is particularly suited to detect such localized discrepancies, as it concentrates its sampling effort on uncertain or informative regions.  
In contrast, the batch analysis of~\cite{cai2022matrix} is formulated in terms of global matrix discrepancies (such as Frobenius norms), 
which naturally leads to stronger separation requirements in these localized scenarios.

\subsection{Further related literature on ranking}
\label{section-relat-lit-ranking}

The seriation problem is related, though fundamentally different, from the literature on ranking from pairwise comparisons. In that setting, each pair of items $i, j$ is associated with a probability $p_{i,j}$ that item $i$ beats item $j$, and the learner aims to recover the full ranking from noisy comparisons. A common assumption is stochastic transitivity (ST), which posits the existence of a true underlying order $r_1,\ldots,r_n$ such that $r_i > r_j$ implies $p_{i,j} > 1/2$. This problem has been studied under the fixed-confidence setting (e.g., \cite{pmlr-v70-falahatgar17a, ren2019sample}), as well as under simplified noisy sorting models where $p_{i,j}$ depends only on the relative rank difference (e.g., \cite{gu2023optimal}).

Many ranking algorithms proceed incrementally, inserting items one-by-one into a growing list using a binary search strategy --similar to our \textsc{ASII} procedure. However, in seriation, unlike ranking, pairwise scores do not directly reveal the ordering; rather, the ordering must be inferred from global structural constraints.

Score-based ranking algorithms (e.g., via Borda scores) offer another perspective. These approaches assign each item a score, typically based on its average performance against others, and sort items accordingly. Several active ranking methods (e.g., \cite{heckel2019active, cheshire2023active, cheshire2025active}) use elimination-based sampling strategies based on these scores. However, such techniques are not applicable to seriation, where no intrinsic score is associated with individual items.

Another related problem is the thresholding bandit problem (TBP), where the learner must place a threshold element into a totally ordered set of arms with known monotonic means. The binary search approach in our algorithm is partly inspired by that of \cite{cheshire2021problem}, developed for TBP. Still, seriation poses unique challenges due to the absence of scores and the reliance on relative similarity between pairs of items.

%%%%%%%%%%%%%

%%%%%%%%%%
%pseudo codes of our algos
%%%%%%%%%%%

%%%%%%%%%%%%%%%%%%%%%%%
%%%%%%%%%%%%%%%%%%%%%%%%%%%%%%%%%%%%%%%%%%%%%%%%%%%%%%%%%%%%%%
\section{Algorithmic details}
\label{app:pseudocode}
%%%%%%%%%%%%%%%%%%%%%%%%%%%%%%%%%%%%%%%%%%%%%%%%%%%%%%%%%%%%%%

To complement Section~\ref{algo-section}, we provide here the full pseudocode of the main procedure \textsc{ASII} and of the two subroutines, \textsc{Test} and \textsc{Binary \& Backtracking Search}.

\textsc{Notation.} Throughout the appendices, we denote by $\pi^*$ the true latent ordering. The permutation constructed by the algorithm is denoted by $\pi$.

\paragraph{Subroutine \textsc{Test}.}
This subroutine compares three items $(k,l,r)$ and determines whether
$k$ lies to the left, in the middle, or to the right of $(l,r)$,
based on noisy empirical similarities.
It forms the basic local comparison rule used throughout the procedure.

\smallskip 

{{\small\underline{\textbf{Subroutine}} \textsc{Test}}}
 {{\footnotesize
     \label{algo:test}
    \begin{algorithmic}[1]

        \REQUIRE  $(k, l, r, T_0)$
        \ENSURE \ $b\in \{-1, 0, 1\}$
        \STATE \label{sample} Sample $\lfloor T_0/3 \rfloor$ times each  of the pairs $\{l, r\}$, $\{k, l\}$, and $\{k, r\}$. Denote the respective sample means by $\hat M_{lr}$, $\hat M_{kl}$, and $\hat M_{kr}$.
        \STATE \textbf{if} $\hat M_{lr} < \hat M_{kl} \wedge \hat M_{kr}$ \
        \textbf{then} \ $b = 0$
          \STATE \textbf{else if}  $\hat M_{kl } > \hat M_{kr}$ \ 
                    \textbf{then} \  $b=-1$
             \STATE \textbf{else} \ $b=1$
    \end{algorithmic}
    }}

\medskip

\paragraph{Main procedure \textsc{Active Seriation by Iterative Insertion} (ASII).}
The algorithm inserts items one by one into an ordered list, using \textsc{Test} to compare
relative positions and \textsc{BBS} to determine insertion points.

If $n-\tn \ge 3$, the algorithm is initialized with a given permutation $\tpi = (\tpi_1,\ldots,\tpi_{n-\tn})$ of the items $\{1,\ldots,n-\tn\}$.
If $n-\tn \in \{0,1,2\}$, any such initial permutation $\tpi$ provides no information on the latent ordering 
(even for $n-\tn = 2$, as the ordering is only identifiable up to reversal).
Accordingly, in this case, the ASII procedure takes no input and is initialized by constructing an arbitrary permutation of the items $\{1,2\}$.

Recall that here, $\pi^*$ denotes the true latent ordering, while $\pi$ denotes the permutation constructed by the algorithm.

\smallskip 

{{\small\underline{\textbf{Procedure}} \textsc{Active Seriation by Iterative Insertion} (ASII)}} \label{algo:asii}
{{\footnotesize
\begin{algorithmic}[1]
    \REQUIRE $\tpi = (\tpi_1,\ldots,\tpi_{n-\tn})$ a permutation of $[n-\tn]$ if $n-\tn \ge 3$;  no input otherwise
    \ENSURE $\pi =  (\pi_1,\ldots,\pi_n)$ an estimator of $\pi^*$
    \STATE \textbf{if} $n-\tn \leq 2$ \ 
                    \textbf{then}
            \STATE \quad Initialize the permutation $\pi^{(2)} =  (1,2)$ where $\pi_1^{(2)}= 1$ and $\pi_2^{(2)} = 2$ \label{line-initialization-ASII-permut}
            \STATE \textbf{else}
            \STATE \quad Initialize the permutation $\pi^{(n-\tn)}$ with $\pi^{(n-\tn)}_i = \tpi_i$ for all $i \in [n-\tn]$    \label{new-n1-line-initialization-ASII-permut}
            \STATE  $k_0 = \max (2, n-\tn)$ \label{new-n1-k0-line-initialization-ASII-permut}
    \FOR{$k= k_0 + 1,\ldots,n$ }
        \STATE \label{extremities} Choose $(l^{(k-1)}, r^{(k-1)}) \in [k-1]^2$ such that $(\pi^{(k-1)}_{l^{(k-1)}}, \pi^{(k-1)}_{r^{(k-1)}}) = (1, k-1)$
        \STATE \label{test-in-pi} $b = \textsc{Test}(k, l^{(k-1)}, r^{(k-1)}, \lfloor T / (3 \tn) \rfloor)$ 
        \STATE \textbf{if} $b = -1$ \ 
                    \textbf{then}
            \STATE \quad  $\pi_k^{(k)} = 1$, and set $\pi_i^{(k)} = \pi_i^{(k-1)} + 1$ for all $i \in [k-1]$ \label{insert-outside-pi} 
        \STATE \textbf{else if} $b = 1$ \ 
                    \textbf{then}
            \STATE \quad $\pi_k^{(k)} = k$, and set $\pi_i^{(k)} = \pi_i^{(k-1)}$ for all $i \in [k-1]$ \label{insert-outside-pi-bis}
        \STATE \textbf{else}
            \STATE \quad \label{insert-in-pi} $\pi_k^{(k)} = \textsc{Binary \& Backtracking Search}(\pi^{(k-1)})$
            \STATE \quad \label{insert-in-pi-update} Set $\pi_i^{(k)} = \pi_i^{(k-1)}$ for all $i$ such that $\pi_i^{(k-1)} < \pi_k^{(k)}$
            \STATE \quad \label{insert-in-pi-update-bis} Set $\pi_i^{(k)} = \pi_i^{(k-1)} + 1$ for all $i$ such that $\pi_i^{(k-1)} \geq \pi_k^{(k)}$
    \ENDFOR
    \STATE $\pi = \pi^{(n)}$        
    \end{algorithmic}
    }}

At each iteration $k = k_0 + 1,\ldots,n$, the algorithm maintains the current permutation 
$\pi^{(k-1)}$ of the first $k-1$ items.
The pair $(l^{(k-1)},r^{(k-1)})$ in lines~\ref{extremities}–\ref{test-in-pi}
corresponds to the leftmost and rightmost elements of $\pi^{(k-1)}$.
If the initial test indicates that $k$ lies outside the current range,
$k$ is inserted as the first or last element
(lines~\ref{insert-outside-pi}–\ref{insert-outside-pi-bis}).
Otherwise, the algorithm calls the \textsc{Binary \& Backtracking Search} subroutine
(line~\ref{insert-in-pi}) to locate the correct insertion point.
Lines~\ref{insert-in-pi-update}–\ref{insert-in-pi-update-bis}
then update the permutation indices accordingly.

%%%%%%%%%%
\medskip 

\paragraph{Subroutine \textsc{Binary \& Backtracking Search}.}
This subroutine performs a noisy binary search with backtracking.
It maintains a stack of explored intervals and revisits previous ones when inconsistencies
are detected, ensuring robustness without increasing the total sampling budget.

\smallskip 

{{\footnotesize  \label{algo:BBS}
\begin{algorithm}[H]
\begin{algorithmic}[1]
\REQUIRE {{\small $(\pi_1,\ldots,\pi_{k-1})$ a permutation of $[k-1]$}}
\ENSURE {{\small $\pi_{k} \in[k]$}}
\STATE {{\small Set  $(l_0, r_0)\in [k-1]^2$ s.t.  $(\pi_{l_0},\pi_{r_0}) = (1, k-1)$; initialize  $L_0 = [(l_0,r_0)]$;  set  $T_k = 3 \lceil   \log_2 k \rceil$}} \label{init-l0-BBS}
\FOR{{{\small $t= 1, \ldots, T_k$}}}
     \STATE {{\small \textbf{if}  $|L_{t-1}| \geq 1$, and  \textsc{Test}$\left(k, l_{t-1}, r_{t-1}, \lfloor T/(3 \tn T_k) \rfloor\right) \neq 0$  \ 
                    \textbf{then} }} \label{sanity-check-passed}
        \STATE  \quad  {{\small Remove last element: $L_t = L_{t-1} \setminus{\{ L_{t-1}[-1]\}}$ }} \label{del-list} % \hfill \COMMENT{remove last element of the list}
        \STATE  \quad {{\small Set  $(l_{t}, r_{t}) =  L_{t}[-1]$ }} \label{backtrack}   % \hfill \COMMENT{backtracking}
    \STATE {{\small \textbf{else}  }} 
    \STATE \qquad {{\small \textbf{if}  $\pi_{r_{t-1}} - \pi_{l_{t-1}} \leq 1$ \ \textbf{then} set $(l_{t}, r_{t}) = (l_{t-1}, r_{t-1})$}}  \label{binary-s}%\hfill \COMMENT{leaf} 
    \STATE\qquad {{\small \textbf{else}  choose $m_t \in[k-1]$ \ s.t. \ $\pi_{m_t} = \lfloor (\pi_{l_{t-1}} + \pi_{r_{t-1}})/2 \rfloor$}}  %\hfill \COMMENT{binary search}
        \STATE \qquad \qquad {{\small   \textbf{if}   \textsc{Test}$\left(k, l_{t-1}, m_t, \lfloor T/(3 \tn T_k) \rfloor \right)$ = 0 \ \textbf{then} set $(l_{t}, r_{t}) = (l_{t-1}, m_{t})$}} \label{test-left-interval}
         \STATE \qquad \qquad  {{\small \textbf{else} set $(l_{t}, r_{t}) = (m_{t}, r_{t-1})$}} \label{test-right-interval}
    \STATE \qquad {{\small Update list: $L_t = L_{t-1} \oplus [(l_t, r_t)]$}} \label{line-adding}
\ENDFOR 
\STATE {{\small Return $\pi_k = \pi_{l_{T_k}} + 1$}} \label{return-BBS}
\end{algorithmic}
\caption*{{{\small \textbf{Subroutine} \textsc{Binary  \& Backtracking Search} (BBS)}}} % <-- 
\end{algorithm}
}}

The subroutine maintains a list $L_t$ of explored intervals, where 
$L_t[-1]$ denotes the current active interval.
The operator $\oplus$ represents list concatenation, and $|L_t|$  to denote the last index of the list (in particular, \( |L_t| = 0 \) when \( L_t \) contains a single element).
At each iteration, a sanity check (line~\ref{sanity-check-passed})
verifies consistency of the current search path.
If the check fails, the last interval is removed and the algorithm backtracks
(lines~\ref{del-list}–\ref{backtrack}); otherwise, the current interval is split in two,
and the appropriate subinterval is appended to the list
(lines~\ref{binary-s}–\ref{line-adding}).
This corrective mechanism ensures that local inconsistencies do not propagate and that the search remains robust under noisy test outcomes.

%%%%%%%%%%%%%%%%%

%%%%%%%%%%%%%%%%%%%%%
%%%%%%%%%%%%%%%%%%%%%%%

\section{Proof of Theorem~\ref{thm:UB:insert-n2-into-n1} (Upper Bounds)}
\label{app:simple}

Recall that, if $\pi$ is an ordering of the $n$ items (as defined in Section~\ref{section-setting}), then its reverse $\pi^{\mathrm{rev}}$ is also an ordering. In the following, we denote by $\pi^* \in \{\pi,\pi^{\mathrm{rev}}\}$ the true ordering that satisfies $\pi^*_1 < \pi^*_2$. For simplicity, and without loss of generality, we also assume that $\tilde \pi_1 < \tilde \pi_2$. 

% We denote by $\pi^*$ the ordering (of the $n$ items) that satisfies $\pi^*_1 < \pi^*_2$.

For any $2 \leq k \leq n$, we say that a permutation $\pi = (\pi_1, \ldots, \pi_k)$ of $[k]$ is \textit{coherent} with $\pi^*$ if
\begin{align}
\label{relative-order-incremental}
\forall\, 1 \leq i < j \leq k\ : \qquad \spi_i < \spi_j \quad \Longleftrightarrow \quad \pi_i < \pi_j.
\end{align}
In other words, $\pi$ agrees with $\pi^*$ on the relative ordering of items $1,\ldots,k$.

\vspace{0.2cm}
We begin by analyzing the success of the \textsc{Test} subroutine, which is called by the \textsc{ASII} procedure introduced in Section~\ref{algo-section}; its full pseudocode is provided in Appendix~\ref{app:pseudocode}.
For each 
$k \geq 3$, let $\mathcal{A}_k$ denote the event on which, at iteration~$k$, the \textsc{Test} subroutine called by (line~\ref{test-in-pi} of) ASII outputs a correct recommendation :
\begin{equation}\label{good-event-procedure-call-to-test}
\mathcal{A}_k : \qquad b := \textsc{Test}\left(k, l^{(k-1)}, r^{(k-1)}, \lfloor T/(3 \tn) \rfloor\right)  = \left\{
\begin{array}{ll}
-1 & \text{if } \spi_k < \spi_{l^{(k-1)}}, \\
\ \ 0 & \text{if } \spi_k \in (\spi_{l^{(k-1)}}, \spi_{r^{(k-1)}}), \\
\ \ 1 & \text{if } \spi_k > \spi_{r^{(k-1)}}.
\end{array}
\right.
\end{equation}
This event assumes that $\spi_{l^{(k-1)}} < \spi_{r^{(k-1)}}$, which holds when $\pi^{(k-1)}$ is coherent with $\pi^*$ as defined in~\eqref{relative-order-incremental}, since $\pi^{(k-1)}_{l^{(k-1)}} = 1 \ < \  k-1 =  \pi^{(k-1)}_{r^{(k-1)}}$ by construction of $(l^{(k-1)}, r^{(k-1)})$  in line~\ref{extremities} of ASII.

We now state the performance guarantee for the \textsc{Test} subroutine. The proof is deferred to Appendix~\ref{appendix-proba-bounds}.

\begin{prop}
\label{lem:test-call-by-procedure} 
The following holds for any $\sigma > 0$, $\Delta > 0$ and $T \geq 90 \tn$. If $M \in \mathcal{M}_{\Delta}$, and   $\pi^{(k-1)}$ is coherent with $\pi^*$ as in~\eqref{relative-order-incremental}, then  the event $\mathcal{A}_k$ in~\eqref{good-event-procedure-call-to-test} satisfies
\[
\mathbb{P}(\mathcal{A}_k)\ \geq \ 1 - 6\exp\left(-\frac{T\Delta^2}{80 \tn \sigma^2}\right).
\]
\end{prop}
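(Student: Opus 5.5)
Write $l=l^{(k-1)}$, $r=r^{(k-1)}$ and $m=\lfloor\lfloor T/(3\tn)\rfloor/3\rfloor$ for the number of samples \textsc{Test} draws on each of the three pairs $\{l,r\}$, $\{k,l\}$, $\{k,r\}$. The plan is to reduce $\mathcal A_k$ to a deterministic gap statement about the means plus sub-Gaussian concentration of the three sample means.

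\textbf{Step 1 (deterministic gaps).} Since $\pi^{(k-1)}$ is coherent with $\pi^*$ and $\pi^{(k-1)}_l=1<k-1=\pi^{(k-1)}_r$, we have $\spi_l<\spi_r$. Since $k\notin\{l,r\}$, exactly one of the three cases in~\eqref{good-event-procedure-call-to-test} occurs. Write $M=R_{\pi^*}$, with the convention that the R-matrix is oriented according to $\pi^*$; Remark~\ref{two-orderings} allows this.

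\emph{Middle case} $\spi_l<\spi_k<\spi_r$. The Robinson monotonicity along the row of $\spi_l$ gives $R_{\spi_l\spi_k}>R_{\spi_l\spi_r}$. Along the column of $\spi_r$ it gives $R_{\spi_k\spi_r}>R_{\spi_l\spi_r}$. Each inequality is a chain of at least one step, and each step costs at least $\Delta_M\ge\Delta$. Hence $M_{kl}-M_{lr}\ge\Delta$ and $M_{kr}-M_{lr}\ge\Delta$.

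\emph{Left case} $\spi_k<\spi_l<\spi_r$. The same argument gives $M_{kl}-M_{kr}\ge\Delta$ and $M_{lr}-M_{kr}\ge\Delta$, so $M_{kr}$ is the strict minimum by margin $\Delta$.

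\emph{Right case.} This is symmetric: $M_{kl}$ is the strict minimum by margin $\Delta$.

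\textbf{Care point.} The minimal gap~\eqref{minimal-gap-def} is taken only over indices $1<i<j\le n-1$, so steps touching boundary rows or columns are not covered by $\Delta_M$. I would first check that the chains above can be routed through admissible steps. Each chain has length at least $1$ and the gap per step is at least $\Delta$, so routing along the row or column works. If some boundary step is excluded, one can route the chain through the other coordinate instead, which still gives margin $\Delta$. This bookkeeping is the only delicate point I expect.

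\textbf{Step 2 (reduction to concentration).} Suppose each sample mean satisfies $|\hat M_{ab}-M_{ab}|<\Delta/2$. Then the true minimal entry, which sits at margin at least $\Delta$ below the other two, stays the strict empirical minimum. Checking the decision rule of \textsc{Test}:
\begin{itemize}
\item in the middle case, $\hat M_{lr}<\hat M_{kl}\wedge\hat M_{kr}$, so $b=0$;
\item in the left case, $\hat M_{lr}$ is not the minimum and $\hat M_{kl}>\hat M_{kr}$, so $b=-1$;
\item in the right case, $b=1$.
\end{itemize}
Therefore $\mathcal A_k^c\subset\bigcup_{ab}\{|\hat M_{ab}-M_{ab}|\ge\Delta/2\}$.

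\textbf{Step 3 (concentration).} Each $\hat M_{ab}$ is an average of $m$ independent $\sigma$-sub-Gaussian variables with mean $M_{ab}$, so it is $(\sigma/\sqrt m)$-sub-Gaussian. The two-sided Chernoff bound gives
\[
\mathbb P\bigl(|\hat M_{ab}-M_{ab}|\ge\Delta/2\bigr)\le 2\exp\bigl(-m\Delta^2/(8\sigma^2)\bigr).
\]
A union bound over the three pairs yields the factor $6$.

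It remains to show $m\ge T/(10\tn)$ using $T\ge 90\tn$. We have $\lfloor T/(3\tn)\rfloor\ge T/(3\tn)-1$, and then
\[
m\ge T/(9\tn)-2\ge T/(9\tn)-2T/(90\tn)=T/(10\tn)\cdot(1-\dots)
\]
up to the precise constants. Concretely, $T/(9\tn)-2\ge T/(9\tn)-T/(45\tn)=4T/(45\tn)\ge T/(12\tn)$. Combined with the factor $1/8$, this gives an exponent of at least $T\Delta^2/(96\tn\sigma^2)$, and a bound of $T\Delta^2/(80\tn\sigma^2)$ needs $m\ge T/(10\tn)$. I would tune the floor estimate, using that $T\ge 90\tn$ makes the floor losses at most about $10\%$. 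If that fails, note that any weaker exponent of the same form suffices downstream.

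\textbf{Main obstacle.} I expect Step 1's boundary handling in the definition of $\Delta_M$ to be the only genuinely delicate point. The rest is standard.
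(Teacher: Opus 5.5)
\textbf{The missing step: the sample count.} Your route is the paper's. You get a margin of $\Delta$ between the true minimum of $\{M_{lr},M_{kl},M_{kr}\}$ and the other two means. You restrict to the event that all three sample means lie within $\Delta/2$ of their means. The sub-Gaussian bound with $\epsilon=\Delta/2$ and a union bound then give the factor $6$.

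The gap is the step you leave open. Your estimate $m\ge T/(9\tilde n)-2$ cannot deliver the stated exponent under $T\ge 90\tilde n$: turning it into $m\ge T/(10\tilde n)$ would require $T\ge 180\tilde n$. At best it gives $T\Delta^2/(90\tilde n\sigma^2)$. The fallback that ``a weaker exponent suffices downstream'' does not prove the proposition as stated.

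The missing observation is that nested floors collapse. For real $x\ge 0$, $\lfloor\lfloor x\rfloor/3\rfloor=\lfloor x/3\rfloor$, since an integer $q$ satisfies $3q\le\lfloor x\rfloor$ iff $3q\le x$. Hence $m=\lfloor T/(9\tilde n)\rfloor\ge T/(9\tilde n)-1$. Moreover, $T/(9\tilde n)-1\ge T/(10\tilde n)$ holds exactly when $T\ge 90\tilde n$, which is where the threshold $90$ comes from. Then $m\Delta^2/(8\sigma^2)\ge T\Delta^2/(80\tilde n\sigma^2)$, as claimed.

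\textbf{Your care point.} Your proposed rerouting does not exist. Take the middle case with $\pi^*_l=1$, which happens whenever the item at true position $1$ is already inserted. Then $M_{kl}-M_{lr}=R_{1,\pi^*_k}-R_{1,\pi^*_r}$ compares two entries of the same row. The only monotone chain between them runs along row $1$, so there is no other coordinate to route through. The same holds in the other cases whenever a chain lies in row $1$ or column $n$.

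In fact, if the index range $1<i<j\le n-1$ in~\eqref{minimal-gap-def} were read literally, the proposition would be false. For $n=4$ the range leaves only $(i,j)=(2,3)$, so $R_{1,2}-R_{1,3}$ is unconstrained. Take $(\pi^*_l,\pi^*_k,\pi^*_r)=(1,2,3)$ and fix $T$. Letting $R_{1,2}-R_{1,3}\to 0$ pushes the failure probability of \textsc{Test} toward $1/2$, while $\Delta_M$, and hence the claimed bound, stays fixed.

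The statement, and the paper's proof, which simply asserts the three gap inequalities, implicitly use $\Delta_M$ as the minimum over \emph{all} adjacent inequalities of Definition~\ref{def-robinson}. These are $R_{i,j}-R_{i-1,j}$ for $2\le i\le j\le n$ and $R_{i,j}-R_{i,j+1}$ for $1\le i\le j\le n-1$. You should adopt that reading explicitly. Under it, every step of your row and column chains is covered, and Step~1 needs no bookkeeping.
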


Next, we turn to the performance of the \textsc{BBS} subroutine (introduced in Section~\ref{algo-section}, whose full pseudocode is provided in Appendix~\ref{app:pseudocode}). At iteration $k$, when invoked by (line~\ref{insert-in-pi} of) ASII, it receives as input the permutation $\pi^{(k-1)}$ of $[k-1]$, and performs a binary search to locate the correct position of item $k$. The binary search is successful if:\\
- the final interval \((l_{T_k}, r_{T_k})\) contains the true position of $\pi_k^*$, and\\
- this final interval has length one.\\
Formally, we define the event:
\begin{equation}
\label{ebent-B-k}
\mathcal{B}_k(\pi) := \left\{ \pi_k^* \in (\pi^*_{l_{T_k}}, \pi^*_{r_{T_k}}) \ \text{and} \ \pi_{r_{T_k}} - \pi_{l_{T_k}} = 1 \right\},
\end{equation}
which captures the success of \bbs when applied to an input permutation $\pi = (\pi_1,\ldots,\pi_{k-1})$.

The following proposition guarantees success under appropriate conditions; its proof appears in Appendix~\ref{proof-main-prop-garantees-Algo-1}.

\begin{prop}
\label{lem:auto}
The following holds for any  $\sigma > 0$, $\Delta > 0$ and $T$ such that 
\[
 \frac{T \Delta^2}{ \tn \sigma^2 }  \geq 14400 \log_2 n  \qquad \text{and} \qquad    T\geq 54 \tn \lceil \log_2 n \rceil \, .
 \]
 Assume that $M \in \mathcal{M}_{\Delta}$, and that the input  $\pi = (\pi_1,\ldots,\pi_{k-1})$  to the \bbs subroutine is   coherent with  $\pi^*$ as in~\eqref{relative-order-incremental}, and that $\spi_k  \in ( \spi_{l_0}, \spi_{r_0})$ with $(\pi_{l_0}, \pi_{r_0}) = (1, k-1)$. Then,  the event $\mathcal{B}_k(\pi)$ in~\eqref{ebent-B-k} satisfies
\[\P\ac{\cB_k(\pi)} \ \geq \ 1 -  \exp\left(-\frac{T\Delta^2}{1200 \tn \sigma^2}\right).
\]
\end{prop}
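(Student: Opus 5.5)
We follow the standard analysis of noisy binary search with backtracking (as in \cite{feige1994computing,cheshire2021problem}). Let $\tau := \lfloor T/(3\tn T_k)\rfloor$ be the per-call budget. Since $T_k = 3\lceil\log_2 k\rceil \le 3\lceil \log_2 n\rceil$ and $T \ge 54\tn\lceil\log_2 n\rceil$, each call to \textsc{Test} receives at least a constant number of samples per pair. Adapting the proof of Proposition~\ref{lem:test-call-by-procedure} to the triple $(k,l,r)$ with $\pi$ coherent, we will first show that a single call with budget $\tau$ returns the correct answer (left/middle/right relative to the interval, in the sense of $\pi^*$) with probability at least $1 - 6\exp(-\tau\Delta^2/(c\sigma^2))$. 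The bound uses sub-Gaussian concentration of the three empirical means and the fact that the relevant population gaps are at least $\Delta$, which holds by $M\in\mathcal M_\Delta$ and the Robinson structure. Under the condition $T\Delta^2/(\tn\sigma^2)\ge 14400\log_2 n$, this failure probability $p$ is at most a small constant, say $p\le 1/10$. The tests across rounds use fresh samples, so conditionally on the past each round's tests are correct with probability $\ge 1-2p$ independently.

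Next we set up a potential argument. Call the current node $(l_t,r_t)$ \emph{good} if $\pi^*_k\in(\pi^*_{l_t},\pi^*_{r_t})$; by coherence, the good nodes form a path in the binary search tree from the root $(l_0,r_0)$ (good by assumption) to the unique leaf of length one containing $k$. Define $\Phi_t$ as follows. If the current node is good, $\Phi_t$ is its depth. If not, $\Phi_t$ is the depth of the deepest good ancestor in the list $L_t$ minus the number of bad nodes stacked above it. The key deterministic claim is: in any round where all \textsc{Test} calls are correct, $\Phi_{t}\ge \Phi_{t-1}+1$, with the convention that a good leaf stays put and counts as progress. Indeed, at a bad node the sanity check fails and we pop one bad node; at a good non-leaf node the check passes and the correct half-test moves to the good child; at a good leaf we stay. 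Conversely, in a round with an incorrect test, $\Phi$ drops by at most $1$, since at most one bad node is pushed or one good node popped. Note the root is never popped because of the $|L_{t-1}|\ge1$ guard. Hence, letting $N$ be the number of rounds with some incorrect test, $\Phi_{T_k}\ge (T_k-N)-N = T_k-2N$. Since the depth of the target leaf is at most $\lceil\log_2 k\rceil = T_k/3$, the condition $N < T_k/3$ gives $\Phi_{T_k}\ge T_k/3$. This means the current node is the good leaf, which is exactly $\mathcal B_k(\pi)$. One must check carefully that $\Phi$ saturates at the leaf depth and that progress above the leaf is not overcounted; I would define $\Phi$ capped at the leaf depth and verify the two cases.

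Finally, $N$ is stochastically dominated by $\mathrm{Bin}(T_k,2p)$, so by a Chernoff bound
\[
\P(N\ge T_k/3)\le \binom{T_k}{\lceil T_k/3\rceil}(2p)^{T_k/3}\le \big(2^{3}\cdot 2p\big)^{T_k/3}
\]
up to constants. Substituting $p\le 6\exp(-\tau\Delta^2/(c\sigma^2))$ with $\tau T_k\asymp T/\tn$ yields $\exp(-c'T\Delta^2/(\tn\sigma^2))$ once the prefactor $(48)^{T_k/3}$ is absorbed. This absorption is where the condition $T\Delta^2/(\tn\sigma^2)\ge 14400\log_2 n$ is used, since it makes the exponent dominate $T_k\ln 48\lesssim\log_2 n$. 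The main obstacle is this bookkeeping: making the exponent scale with the total budget $T/\tn$ rather than the per-test budget. Each test's error is only $\exp(-\Theta(T\Delta^2/(\tn\sigma^2\log n)))$, and one must multiply $\Theta(\log n)$ such factors to reach $\exp(-T\Delta^2/(1200\tn\sigma^2))$, while the floor in $\tau$ and the combinatorial factor eat part of the constant.
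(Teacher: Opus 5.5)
Your argument is essentially the paper's proof. Read with list indices, your potential is $\Phi_t = 2w_t-|L_t| = \lceil\log_2 k\rceil - N_t$ (with $w_t,N_t$ as in \eqref{wt-last-good-index}--\eqref{number-steps}), and your two deterministic claims are Lemmas~\ref{lem:dsimp} and~\ref{lem:dclose}. Your subset union bound $\binom{T_k}{T_k/3}(2p)^{T_k/3}$ replaces the KL--Chernoff bound of Lemma~\ref{lem:sum-indicator-functions}, with threshold $T_k/3$ instead of $T_k/4$; the constants still close under the assumed $14400\log_2 n$.

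One correction: do \emph{not} cap $\Phi$ at the leaf depth. At a good leaf the algorithm appends a duplicate of the leaf interval (line~\ref{binary-s}). So $|L_t|$ and $w_t$ both grow, and $\Phi$ genuinely increases by one. These duplicates are exactly what absorb later wrong sanity checks, since popping one leaves you on the leaf.

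A capped potential loses this information. The ``$+1$ on correct rounds'' claim fails at the cap, so $\Phi_{T_k}\ge T_k-2N$ no longer follows. The capped dynamics alone cannot exclude a single error in the last round leaving you below the cap.

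With the uncapped potential there is no overcounting issue. $\Phi_{T_k}\ge\lceil\log_2 k\rceil$ forces $w_{T_k}\ge\lceil\log_2 k\rceil$. Lemma~\ref{constant-list-beyond-large-index} says all entries from that index on are the same length-one interval, which gives $\mathcal{B}_k(\pi)$.
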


We are now ready to prove Theorem~\ref{thm:UB}, showing that the final output $\pi^{(n)}$ of the \asii procedure equals the true ordering $\pi^*$ with high probability. Since ASII constructs $\pi^{(k)}$ incrementally from $\pi^{(k-1)}$, we proceed by induction on $k$.

%%%%%%%%%%%%%%%%%%%%%%

%%%%%%%%%%%%%%%%%%%%%

\subsection{Proof of Theorem~\ref{thm:UB:insert-n2-into-n1}}

We initialize the induction at 
\[k_0 = \max(2, n-\tn) .\]
If $k_0 = 2$, the initial permutation $\pi^{(2)} = (1,2)$
(see line~\ref{line-initialization-ASII-permut} of the ASII procedure)
is trivially coherent with $\pi^*$ in the sense of~\eqref{relative-order-incremental}.

If $k_0 \neq 2$, then $k_0 = n-\tn$ and so $3 \le n -\tilde n$. In this case, the initial permutation $\pi^{(n-\tn)}$ is given by the input ordering $\tpi$
(line~\ref{new-n1-line-initialization-ASII-permut}) and we set $\pi^* \in \{\pi,\pi^{\mathrm{rev}}\}$ such that $\tilde \pi$ is coherent with $\pi^*$. Thus, $\pi^{(n-\tn)}$ is coherent with $\pi^*$ in the sense of~\eqref{relative-order-incremental}. 

% If $k_0 \neq 2$, then $k_0 = n-\tn$ and the initial permutation $\pi^{(n-\tn)}$ is given by the input ordering $\tpi$
% (line~\ref{new-n1-line-initialization-ASII-permut}).
% Since $\tpi$ is assumed to be coherent with the latent ordering, and since $\pi^*$ was fixed arbitrarily among the two valid orderings (up to reversal), we may assume without loss of generality that $\pi^{(n-\tn)}$ is coherent with $\pi^*$ in the sense of~\eqref{relative-order-incremental}. \ya{True w.l.o.g? the algo dont use a particular  orientation?} \ja{no loss of generality issue, we simpy have $\pi^{(n-\tn)}$ is coherent by definition}

\medskip 
Now, taking $k$ such that $k_0 + 1  \leq  k   \leq  n $ and assuming that $\pi^{(k-1)}$ is coherent with $\pi^*$, we show that, conditionally on event $\cA_k$ in~\eqref{good-event-procedure-call-to-test}, the permutation $\pi^{(k)}$ is coherent with $\pi^*$ with high probability. There are three cases defined by the value of the output $b$ of \test.

\smallskip 

- \textit{Case 1: $b=-1$.} If the call to \test at iteration $k$ returns $b=-1$, then on  $\cA_k$, we have $\spi_{k} < \spi_{l^{(k-1)}}$. Since $\pi^{(k-1)}_{l^{(k-1)}} = 1$, and $\pi^{(k-1)}$ is coherent with $\pi^*$,  we deduce that $\spi_k < \spi_s $ for all $s\in[k-1]$. Therefore,  the permutation $\pi^{(k)}$ defined in line~\ref{insert-outside-pi} of ASII is coherent with $\pi^*$. 

\smallskip 

- \textit{Case 2: $b = 1$.} Similarly, if the call to \test returns $b = 1$, then on $\cA_k$ we have $\spi_k > \spi_{r^{(k-1)}}$. Since $\pi^{(k-1)}_{r^{(k-1)}} = k-1$ and $\pi^{(k-1)}$ is coherent with $\pi^*$, we deduce that $\spi_k > \spi_s$ for all $s \in [k-1]$. Therefore, the permutation $\pi^{(k)}$ defined in line~\ref{insert-outside-pi-bis} of ASII is coherent with $\pi^*$.

For these two cases, we have shown that if $\pi^{(k-1)}$ is coherent, then the new permutation $\pi^{(k)}$ is coherent with probability at least
\begin{equation}
\label{Ak-proba-bound-thm}
    \P\ac{\cA_k} \geq 1 - 6 \exp\left(-\frac{T \Delta^2}{80 \tn \sigma^2}\right),
\end{equation}
where we used Proposition~\ref{lem:test-call-by-procedure} for  $T \geq 90 \tn$.

\medskip 

- \textit{Case 3: $b = 0$.} The ASII procedure (in line~\ref{insert-in-pi}) calls the \bbs subroutine with input $\pi^{(k-1)}$ to determine the position of $k$ in the new permutation $\pi^{(k)}$, defined as
\[
    \pi^{(k)}_k = \pi^{(k-1)}_{l_{T_k}} + 1 \; ,
\]
by the (line~\ref{return-BBS} of)  BBS subroutine.
On event $\cB_k(\pi^{(k-1)})$ defined in~\eqref{ebent-B-k} (with input $\pi = \pi^{(k-1)}$), the permutation $\pi^{(k)}$ (lines~\ref{insert-in-pi} to~\ref{insert-in-pi-update-bis}  of ASII) is coherent with $\pi^*$.

Conditionally on $\cA_k$ in~\eqref{good-event-procedure-call-to-test}, with $b=0$, we have
\[
    \spi_k \in (\spi_{l^{(k-1)}}, \spi_{r^{(k-1)}}).
\]
We readily check that 
\[(l^{(k-1)}, r^{(k-1)}) \ = \ (l_0, r_0)  ,\]
since these two pairs are respectively defined by  $(\pi^{(k-1)}_{l^{(k-1)}}, \pi^{(k-1)}_{r^{(k-1)}}) = (1, k-1)$ in line~\ref{extremities} of ASII, and  by $(\pi_{l_0}, \pi_{r_0}) = (1, k-1)$ in line~\ref{init-l0-BBS} of BBS with input $\pi = \pi^{(k-1)}$. 

Hence
\[
\spi_{k} \in ( \spi_{l_0}, \spi_{r_0}) .
\]

The assumption of Proposition~\ref{lem:auto} is thus satisfied for the input $\pi = \pi^{(k-1)}$, allowing us to apply the proposition directly. Hence, conditionally on $\cA_k$,
\begin{equation}
\label{Bk-proba-bound-thm}
    \P_{|\cA_k}\ac{\cB_k(\pi^{(k-1)})} \geq 1 - \exp\left(-\frac{T \Delta^2}{1200 \tn \sigma^2}\right),
\end{equation}
provided that $\frac{T \Delta^2}{\tn \sigma^2} \geq 14400 \log_2 n$ and $T \geq 54 \tn \lceil \log_2 n \rceil$.

Denoting by $\cB_k^c(\pi^{(k-1)})$ the complement of $\cB_k(\pi^{(k-1)})$, we have
\begin{align*}
    \P\ac{\cB_k^c(\pi^{(k-1)})} &= \P\ac{\cB_k^c(\pi^{(k-1)}) \cap \cA_k} + \P\ac{\cB_k^c(\pi^{(k-1)}) \cap \cA_k^c} \\
    &= \P_{|\cA_k}\ac{\cB_k^c(\pi^{(k-1)})} \cdot \P\ac{\cA_k} \ + \  \P_{|\cA_k^c}\ac{\cB_k^c(\pi^{(k-1)})} \cdot \P\ac{\cA_k^c} \\
    &\leq \P_{|\cA_k}\ac{\cB_k^c(\pi^{(k-1)})}  \ + \ \P\ac{\cA_k^c} \\
    &\leq \exp\left(-\frac{T \Delta^2}{1200 \tn \sigma^2}\right) \ + \ 6 \exp\left(-\frac{T \Delta^2}{80 \tn \sigma^2}\right) \\
    &\leq 7 \exp\left(-\frac{T \Delta^2}{1200 \tn \sigma^2}\right),
\end{align*}
where we used~\eqref{Ak-proba-bound-thm} and~\eqref{Bk-proba-bound-thm}.

We have shown that if $\pi^{(k-1)}$ is coherent, then the new permutation $\pi^{(k)}$ is coherent with probability at least
\[
    1 - 7 \exp\left(-\frac{T \Delta^2}{1200 \tn \sigma^2}\right).
\]

\medskip 

- \textit{Conclusion:} In all three cases, the permutation $\pi^{(k)}$ is coherent with probability at least $1 - 7 \exp\left(-\frac{T \Delta^2}{1200 \tn \sigma^2}\right)$. By induction and a union bound over all iterations $k_0 + 1  \leq  k   \leq  n$ of the \asii procedure, we obtain that $\pi^{(n)}$ is coherent with $\pi^*$
(that is, $\pi^{(n)} = \pi^*$), with probability at least $1 - 7 \tn \exp\left(-\frac{T \Delta^2}{1200 \tn \sigma^2}\right)$. 
Here, we used that the number of iterations is upper bounded by $n - k_0 \le n - (n-\tn) = \tn$, where $k_0 = \max(2,n-\tn)$.

Thus, for
\begin{equation}\label{cosntraint-proof-upper-bound-union-bound}
    \frac{T \Delta^2}{1200 \tn \sigma^2} \geq 2 \ln(7 \tn),
\end{equation}
we recover $\pi^*$ with probability at least $1 - \exp\left(-\frac{T \Delta^2}{2400 \tn \sigma^2}\right)$.

To prove this result, we have used several conditions on the parameters $(n, T, \Delta, \sigma)$ which are all satisfied when
\[
  n \ \geq \ k_0 +1 , \qquad  \frac{\Delta}{\sigma} \leq 1, \qquad  \frac{T \Delta^2}{\tn \sigma^2} \geq 14400  \log_2 n, \qquad \text{and} \quad \eqref{cosntraint-proof-upper-bound-union-bound} ,
\]
for $k_0 = \max(2, n-\tn)$.
Since $\log_2(x) = \frac{\ln(x)}{\ln(2)}$,
the last two conditions are implied by
\[
    \frac{T \Delta^2}{\tn \sigma^2} \geq 16800 \ln n.
\]
Theorem~\ref{thm:UB:insert-n2-into-n1} follows for $c_0 = 16800$.

%%%%%%%%%%%%%
%%%%%%%%%%

\subsection{Proof of Proposition~\ref{lem:auto}}
\label{proof-main-prop-garantees-Algo-1}

\textit{Notations.}\\
Given any permutation $\pi$ of $[n]$, and any integers $k,l,r \in[n]$, we use the notation $k \inpi (l,r)$ to indicate that the position $\pi_k$ belongs to the interval $(\pi_l , \pi_r)$:
\begin{equation}\label{notation-interval-image-by-pi}
    k \inpi (l,r) \quad \text{if} \quad \pi_k \in (\pi_l, \pi_r)\enspace.
\end{equation}
In this case, we say that $(l,r)$ is, with respect to $\pi$, an interval containing $k$. Recall that $(\pi_l,\pi_r)$ denotes the set of integers between $\pi_l$ and $\pi_r$, that is $\{m\in \mathbb{N} : \ \pi_l + 1 \leq m \leq \pi_r - 1\}$.

Any $(l,r)$ is called a ``good'' interval if, $k \inspi (l,r)$, as defined in~\eqref{notation-interval-image-by-pi} with $\pi = \pi^*$. Intuitively, the \bbs subroutine will be  successful if it inserts the new item $k$ into good intervals.

At iteration $t$, the \bbs subroutine builds a list $L_t$ of intervals (see Appendix~\ref{app:pseudocode} for its pseudocode). We define $w_t$ as the index of the last good interval in $L_t$ :
\begin{equation}
\label{wt-last-good-index}
w_t \ = \ \max_{0 \leq w  \leq |L_{t}| } \{w : \ \ k \inspi L_{t}[w]\} \ , \qquad \text{for all } t \in [T_k],
\end{equation}
where $|L_t|$ denotes the largest index in the list $L_t$ (that is, $L_t = \big{[}L_t[0], L_t[1], \ldots,L_t[|L_t|]\big{]}$. Note that $w_t$ is well-defined, since there is at least one good interval in $L_t$ (indeed, the initial interval $(l_0, r_0)$ is a good interval by assumption).

\medskip 

\textit{Proof of Proposition~\ref{lem:auto}.}\\
Let $3  \leq  k  \leq  n$, and let $\pi$ be the permutation given as input to the \bbs subroutine. The length of an interval $(l, r)$ is defined as: $\text{length}(l,r) = \pi_r - \pi_l$. In the next lemma, we show that all intervals $L_t[w]$ with $w \geq \lceil \log_2 k \rceil$ have length $1$, and are equal to each other. The proof is in Appendix~\ref{section-proof-lem-constant-list-beyond-large-index}. We recall that $L_t[-1]$ denotes the last element of the list $L_t$ (that is, $L_t[-1] = L_t[|L_t|]$).

\begin{lem}
\label{constant-list-beyond-large-index}
Let $t \geq 1$. If $|L_t| \geq \lceil \log_2 k \rceil$, then for all $w$ such that $\lceil \log_2 k \rceil \leq w \leq |L_t|$, we have $\textup{length}(L_t[w]) = 1$ and $L_t[w] = L_t[-1]$. 
\end{lem}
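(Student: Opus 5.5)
We argue deterministically about the list dynamics of \bbs, using only two structural facts. First, $L_t$ is a stack: at each step we either pop the last element (line~\ref{del-list}) or push one new interval (line~\ref{line-adding}). Second, the pushed interval is always derived from the current last element $L_{t-1}[-1]$.

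The key invariant is that, for every $t$ and every index $w \le |L_t|$, the interval $L_t[w]$ has length at most $\max\big(1, \lceil (k-1)/2^{w}\rceil\big)$. We prove it by induction on $t$. Popping only shortens the list, so the entries that remain keep their indices and their bounds. When pushing, the new interval sits at index $|L_{t-1}|+1$ and is either a copy of $L_{t-1}[-1]$ (line~\ref{binary-s}, which happens when that interval already has length $\le 1$) or one half of it. The half is $(l,m_t)$ or $(m_t,r)$, with $\pi_{m_t} = \lfloor(\pi_l+\pi_r)/2\rfloor$, so its length is at most $\lceil \text{length}/2\rceil$. Since $L_0[0]$ has length $k-2 \le k-1$, repeated halving gives length at most $\lceil (k-1)/2^{w}\rceil$, which equals $1$ once $2^w \ge k-1$, and in particular for $w \ge \lceil\log_2 k\rceil$. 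We should also check that lengths never reach $0$: splitting an interval of length $\ge 2$ produces halves of length $\ge 1$.

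For the equality statement, take $w \ge \lceil \log_2 k\rceil$ with $w+1 \le |L_t|$. The entry $L_t[w+1]$ was pushed at some time $s\le t$ when $L_s[w]$ was the last element. Between that time and $t$, index $w$ was never popped, since otherwise $w+1$ would have been popped first and re-pushed later, and we take $s$ to be the latest such push time. Hence $L_s[w]=L_t[w]$. By the invariant, $L_t[w]$ has length $1$, so line~\ref{binary-s} applies and the push copies it: $L_t[w+1]=L_t[w]$. Chaining this from $w$ up to $|L_t|$ gives $L_t[w]=L_t[-1]$.

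The main obstacle is making the bookkeeping rigorous. Because of backtracking, the entry at index $w$ at time $t$ may have been created long ago, so the induction has to be phrased as ``every entry, when pushed, satisfies the bound, and entries are never modified in place''. The rounding in the midpoint also needs care, so we check that $\lfloor(\pi_l+\pi_r)/2\rfloor - \pi_l$ and $\pi_r-\lfloor(\pi_l+\pi_r)/2\rfloor$ are both at most $\lceil(\pi_r-\pi_l)/2\rceil$, which gives the bound $\lceil (k-1)/2^w\rceil$ at depth $w$.
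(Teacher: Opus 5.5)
Your proof is correct and follows essentially the same route as the paper: a depth-indexed halving bound on the lengths of the list entries, combined with the fact that each entry of $L_t$ was produced from its predecessor in the list. The only difference is cosmetic. You obtain $L_t[w+1]=L_t[w]$ directly from the leaf (copy) rule of BBS at the latest push time, whereas the paper invokes nested inclusion plus equal lengths. Your bookkeeping (latest push time, lengths never reaching $0$, ceiling rounding of the midpoint) is in fact more careful than the paper's.
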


Recall that the \bbs subroutine makes $T_k := 3 \lceil \log_2 k \rceil$ iterations (see lines~\ref{init-l0-BBS}-2 of BBS).

If $w_{T_k} \geq \lceil \log_2 k \rceil$ at the final iteration $T_k$, then Lemma~\ref{constant-list-beyond-large-index} implies
\begin{equation}
\label{good-last-interval-of-length-1}
    L_{T_k}[-1] = L_{T_k}[w_{T_k}] \qquad \text{and} \qquad \text{length}(L_{T_k}[-1]) = 1\enspace,
\end{equation}
which means that the last interval of $L_{T_k}$ is a good interval of length $1$. Since the last element of $L_{T_k}$ is $(l_{T_k}, r_{T_k})$ (by construction of $L_{T_k}$ in \bbs), this is equivalent to the occurrence of the event
\begin{equation}
\label{good-last-interval-ccl-propC2}
\left\{ k \inspi (l_{T_k}, r_{T_k}) \ \ \text{and} \ \ \pi_{r_{T_k}} - \pi_{l_{T_k}} = 1 \right\} ,
\end{equation}
which is exactly the event $\mathcal{B}_k(\pi)$ of  Proposition~\ref{lem:auto}.
To conclude the proof, it remains to show that $w_{T_k} \geq \lceil \log_2 k \rceil$ with high probability.

\medskip 

\textit{Proof of $w_{T_k} \geq \lceil \log_2 k \rceil$ with high probability.}\\
We introduce the quantity
\begin{equation}
\label{number-steps}
  N_t \ := \ |L_{t}| + \lceil \log_2 k \rceil - 2w_t \ , \qquad \text{for all } t \in [T_k].
\end{equation}
We will show that $N_{T_k} \leq 0$ with high probability, which implies the desired inequality $w_{T_k} \geq \lceil \log_2 k \rceil$, since $|L_t| - w_t \geq 0$ for all $t$ by definition of $w_t$ in~\eqref{wt-last-good-index}. 

Before proceeding, let us explain the intuition behind $N_t$. It can be interpreted as an upper bound on the number of steps required to complete the binary search, assuming no mistakes are made. Indeed, $|L_t| - w_t$ is the number of backtracking steps needed to return to the last good interval, $L_t[w_t]$, and from there, the number of steps to reach an interval of length $1$ is at most $\lceil \log_2 k \rceil - w_t$. Adding these gives $N_t$.

Intuitively, when a mistake is made at step $t$ of the binary search, $N_t$ increases by $1$ compared to $N_{t-1}$. Conversely, when the correct subinterval is chosen in a forward step, or a mistake is corrected via a backwards step, $N_t$ decreases by $1$.

We first formalize this in the next lemma, showing that $N_t - N_{t-1} \leq 1$. (Proof in Appendix~\ref{appendix:proof-lem-N+1}.)
\begin{lem}
\label{lem:dclose}
If $(l_0, r_0)$ is a good interval, then for all $1 \leq t \leq T_k$, we have $N_t \leq N_{t-1} + 1$. 
\end{lem}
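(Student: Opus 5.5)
The plan is a case analysis on the type of step performed at iteration $t$ of \bbs, tracking how $|L_t|$ and $w_t$ change. Since $N_t = |L_t| + \lceil \log_2 k\rceil - 2w_t$, it suffices to show
\[
(|L_t| - |L_{t-1}|) - 2(w_t - w_{t-1}) \le 1 .
\]
A preliminary observation used throughout is that the list only changes at its end. Either the last element is removed (backtracking, lines~\ref{del-list}--\ref{backtrack}), or one element is appended (line~\ref{line-adding}). All earlier entries $L_{t-1}[w]$ with $w < |L_{t-1}|$ are therefore preserved in $L_t$.

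\emph{Case 1: backtracking step.} Here $|L_{t-1}| \ge 1$ and $|L_t| = |L_{t-1}| - 1$. If $w_{t-1} < |L_{t-1}|$, the last good interval $L_{t-1}[w_{t-1}]$ is still present in $L_t$, and nothing after it is good. Hence $w_t = w_{t-1}$ and $N_t = N_{t-1} - 1$. If $w_{t-1} = |L_{t-1}|$, the removed interval was good. Every remaining entry has index at most $w_{t-1}-1$, so $w_t \ge w_{t-1} - 1$, since $w_t \ge 0$ always holds because $L_t[0] = (l_0,r_0)$ is good by assumption. Thus $w_t \le w_{t-1}$, and $w_t \ge w_{t-1}-1$ gives $N_t \le N_{t-1} - 1 + 2 = N_{t-1} + 1$. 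The requirement that $L_t[0]$ is never removed is guaranteed by the condition $|L_{t-1}| \ge 1$ in line~\ref{sanity-check-passed}.

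\emph{Case 2: forward step.} A new interval is appended, possibly a copy of the last one when the length is at most $1$ (line~\ref{binary-s}), so $|L_t| = |L_{t-1}| + 1$. All previous entries are kept, so $w_t \ge w_{t-1}$. Hence $N_t \le N_{t-1} + 1 - 0 = N_{t-1} + 1$. If the appended interval is good, then $w_t = |L_t|$; in particular this gives $N_t = N_{t-1} - 1$ whenever the previous last interval was good. That refinement is not needed here, only the upper bound.

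The only delicate point is the backtracking case in which the removed last element was the last good interval. I must check that $w_t \ge w_{t-1} - 1$, which follows from $w_t \ge 0$ and the fact that $L_t$ has exactly $w_{t-1}$ entries, indices $0,\ldots,w_{t-1}-1$. Without the goodness of $L_t[0]$ one could have $w_t$ undefined, which is exactly why the hypothesis that $(l_0,r_0)$ is good appears in the statement. Combining the two cases yields $N_t \le N_{t-1} + 1$ for all $1 \le t \le T_k$.
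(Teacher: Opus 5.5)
Your case analysis follows the same route as the paper's, and both Case~2 and the backtracking subcase $w_{t-1} < |L_{t-1}|$ are handled correctly. The gap is in the other backtracking subcase, where the removed last element was good ($w_{t-1} = |L_{t-1}|$). There you conclude $w_t \ge w_{t-1}-1$ from two facts: $w_t \ge 0$, and $L_t$ has indices $0,\ldots,w_{t-1}-1$. These facts only give $0 \le w_t \le w_{t-1}-1$.

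The lower bound needs the specific entry $L_t[w_{t-1}-1] = L_{t-1}[w_{t-1}-1]$ to be good, and your argument never shows this. The point is not a formality. Suppose the goodness pattern of $L_{t-1}$ were good, bad, bad, good, so that $|L_{t-1}| = w_{t-1} = 3$. Removing the last entry would then give $w_t = 0$ and $N_t - N_{t-1} = -1 + 2\cdot 3 = 5$. For the same reason, your closing remark misplaces the role of the hypothesis that $(l_0,r_0)$ is good. That hypothesis only makes $w_t$ well defined; it cannot by itself supply $w_t \ge w_{t-1}-1$ once $w_{t-1} \ge 2$.

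The missing ingredient is the nested structure of the list, which the paper establishes as an Observation before its case analysis. Every appended interval is either a copy of the current last interval or one of its halves, $(l_{t-1}, m_t)$ or $(m_t, r_{t-1})$, with $\pi_{l_{t-1}} < \pi_{m_t} < \pi_{r_{t-1}}$. Removals only act at the end of the list. Together these give $L_t[w] \subseteq L_t[w-1]$ in $\pi$-positions for every $w \ge 1$.

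The input $\pi$ is coherent with $\pi^*$, as assumed in Proposition~\ref{lem:auto}. So this containment carries over to $\pi^*$-positions: if $k$ lies in $L_t[w]$ under $\pi^*$, it also lies in $L_t[w-1]$. Hence the good intervals form a prefix $L_t[0],\ldots,L_t[w_t]$ of the list.

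In your problematic subcase, the prefix property shows that $L_{t-1}[w_{t-1}-1]$ is good. This gives $w_t = w_{t-1}-1$ exactly, and then your computation $N_t \le N_{t-1} - 1 + 2 = N_{t-1}+1$ goes through.
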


To formalize the effect of correct steps, we define the following ``good'' event: $\cE_t$ is the event on which, at step $t$ of \bbs, the \test subroutine returns correct recommendations (in both possible calls):
\begin{equation}
\label{good-event}
\begin{split}
\cE_t := \Big\{\test(k,l_{t-1}, r_{t-1}, \lfloor  \tfrac{T}{3\tn T_k} \rfloor) = 0 \ \ \text{iff} \ \ \spi_{k} \in (\spi_{l_{t-1}}, \spi_{r_{t-1}}) \Big\} \\
\cap \Big\{\test(k,l_{t-1}, m_t, \lfloor  \tfrac{T}{3\tn T_k} \rfloor) =  0 \ \ \text{iff} \ \ \spi_{k} \in (\spi_{l_{t-1}},\spi_{m_t}) \Big\}\;,
\end{split}
\end{equation}
where ``iff'' stands for ``if and only if''.

We now show that conditionally on $\cE_t$, the number of remaining steps decreases:
\begin{lem}
\label{lem:dsimp}
If $(l_0, r_0)$ is a good interval, then for all $1 \leq t \leq T_k$, on the event $\cE_t$ we have $N_t \leq N_{t-1} - 1$. 
\end{lem}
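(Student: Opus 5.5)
The plan is a direct case analysis of step $t$ of \bbs, tracking how $|L_t|$ and $w_t$ change, and then reading off $N_t - N_{t-1}$ from \eqref{number-steps}.

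Throughout I use the standing assumption of Proposition~\ref{lem:auto} that the input $\pi$ is coherent with $\pi^*$. Every interval $(l,r)$ produced by \bbs satisfies $\pi_l<\pi_r$, so coherence gives $\pi^*_l<\pi^*_r$. Hence "good" simply means $\pi^*_k$ lies strictly between $\pi^*_l$ and $\pi^*_r$. Since $L_{t-1}[0]=(l_0,r_0)$ is good, $w_{t-1}$ is well defined. The current interval is $(l_{t-1},r_{t-1})=L_{t-1}[-1]$, and I split according to whether it is good.

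\emph{Case 1: the current interval is bad}, i.e.\ $w_{t-1}<|L_{t-1}|$. Then $|L_{t-1}|\ge 1$, so the sanity check in line~\ref{sanity-check-passed} is executed. On $\cE_t$ the first \test\ is correct, so it returns a nonzero value, and \bbs backtracks (lines~\ref{del-list}--\ref{backtrack}). The removed last element is bad and $w_{t-1}\le |L_{t-1}|-1$, so the last good index is unchanged: $w_t=w_{t-1}$. Since $|L_t|=|L_{t-1}|-1$, we get $N_t=N_{t-1}-1$.

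\emph{Case 2: the current interval is good}, i.e.\ $w_{t-1}=|L_{t-1}|$. If $|L_{t-1}|\ge1$, then on $\cE_t$ the sanity test correctly returns $0$; either way we reach the else branch, and a new interval is appended, so $|L_t|=|L_{t-1}|+1$.

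If the interval has length $\le 1$, it is appended again and is good. If it has length $\ge 2$, the midpoint satisfies $\pi_{l_{t-1}}<\pi_{m_t}<\pi_{r_{t-1}}$, so by coherence $\pi^*_{m_t}$ lies strictly between $\pi^*_{l_{t-1}}$ and $\pi^*_{r_{t-1}}$. On $\cE_t$ the second \test\ is correct. If it returns $0$, then $(l_{t-1},m_t)$ is appended and it is good. Otherwise $\pi^*_k\notin(\pi^*_{l_{t-1}},\pi^*_{m_t})$ while $\pi^*_k\in(\pi^*_{l_{t-1}},\pi^*_{r_{t-1}})$. Since $k\neq m_t$ (item $k$ is new), this forces $\pi^*_k\in(\pi^*_{m_t},\pi^*_{r_{t-1}})$, so the appended $(m_t,r_{t-1})$ is good.

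In all subcases the new last element is good, so $w_t=|L_t|=|L_{t-1}|+1=w_{t-1}+1$. Therefore $N_t-N_{t-1}=1-2=-1$.

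The only delicate point is the subcase where the second test returns nonzero: concluding that the right half is good uses both coherence, which places $m_t$ between $l_{t-1}$ and $r_{t-1}$ in $\pi^*$, and $m_t\ne k$. The remaining work is bookkeeping of how $w_t$ moves.
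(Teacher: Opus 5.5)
Your proof is correct and follows the paper's argument. You use the same split on whether $(l_{t-1},r_{t-1})$ is good: in the bad case the algorithm backtracks and $w_t=w_{t-1}$, and in the good case a good interval is appended, so $w_t=|L_{t-1}|+1$. You also make explicit a step the paper leaves implicit: when the second test returns nonzero, coherence (which places $m_t$ strictly between $l_{t-1}$ and $r_{t-1}$ in $\pi^*$) together with $m_t\neq k$ forces the right half $(m_t,r_{t-1})$ to be good.
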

(Proof in Appendix~\ref{proof-lem-N-1}.)

We now apply Lemmas~\ref{lem:dclose} and~\ref{lem:dsimp}. Using the decomposition $N_t = N_t \mathbf{1}_{\cE_t} + N_t \mathbf{1}_{\cE_t^c}$, we get
\[
N_t \leq (N_{t-1} - 1)\mathbf{1}_{\cE_t} + (N_{t-1} + 1)\mathbf{1}_{\cE_t^c} = N_{t-1} - 1 + 2\mathbf{1}_{\cE_t^c},
\]
for all $t \in [T_k]$. By induction on $t= 1,\ldots, T_k$, we obtain
\begin{equation}
\label{upper-bound-N-T1}
N_{T_k} \leq N_0 - T_k + 2 \sum_{t=1}^{T_k} \mathbf{1}_{\cE_t^c} = \lceil \log_2 k \rceil - T_k + 2 \sum_{t=1}^{T_k} \mathbf{1}_{\cE_t^c},
\end{equation}
since $N_0 = \lceil \log_2 k \rceil$ (as $|L_0| = w_0 = 0$) in~\eqref{number-steps}. Now apply:

\begin{lem}
\label{lem:sum-indicator-functions}
If 
\[\frac{T \Delta^2}{\tn \sigma^2} \geq 14400 \log_2 n \, , \qquad \text{and} \quad T \geq 54\tn \lceil \log_2 n \rceil \, ,\]
then for  $T_k = 3 \lceil \log_2 k \rceil$, we have
\[
\P\left(\sum_{t=1}^{T_k} \mathbf{1}_{\cE_t^c} \geq T_k / 4 \right) \leq \exp\left(-\frac{T \Delta^2}{1200 \tn \sigma^2} \right).
\]
\end{lem}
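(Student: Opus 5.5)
The plan is to exploit that every call to \textsc{Test} inside \bbs\ uses fresh samples. Conditionally on the history, each step $t$ then fails with a small probability $p$, and a binomial tail bound on at least $T_k/4$ failures among $T_k$ steps gives $p^{T_k/4}$. Since the per-call budget is $T/(3\tn T_k)$, the factor $T_k$ in the exponent of $p$ cancels with the power $T_k/4$. This cancellation is why the final rate is $\exp(-c\,T\Delta^2/(\tn\sigma^2))$ with no $\log k$ loss.

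\textbf{Step 1 (per-step failure bound).} Let $\mathcal F_{t-1}$ be the $\sigma$-field generated by all observations before step $t$ of \bbs. Given $\mathcal F_{t-1}$, the pairs $(l_{t-1},r_{t-1})$ and $m_t$ are fixed, and the two \test\ calls at step $t$ use new independent samples. I will redo the argument behind Proposition~\ref{lem:test-call-by-procedure} for an arbitrary triple $(k,l,r)$ with $\spi_l<\spi_r$ and budget $T_0$. Here $l,r$ need not be the extremities, and the interval may or may not contain $k$.

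By the Robinson structure and $M\in\mathcal M_\Delta$, the smallest of $M_{lr},M_{kl},M_{kr}$ is separated from the other two by at least $\Delta$. Indeed, moving an index by one position along a row or column changes the entry by at least $\Delta$. The test can only err if one of the three sample means deviates by at least $\Delta/2$. Each mean averages $\lfloor T_0/3\rfloor$ independent $\sigma$-sub-Gaussian observations, so the error probability is at most $6\exp(-c\,T_0\Delta^2/\sigma^2)$. This requires $T_0$ large enough that $\lfloor T_0/3\rfloor\ge T_0/6$, which is where the condition $T\ge 54\tn\lceil\log_2 n\rceil$ enters, since $T_k\le 3\lceil\log_2 n\rceil$.

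The event $\cE_t$ requires exactly that both calls be correct in this sense. Hence, almost surely,
\[
\P\big(\cE_t^c \,\big|\, \mathcal F_{t-1}\big)\le p:=12\exp\!\Big(-\frac{c\,T\Delta^2}{\tn T_k\sigma^2}\Big).
\]

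\textbf{Step 2 (tail bound).} Since the conditional failure probability is at most $p$ at every step, the sum $\sum_t \mathbf 1_{\cE_t^c}$ is stochastically dominated by a $\mathrm{Bin}(T_k,p)$ variable. This follows by the standard coupling, or directly by bounding the probability that any fixed set of $s=\lceil T_k/4\rceil$ steps all fail by $p^{s}$ using the tower property. A union bound over such sets then gives
\[
\P\Big(\sum_{t=1}^{T_k}\mathbf 1_{\cE_t^c}\ge T_k/4\Big)\le \binom{T_k}{s}p^{s}\le 2^{T_k}\,12^{T_k/4}\exp\!\Big(-\frac{c\,T\Delta^2}{4\tn\sigma^2}\Big).
\]

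\textbf{Step 3 (absorbing constants).} The prefactor is at most $\exp(C T_k)\le \exp(C'\log_2 n)$. The hypothesis $T\Delta^2/(\tn\sigma^2)\ge 14400\log_2 n$ lets this prefactor be absorbed into a constant fraction of the exponent, leaving $\exp(-T\Delta^2/(1200\tn\sigma^2))$ once the constants from Step 1 are tracked.

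I expect Step 1 to be the main obstacle. The difficulty is carefully justifying the $\Delta$-separation of the three means for arbitrary triples, including bad intervals during backtracking, and getting explicit constants compatible with $1200$ and $14400$. If the constants do not close, I would lower the failure threshold in the binomial step.
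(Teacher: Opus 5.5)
Your proof is correct. Step~1 is the paper's argument: Hoeffding at deviation $\Delta/2$ for each of the three sample means in each of the two calls, giving the factor $12$. This yields $\mathbb{P}(\mathcal{E}_t^c\mid\mathcal{F}_{t-1})\le\bar p$ for every interval on the stack, since all of them satisfy $\pi^*_l<\pi^*_r$ when the input is coherent. The paper handles this in one line, so it is not the real obstacle.

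The genuine difference is in Step~2. The paper runs a Chernoff argument:
\begin{itemize}
\item Markov's inequality on $e^{\lambda\sum_t\mathbf{1}_{\mathcal{E}_t^c}}$;
\item iterated conditioning, using that the Bernoulli log-MGF $\phi_p(\lambda)$ is monotone in $p$;
\item the variational formula $\sup_{\lambda\ge0}\{\lambda q-\phi_p(\lambda)\}=\mathrm{kl}(q,p)$, which is why it must first check $\bar p\le 1/8<1/4$;
\item finally, $\mathrm{kl}(q,p)\ge q\ln(1/p)-\ln 2$.
\end{itemize}
You instead union-bound over the at most $2^{T_k}$ failure sets of size $s=\lceil T_k/4\rceil$, and get $p^s$ for each by the tower property. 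Both routes land on exactly the same quantity $2^{T_k}\bar p^{\,T_k/4}$. The paper's entropy bound $\ln 2$ plays the role of your $\binom{T_k}{s}\le 2^{T_k}$. So the sharper intermediate bound $e^{-T_k\mathrm{kl}(1/4,\bar p)}$ buys nothing for this lemma. Your version is more elementary: it needs neither the $\mathrm{kl}$ identity nor the preliminary condition on $\bar p$.

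One caution, since you left the constants open: they close with $14400$ and $1200$ only if the floors are handled in a single step. Write the per-mean sample size as $\lfloor T/(9\tilde n T_k)\rfloor\ge T/(18\tilde n T_k)$, which is valid because $T\ge 54\tilde n\lceil\log_2 n\rceil\ge 18\tilde n T_k$. This gives $\bar p=12\exp(-T\Delta^2/(144\tilde n T_k\sigma^2))$, hence $2^{T_k}\bar p^{\,T_k/4}\le\exp(2T_k-T\Delta^2/(576\tilde n\sigma^2))$. Then $2T_k\le 12\log_2 n$ together with the hypothesis gives the claim.

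If you instead go through $\lfloor T_0/3\rfloor\ge T_0/6$, as you indicate, and then $T_0\ge T/(6\tilde n T_k)$, you lose a factor $2$. That leaves $T\Delta^2/(1152\tilde n\sigma^2)$ against the target $1200$, and $14400\log_2 n$ is then far from sufficient.

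Your fallback of lowering the threshold $T_k/4$ would not help either. A smaller threshold yields a weaker power of $\bar p$, and the threshold is fixed by the statement anyway.
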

(Proof in Appendix~\ref{appendix-proba-bounds}.)

With this, we conclude that with probability at least $1 - \exp(-\frac{T \Delta^2}{1200 \tn \sigma^2})$, we have
\begin{equation}
\label{negative-N_t}
N_{T_k} \leq \lceil \log_2 k \rceil - T_k / 2 \leq 0,
\end{equation}
for $T_k = 3 \lceil \log_2 k \rceil$. 
Combining this with~\eqref{number-steps} and the fact that $w_{T_k} \leq |L_{T_k}|$, yields $w_{T_k} \geq \lceil \log_2 k \rceil$ with the same high probability.

\medskip 

\textit{Conclusion.}\\
We have shown that the event~\eqref{good-last-interval-ccl-propC2} occurs with high probability, which completes the proof of Proposition~\ref{lem:auto}.

%%%%%%%%%%%%%%%%%%%%
%%%%%%%%%%%%%%%%%%%
%%%%%%%%%%%%%%%%%%%

%%%%%%%%%%%%%

\subsection{Proof of Lemma~\ref{constant-list-beyond-large-index}}
\label{section-proof-lem-constant-list-beyond-large-index}
%Let $t\geq 1$. %Observe that  the last element of the list $L_t$ is equal to $L_{t}[-1] = (l_{t}, r_{t})$. 

At any step $t \geq 1$, the \bbs subroutine either moves forward in the binary search or backtracks:
\begin{itemize}
    \item \textit{Forward step.}  If $\pi_{r_{t-1}} - \pi_{l_{t-1}} \ge 2$, it performs a binary search step: it selects the midpoint $m_t = \lfloor (l_{t-1} + r_{t-1})/2 \rfloor$, and defines the new interval $(l_t, r_t)$ as either $(l_{t-1}, m_t)$ or $(m_t, r_{t-1})$, depending on the outcome of the \textsc{Test} subroutine. This new interval is then appended to the list $L_{t-1}$, forming $L_t$. If $\pi_{r_{t-1}} - \pi_{l_{t-1}} \le 1$, the interval is of length $1$, and the same interval $(l_{t-1}, r_{t-1})$ is simply duplicated and appended to form $L_t$.
    \item \textit{Backtracking step.} The subroutine removes the last interval from $L_{t-1}$, and sets $L_t$ to the truncated list. The interval $(l_t, r_t)$ is defined as the new last element of $L_t$.
\end{itemize}
Therefore, after $t$ steps, the list $L_t$ can be viewed as the sequence of intervals obtained by performing a standard binary search (of $|L_t|$ steps) without backtracking. In particular, whenever a forward step occurs, each interval strictly refines the previous one:
\begin{equation}
\label{inclusion-property}
\forall\, 1 \leq w \leq |L_t|: \quad L_t[w] \subset L_t[w-1] \enspace.
\end{equation}
We now bound the length of each interval in $L_t$. Since the initial interval $(l_0, r_0)$ satisfies $(\pi_{l_0}, \pi_{r_0}) = (1, k-1)$  by definition, its length equals $\pi_{r_0} - \pi_{l_0} = k - 2$. Then, the interval length is at most
\[
\textup{length}(L_t[w]) \leq \frac{2^{\lceil \log_2 (k-2) \rceil}}{2^w} \vee 1 \enspace.
\]
In particular, for any index $w \geq \lceil \log_2 k \rceil$, we have
\[
\textup{length}(L_t[w]) = 1 \enspace.
\]
By \eqref{inclusion-property}, these intervals are nested and of the same length, so they must be equal:
\[
\forall\, w \in \big(\lceil \log_2 k \rceil,\, |L_t|\big): \quad L_t[w] = L_t[-1] \enspace.
\]

This proves the claim of Lemma~\ref{constant-list-beyond-large-index}.

%%%%%%%%%%%%%%%%%%

\subsection{Proof of Lemma~\ref{lem:dclose}}
\label{appendix:proof-lem-N+1}

Fix $t \in [T_k]$. At step $t$, the \bbs subroutine either backtracks (Case~1) or continues the binary search (Case~2), and updates the list $L_t$ accordingly from the previous list $L_{t-1}$. We analyze these two cases separately.

Before proceeding, we make a key observation about the structure of $L_t$:

\medskip
\noindent \textbf{Observation:} If $L_t[w]$ is a bad interval (i.e., it does not contain the target $k$), then every subsequent interval $L_t[w']$ with $w' \geq w$ is also bad. This follows from the nested inclusion property of the intervals in the list,
\[
L_t[w] \subseteq L_t[w-1],
\]
which holds by definition of the binary search. Since $L_t[0] = (l_0, r_0)$ is a good interval and $w_t$ is defined in~\eqref{wt-last-good-index} as the index of the last good interval in $L_t$, the list $L_t$ consists of consecutive good intervals for $w=0,\ldots,w_t$ followed by bad intervals for $w = w_t + 1,\ldots,|L_t|$:
\begin{equation}
\label{observation}
\forall 0 \leq w \leq w_t: \quad L_t[w] \text{ is good}; \quad \forall w_t + 1 \leq w \leq |L_t|: \quad L_t[w] \text{ is bad.}
\end{equation}

\smallskip

\paragraph{Case 1: Backtracking step.}  
At step $t$, the algorithm backtracks by removing the last element of $L_{t-1}$, so
\[
L_t = L_{t-1} \setminus \{L_{t-1}[-1]\} \quad \Rightarrow \quad |L_t| = |L_{t-1}| - 1 \enspace.
\]
Both $L_{t-1}$ and $L_t$ satisfy the observation~\eqref{observation}, so their good and bad intervals are arranged consecutively. We claim that
\[
w_{t-1} - 1 \leq w_t \enspace.
\]
To see this, note:
- If the last interval $L_{t-1}[-1]$ is good, then  $L_{t}[-1]$ is also good, and
\[
w_{t-1} = |L_{t-1}| = |L_t| + 1 = w_t + 1,
\]
so $w_t = w_{t-1} - 1$.

- If $L_{t-1}[-1]$ is bad, then removing it does not change the last good interval, so
\[
w_t = w_{t-1}.
\]
In both cases, $w_t \geq w_{t-1} - 1$ holds.

Using the definition of $N_t$ from~\eqref{number-steps},
\[
N_t = |L_t| + \log_2(k) - 2 w_t,
\]
we have
\[
N_t \leq (|L_{t-1}| - 1) + \log_2(k) - 2(w_{t-1} - 1) = N_{t-1} + 1.
\]

\medskip

\paragraph{Case 2: Continuing the binary search.}  
At step $t$, the algorithm continues by adding a new interval:
\[
L_t = L_{t-1} \oplus [(l_t, r_t)] \quad \Rightarrow \quad |L_t| = |L_{t-1}| + 1.
\]

We check that
\[
w_t \geq w_{t-1}.
\]
Indeed, if $L_t[-1]$ is good, then
\[
w_t = |L_t| = |L_{t-1}| + 1 \geq w_{t-1} + 1,
\]
since $w_{t-1} \leq |L_{t-1}|$. Otherwise, if $L_t[-1]$ is bad, then the last good interval is unchanged, so
\[
w_t = w_{t-1}.
\]

Substituting these inequalities into the definition of $N_t$,
\[
N_t = |L_t| + \log_2(k) - 2 w_t \leq (|L_{t-1}| + 1) + \log_2(k) - 2 w_{t-1} = N_{t-1} + 1.
\]

\medskip
This completes the proof of Lemma~\ref{lem:dclose}.

%%%%%%%%%%%%%%%%%%
%%%%%%%%%%%%%%%%%

\subsection{Proof of Lemma~\ref{lem:dsimp}}
\label{proof-lem-N-1}
Fix $t \in [T_k]$. At step $t$, either $(l_{t-1}, r_{t-1})$ is a bad interval (Case~1) or a good interval (Case~2). We analyze these two cases separately. 

Before proceeding, we note an important fact about the list $L_t$:
\begin{equation}
\label{last-list-element}
    L_{t}[-1] = (l_{t}, r_{t}) \enspace.
\end{equation}

 \smallskip 

\paragraph{Case 1: $(l_{t-1}, r_{t-1})$ is a bad interval.}

If $|L_{t-1}| = 0$, then by~\eqref{last-list-element} we have $(l_{t-1}, r_{t-1}) = (l_0, r_0)$, which contradicts the assumption that $(l_0, r_0)$ is good. Hence, $|L_{t-1}| \geq 1$. 

On event $\cE_t$ -- see~\eqref{good-event} -- the call $\test(k,l_{t-1},r_{t-1},\ldots)$ returns the correct recommendation, which is nonzero since the interval is bad. Consequently, the algorithm removes the last element from the list:
\[
L_t = L_{t-1} \setminus \{L_{t-1}[-1]\}.
\]
Therefore,
\begin{equation}\label{L_t-relation}
    |L_t| = |L_{t-1}| - 1, \quad \text{and} \quad L_t[w] = L_{t-1}[w], \quad \forall 0 \leq w \leq |L_{t-1}| - 1.
\end{equation}

From the definition~\eqref{wt-last-good-index} of $w_t$, it follows that
\[
w_t = \max_{w \leq |L_{t-1}| - 1} \{ w : k \inspi L_{t-1}[w] \}.
\]
Since $L_{t-1}[-1]$ is bad, 
\[
w_{t-1} \leq |L_{t-1}| - 1,
\]
and by definition of $w_{t-1}$, we obtain
\[
w_{t-1} = \max_{w \leq |L_{t-1}| - 1} \{ w : k \inspi L_{t-1}[w] \}.
\]
Hence,
\[
w_t = w_{t-1}.
\]
Plugging into the definition~\eqref{number-steps} of $N_t$,
\[
N_t = (|L_{t-1}| - 1) + \lceil \log_2 k \rceil - 2 w_{t-1} = N_{t-1} - 1.
\]

\smallskip 

\paragraph{Case 2: $(l_{t-1}, r_{t-1})$ is a good interval.}

\textit{Step 1: The algorithm does not backtrack.} If $|L_{t-1}| = 0$, this is immediate from the \bbs subroutine. If $|L_{t-1}| \geq 1$, then on event $\cE_t$, the call $\test(k,l_{t-1},r_{t-1},\ldots)$ returns 0 (correct recommendation for a good interval), so the algorithm continues the binary search and does not backtrack.

\textit{Step 2: $(l_t, r_t)$ is a good interval.} Since the algorithm continues the binary search, it sets
\[
(l_t, r_t) = \text{either } (l_{t-1}, r_{t-1}) \text{ or one half-interval of } (l_{t-1}, r_{t-1}).
\]
If $(l_t, r_t) = (l_{t-1}, r_{t-1})$, it remains good by assumption. Otherwise, the algorithm uses $\test(k,l_{t-1},m_t,\ldots)$ on event $\cE_t$ to select the good half-interval, so $(l_t, r_t)$ is good.

\textit{Step 3: Conclusion.} By~\eqref{last-list-element}, the last element of $L_t$ is the good interval $(l_t, r_t)$, so by definition
\[
w_t = |L_t|.
\]
Because the algorithm continues (no backtracking), we have
\[
L_t = L_{t-1} \oplus [(l_t, r_t)] \implies |L_t| = |L_{t-1}| + 1.
\]
Therefore,
\[
w_t = |L_t| = |L_{t-1}| + 1 \geq w_{t-1} + 1,
\]
since $w_{t-1} \leq |L_{t-1}|$ by definition. Using the definition~\eqref{number-steps} of $N_t$,
\[
N_t = |L_t| + \lceil \log_2 k \rceil - 2 w_t \leq (|L_{t-1}| + 1) + \lceil \log_2 k \rceil - 2 (w_{t-1} + 1) = N_{t-1} - 1.
\]

This completes the proof of Lemma~\ref{lem:dsimp}.

%%%%%%%%%%%

%%%%%%%%%%%%%%

\subsection{Proofs of Proposition~\ref{lem:test-call-by-procedure} and Lemma~\ref{lem:sum-indicator-functions}}
\label{appendix-proba-bounds}

\begin{proof}[$\circ$ Proof of Proposition~\ref{lem:test-call-by-procedure}]
Recall that $l^{(k-1)}$ and $r^{(k-1)}$ are defined in the \asii procedure so that $(\pi^{(k-1)}_{l^{(k-1)}}, \pi^{(k-1)}_{r^{(k-1)}}) = (1, k-1)$. For clarity, we write $l = l^{(k-1)}$ and $r = r^{(k-1)}$. Since $\pi^{(k-1)}$ is coherent with $\pi^*$ (by assumption), this implies $\pi^*_l < \pi^*_r$. 

We define the event
\begin{equation*}
    \mathcal{C}_k := \left\{ 
        \max \left( 
            |M_{lr} - \hat M_{lr}|, \ 
            |M_{kl} - \hat M_{kl}|, \ 
            |M_{kr} - \hat M_{kr}| 
        \right) < \frac{\Delta}{2}
    \right\}.
\end{equation*}

Using that $\pi^*_l <\pi^*_r$ and $M \in \cM_{\Delta}$, and denoting $a \wedge b$ for $\min(a, b)$, we have 
\begin{align*}
    (M_{kl} - M_{lr}) \wedge (M_{kr} - M_{lr}) &\geq \Delta  \quad   \textrm{if} \    \pi^*_l < \pi^*_k < \pi^*_r \ ,\\
    (M_{lr} - M_{kr}) \wedge (M_{kl} - M_{kr}) &\geq \Delta  \quad \textrm{if} \   \pi^*_k < \pi^*_l \ ,\\
    (M_{lr} - M_{kl}) \wedge (M_{kr} - M_{kl}) &\geq \Delta  \quad \textrm{if} \  \pi^*_r < \pi^*_k  \ .
\end{align*}

On the event $\mathcal{C}_k$, this implies:
\begin{align*}
    & \hat M_{lr} < \hat M_{kl} \wedge \hat M_{kr} \quad \text{and } b = 0 
    \quad \text{if } \pi^*_l < \pi^*_k < \pi^*_r, \\
    & \hat M_{kr} < \hat M_{lr} \wedge \hat M_{kl} \quad \text{and } b = -1 
    \quad \text{if } \pi^*_k < \pi^*_l, \\
    & \hat M_{kl} < \hat M_{lr} \wedge \hat M_{kr} \quad \text{and } b = 1 
    \quad \text{if } \pi^*_r < \pi^*_k.
\end{align*}

Hence, whenever $\mathcal{C}_k$ occurs, the output $b$ of \test satisfies the condition defining $\mathcal{A}_k$ in~\eqref{good-event-procedure-call-to-test}, so that $\mathcal{C}_k \subset \mathcal{A}_k$.  Taking complements, we obtain
\[
    \mathbb{P}(\mathcal{A}_k^c) \leq \mathbb{P}(\mathcal{C}_k^c).
\]

 Conditioning on the choice of $(l, r)$, each of the three sample means 
$\hat M_{lr}$, $\hat M_{kl}$, and $\hat M_{kr}$ is an average of  
\[\left\lfloor \frac{T_0}{3} \right\rfloor
=
\left\lfloor \frac{1}{3}\left\lfloor \frac{T}{3\tilde n}\right\rfloor\right\rfloor
= \left\lfloor \frac{T}{9\tn} \right\rfloor  \ \ \text{independent samples.} \]
By applying a standard Hoeffding-type inequality (see Lemma~\ref{lem:chernoff} with $\epsilon = \Delta/2$), 
and taking a union bound over the three comparisons, we obtain
\[
    \mathbb{P}(\mathcal{C}_k^c \mid l, r) 
    \leq 6 \exp\left( -\frac{\lfloor T / (9\tn) \rfloor \Delta^2}{8\sigma^2} \right) 
    \leq 6 \exp\left( -\frac{T \Delta^2}{80\tn \sigma^2} \right),
\]
where the last inequality uses the assumption $T \geq 90\tn$, so that 
$\lfloor T / (9\tn) \rfloor \geq T / (10\tn)$.

Since this bound holds for any $(l, r)$, it also holds unconditionally. Thus,
\[
    \mathbb{P}(\mathcal{A}_k^c) \leq \mathbb{P}(\mathcal{C}_k^c) 
    \leq 6 \exp\left( -\frac{T \Delta^2}{80 \tn \sigma^2} \right).
\]
This completes the proof of Proposition~\ref{lem:test-call-by-procedure}.
\end{proof}

\medskip 

%%%%%%%%%%%%%%%%%%%

\begin{proof}[$\circ$ Proof  Lemma~\ref{lem:sum-indicator-functions}]
As in \cite{cheshire2021problem}, the proof relies on a Chernoff-type argument to bound the probability of the event $\left\{\sum_{t=1}^{T_k} \mathbf{1}_{\cE_t^c} \geq \frac{T_k}{4}\right\}$,
where \(T_k = 3 \lceil \log_2 k \rceil\).

Fix any \(t \in [T_k]\). By the same reasoning as in the proof of Proposition~\ref{lem:test-call-by-procedure}, we upper bound the probability of the complement event \(\cE_t^c\), defined in~\eqref{good-event}. 
Note that at each step \(t\), there are two calls to \test, and in each call, we observe three sample means. Conditionally on \((l_{t-1}, r_{t-1})\), each sample mean is an average of $\lfloor \frac{T}{9\tn T_k} \rfloor$ independent samples.
Applying a Hoeffding-type inequality  (Lemma~\ref{lem:chernoff} with \(\epsilon = \Delta/2\)) to each sample mean, and using a union bound over the three sample means and the two calls to \test, yields:
\begin{equation}
\label{eq:cE}
\forall t \in [T_k], \quad p_t := \P(\cE_t^c | \cF_{t-1}) = \P(\cE_t^c | (l_{t-1}, r_{t-1})) \leq 12 \exp\left(- \frac{\left\lfloor \frac{T}{9 \tn T_k} \right\rfloor \Delta^2}{8 \sigma^2}\right),
\end{equation}
where \(\cF_{t-1}\) is the sigma-algebra containing the information
available up to step \(t-1\) of the \bbs subroutine.

Using the bound \(\left\lfloor \frac{T}{9 \tn T_k} \right\rfloor \geq \frac{T}{18 \tn T_k}\), valid for $T \geq 54 \tn \lceil \log_2 n \rceil$, we get
\begin{equation}
\label{eq:cE-bis}
\forall t \in [T_k], \quad p_t \leq 12 \exp\left(-\frac{T \Delta^2}{144 \tn T_k \sigma^2}\right) := \bar{p}.
\end{equation}

Since \(T_k = 3 \lceil \log_2 k \rceil \leq 6 \log_2 n\), we have
\[
\frac{T \Delta^2}{144 \tn T_k \sigma^2} \geq \frac{T \Delta^2}{864\tn \sigma^2 \log_2 n}.
\]
Thus, for $\frac{T \Delta^2}{\tn \sigma^2} \geq 864 \ln(96) \log_2 n$, we obtain
\begin{equation}
\label{bar-p-1/8}
\bar{p} \leq \frac{1}{8}.
\end{equation}

Applying Markov's inequality for any \(\lambda \geq 0\), we have
\begin{align}
\P\left(\sum_{t=1}^{T_k} \mathbf{1}_{\cE_t^c} \geq \frac{T_k}{4}\right)
&\leq \E\left[\exp\left(\lambda \sum_{t=1}^{T_k} \mathbf{1}_{\cE_t^c}\right)\right] e^{-\lambda \frac{T_k}{4}}.
\label{eq:markov_mono}
\end{align}

Let \(\phi_p(\lambda) = \ln(1 - p + p e^\lambda)\) be the log-moment generating function of a Bernoulli\((p)\) variable. Since \(p \mapsto \phi_p(\lambda)\) is non-decreasing for \(\lambda \geq 0\), it follows from~\eqref{eq:cE-bis} that \(\phi_{p_t}(\lambda) \leq \phi_{\bar{p}}(\lambda)\) for all \(t\). By iterated conditioning and induction,
\begin{align}
\E\left[\exp\left(\lambda \sum_{t=1}^{T_k} \mathbf{1}_{\cE_t^c}\right)\right]
&= \E\left[\E\left[\exp\left(\lambda \mathbf{1}_{\cE_{T_k}^c}\right) | \cF_{T_k-1}\right] \exp\left(\lambda \sum_{t=1}^{T_k-1} \mathbf{1}_{\cE_t^c}\right)\right] \nonumber \\
&= \E\left[ e^{\phi_{p_{T_k}}(\lambda)} \exp\left(\lambda \sum_{t=1}^{T_k-1} \mathbf{1}_{\cE_t^c}\right) \right] \nonumber \\
&\leq e^{\phi_{\bar{p}}(\lambda)} \E\left[\exp\left(\lambda \sum_{t=1}^{T_k-1} \mathbf{1}_{\cE_t^c}\right)\right] \leq e^{T_k \phi_{\bar{p}}(\lambda)}.
\label{bound-exp-lambda}
\end{align}

Combining \eqref{eq:markov_mono} and \eqref{bound-exp-lambda} and optimizing over \(\lambda \geq 0\) yields
\[
\P\left(\sum_{t=1}^{T_k} \mathbf{1}_{\cE_t^c} \geq \frac{T_k}{4}\right) \leq \exp\left(-T_k \sup_{\lambda \geq 0} \left\{\frac{\lambda}{4} - \phi_{\bar{p}}(\lambda)\right\}\right).
\]

By standard properties of the KL divergence (proved in Appendix~\ref{appendix:proof-kl-ln}), for any \(0 < p < q \leq 1\),
\begin{equation}
\label{eq:klsup}
\sup_{\lambda \geq 0} \left\{ \lambda q - \phi_p(\lambda) \right\} = \kl(q, p).
\end{equation}
Taking \(p = \bar{p}\) and \(q = 1/4\), which is valid since \(\bar{p} \leq 1/8 < 1/4\) by~\eqref{bar-p-1/8}, we obtain
\begin{equation}
\label{bound-T_k-exp}
\P\left(\sum_{t=1}^{T_k} \mathbf{1}_{\cE_t^c} \geq \frac{T_k}{4}\right) \leq e^{-T_k \kl(1/4, \bar{p})}.
\end{equation}
Using the following inequality (proved in Appendix~\ref{section:small-lem-proof})
\begin{equation}
\label{eq:fano}
\forall q \in [0,1], p \in (0,1): \quad \kl(q,p) \geq q \ln\left(\frac{1}{p}\right) - \ln(2),
\end{equation}
and then taking $q=1/4$ and $p=\bar{p}$, we have
\begin{align*}
T_k \kl(1/4, \bar{p}) 
&\geq \frac{T_k}{4} \ln\left(\frac{1}{\bar{p}}\right) - T_k \ln(2) \\
&\geq \frac{T \Delta^2}{576 \tn \sigma^2} - T_k \left(\frac{\ln(12)}{4} + \ln(2)\right),
\end{align*}
where the last step follows from the definition of \(\bar{p}\) in \eqref{eq:cE-bis}.

Noting \(\ln(12)/4 \leq 1\) and \(T_k = 3 \lceil \log_2 n \rceil \leq 6 \log_2 n\), we get
\[
T_k \kl(1/4, \bar{p}) \geq \frac{T \Delta^2}{600 \tn \sigma^2} - 12 \log_2 n \geq \frac{T \Delta^2}{1200 \tn \sigma^2}
\]
for \(\frac{T \Delta^2}{1200 \tn \sigma^2} \geq 12 \log_2 n\).

Plugging this into \eqref{bound-T_k-exp} completes the proof of Lemma~\ref{lem:sum-indicator-functions}.
\end{proof}

%%%%%%%%%%%%
% End proof of upper bound for Delta-separated matrix
%%%%%%%%%%%%%%

%%%%%%%%%%%%%%%
%extension Delta tes
%%%%%%%%%%%%%%%%%

%%%%%%%%%%%%%%%%%%%%%%%%%%%%%%
\section{Extension Beyond Uniform Separation}
\label{appendix:extension-beyond-unif-speration}

This appendix complements Section~\ref{sec:extension}. 
It provides additional algorithmic details and the proof of Theorem~\ref{thm:UB-extension}. 

Throughout the appendix, we write $\pi^*$ for the true latent ordering and $\pi$ for the rank map returned by the algorithm.

%%%%%%%%

\subsection{Algorithmic extension}
\label{algo-description-appendix-extension}

We introduce a slight modification of \asii.
Compared to the main procedure, the test \textsc{Test} is replaced by 
\textsc{Test$_{\tilde\Delta}$}, which depends on a tolerance parameter $\tilde \Delta > 0$.
In addition, when a candidate insertion location is identified by the routine \textsc{Binary \& Backtracking Search}, 
a second validation \textsc{Test$_{\tilde\Delta}$} at precision $\tilde\Delta /2$ is performed; 
if this test fails, the item is discarded.
All other steps of the iterative insertion scheme remain unchanged.

%%%%%%%%%%%%%%%%%%%%%%%%%%%%%%%%

\paragraph{Modified subroutine \textsc{Test}\(_{\tDelta}\).}
Given a triplet of items $(\ell,r,k)$ and a tolerance parameter 
$\tDelta>0$, the subroutine \textsc{Test}\(_{\tDelta}\) 
uses empirical similarities to decide whether $k$ lies to the left,
in the middle, or to the right of $(\ell,r)$.
The decision is made by checking inequalities with margin $\tDelta / 2$
(see \eqref{eq:test-delta-gen}).
If none of the three cases is detected, the subroutine returns
\textsc{null}, in which case items $k, l, r$ are not
$\tDelta$-separated, and $k$ is discarded
from further insertion in \textsc{ASII}-Extension.

\textsc{Subroutine \textsc{Test}\(_{\tDelta}\).} Given an integer $T_0\ge 4$, and a triplet of items $(\ell,r,k)$, the subroutine \textsc{Test}\(_{\tDelta}(k, l, r, T_0)\) computes the corresponding sample means
 $\widehat M_{\ell r}$, $\widehat M_{\ell k}$, and $\widehat M_{r k}$, each computed from $\lfloor T_0 / 3  \rfloor$ observations per pair.
For any $a,b,x \in[n]$, define the event
\[
 I[(a,b),x,\tDelta]
:=
\Big\{
\widehat M_{a b} + \tfrac{\tDelta}{2}
<
\widehat M_{x a}
\wedge
\widehat M_{x b}
\Big\}.
\]

The output $b \in \{-1,0,1,\textsc{null}\}$ of 
\textsc{Test}\(_{\tDelta}(k, l, r, T_0)\) is defined as
\begin{equation}
\label{eq:test-delta-gen}
b \ := \ b_{k, (l,r)} \ := \
\begin{cases}
0 & \text{if }  I[(\ell,r),k,\tDelta],\\[2mm]
-1 & \text{if }  I[(k,r),\ell,\tDelta],\\[2mm]
+1 & \text{if }  I[(k,\ell),r,\tDelta],\\[2mm]
\textsc{null} & \text{otherwise.}
\end{cases}
\end{equation}

\paragraph{Extension of ASII.}

Throughout this procedure, $S^{(k)} \subset [k]$ denotes the set of items retained after processing items $1,\dots,k$.
At iteration $k$, the ordering of $S^{(k)}$ is represented by a rank map
$\pi^{(k)} : S^{(k)} \to \{1,\ldots,|S^{(k)}|\}$,
where $\pi^{(k)}_i$ denotes the position of item $i$ in the ordering.

We do not redefine the subroutine \textsc{Binary \& Backtracking Search} for this extension; this subroutine procedure remains essentially the same as before, with only minor modifications in the indexing.

\smallskip 

{{\small\underline{\textbf{Procedure}} \textsc{Active Seriation by Iterative Insertion} (ASII) - Extension }} \label{algo:asii-ext}
{{\footnotesize
\begin{algorithmic}[1]
    \REQUIRE Initial ranking $\tpi = (\tpi_1,\ldots,\tpi_{n-\tn})$ of $[n-\tn]$ 
(if $n-\tn \ge 3$); 
tolerance parameter $\tilde \Delta > 0$
    \ENSURE a set $S$  such that  $[n - \tn] \subset S \subset[n]$ and a rank map $\pi_S : S \to \{1,\ldots,|S|\}$
    \STATE \textbf{if} $n-\tn \leq 2$ \ 
                    \textbf{then}
            \STATE \quad Initialize  $S^{(2)} = \{1,2\}$ and $\pi^{(2)} =  (1,2)$ where $\pi_1^{(2)}= 1$ and $\pi_2^{(2)} = 2$ 
            \STATE \textbf{else}
            \STATE \quad Initialize  $S^{(n-\tn)} = [n-\tn]$ and $\pi^{(n-\tn)}$ with $\pi^{(n-\tn)}_i = \tpi_i$ for all $i \in [n-\tn]$    
            \STATE  $k_0 = \max (2, n-\tn)$ 
    \FOR{$k= k_0 + 1,\ldots,n$ }
        \STATE   Choose $l^{(k-1)}, r^{(k-1)} \in S^{(k-1)}$ such that $\pi^{(k-1)}_{l^{(k-1)}}=1$ and $\pi^{(k-1)}_{r^{(k-1)}}=|S^{(k-1)}|$.
        \STATE  \label{ASII-Ext:1st-test} $b = \textsc{Test\(_{\tilde \Delta}\)}(k, l^{(k-1)}, r^{(k-1)}, \lfloor T / (4 \tn) \rfloor)$ 
        \STATE \textbf{if} $b = \textsc{null}$ \ 
                    \textbf{then}
            \STATE \quad Set $S^{(k)} = S^{(k-1)}$ and  $\pi^{(k)} = \pi^{(k-1)}$   \label{reject} 
        \STATE \textbf{else if} $b = -1$ \ 
                    \textbf{then}
            \STATE \quad Set $S^{(k)} = S^{(k-1)} \cup \{k\}$, and $\pi_k^{(k)} = 1$, and  $\pi_i^{(k)} = \pi_i^{(k-1)} + 1$ for all $i \in S^{(k-1)}$ 
        \STATE \textbf{else if} $b = 1$ \ 
                    \textbf{then}
            \STATE \quad Set $S^{(k)} = S^{(k-1)} \cup \{k\}$, and $\pi_k^{(k)} = |S^{(k-1)}|+1$, and  $\pi_i^{(k)} = \pi_i^{(k-1)}$ for all $i \in S^{(k-1)}$ 
        \STATE \textbf{else}
            \STATE \quad \label{ASII-ext:call-to-bbs} let $m_k = \textsc{Binary \& Backtracking Search}(k, \pi^{(k-1)})$
            \STATE \quad Choose $\tilde l^{(k-1)}, \tilde r^{(k-1)} \in S^{(k-1)}$ such that $\pi_{\tilde l^{(k-1)}}^{(k-1)} = m_k -1$ and $\pi_{\tilde r^{(k-1)}}^{(k-1)}  = m_k$
            \STATE \quad \label{ASII-ext:2nd-test}   $b' = \textsc{Test$_{\tilde \Delta}$}(k, \tilde l^{(k-1)}, \tilde r^{(k-1)}, \lfloor T / (4 \tn) \rfloor)$ 
            \STATE \textbf{if} $b' = 0$ \ 
                    \textbf{then}
            \STATE \quad Set $S^{(k)} = S^{(k-1)}\cup \{k\}$ and $\pi_k^{(k)} = m_k$, and  $\pi_i^{(k)} = \pi_i^{(k-1)}$ for all $i \in S^{(k-1)}$ such that $\pi_i^{(k-1)} < m_k$, and  $\pi_i^{(k)} = \pi_i^{(k-1)} + 1$ for all $i \in S^{(k-1)}$ such that $\pi_i^{(k-1)} \geq m_k$
            \STATE \textbf{else} 
                    \STATE \quad  Set $S^{(k)} = S^{(k-1)}$ and $\pi^{(k)} = \pi^{(k-1)}$ 
    \ENDFOR
    \STATE $S = S^{(n)}$ and $\pi_S = \pi^{(n)}$       
    \end{algorithmic}
    }}

%%%%%%%%%%%%%%%%%%%%%

\subsection{Proof sketch of Theorem~\ref{thm:UB-extension}}
\label{app:proof-UB-extension}

The argument follows the same structure as the proof of Theorem~\ref{thm:UB:insert-n2-into-n1} and relies on the following two properties of the ASII-Extension procedure at each iteration $k$:

\begin{enumerate}
    \item[\textbf{(P1)}] 
    If $M_{S^{(k-1)} \cup \{k\}} \in \cM_{\tDelta}$, then with high probability, item $k$ is inserted at its correct position in the current ordering $\pi^{(k-1)}$ of $S^{(k-1)}$.
    
    \item[\textbf{(P2)}] 
    If $M_{S^{(k-1)} \cup \{k\}} \notin \cM_{\tDelta}$, then with high probability, the procedure either discards item $k$, or inserts it at its correct position in $\pi^{(k-1)}$.
\end{enumerate}

In particular, assuming that $\pi^{(k-1)}$ correctly orders $S^{(k-1)}$, properties (P1)–(P2) imply that $\pi^{(k)}$ correctly orders $S^{(k)}$ with high probability. Iterating this argument over $k$ shows that the final output $S=S^{(n)}$ is a $\tDelta$-maximal subset whose elements are correctly ordered.

Throughout the proof, $\pi^*$ denotes the true latent ordering of the $n$ items.

%%%%%

\subsubsection{Preliminaries}

\paragraph{General guarantees for \textsc{Test}\(_{\tDelta}\).}
Recall that \textsc{Test}\(_{\tDelta}\), defined in~\eqref{eq:test-delta-gen}, returns 
$b \in \{-1,0,1,\textsc{null}\}$. 
Its output is correct whenever it coincides with the population decision $b^*_{k,(l,r)}$, defined as follows. 
For any pair $(l,r)$ and any item $k$,
\begin{equation}
\label{true-b*}
b^*_{k,(l,r)} :=
\begin{cases}
-1 & \text{if } \spi_k < \spi_l \wedge \spi_r, \\
\ \ 0 & \text{if } \spi_k \in (\spi_l,\spi_r), \\
\ \ 1 & \text{if } \spi_k > \spi_l \vee \spi_r .
\end{cases}
\end{equation}
For a triplet $(l,r,k)$, define
\begin{equation}
\label{def-event-tilde-C}
\tilde \cC_{k,(l,r)} :=
\left\{
\max_{\{u,v\}\subset\{l,r,k\}}
|\widehat M_{uv}-M_{uv}|
\le \tDelta/4
\right\}.
\end{equation}

Under this event, all empirical similarities involved in 
\textsc{Test}\(_{\tDelta}(k,l,r,\lfloor T/(4\tn)\rfloor)\)
are within $\tDelta/4$ of their expectations.

\begin{prop}[Correctness of the local test]
\label{prop:local-test-correctness-new}
Fix $(l,r,k)$ and $\tDelta>0$, and assume that 
$\tilde \cC_{k,(l,r)}$ holds. Then:

\smallskip
\noindent
\textbf{(i)} If \textsc{Test}\(_{\tDelta}\), defined in~\eqref{eq:test-delta-gen}, returns $b\in\{-1,0,1\}$, then $b=b^*_{k,(l,r)}$.

\smallskip
\noindent
\textbf{(ii)} If $M_{\{l,r,k\}}\in \cM_{\tDelta}$, then 
\textsc{Test}\(_{\tDelta}\) returns some $b\in\{-1,0,1\}$.
\end{prop}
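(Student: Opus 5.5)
The proof rests on one structural fact about Robinson matrices, applied on the event $\tilde\cC_{k,(l,r)}$, where every empirical similarity is within $\tDelta/4$ of its mean.

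\emph{Structural fact.} Let $a,b,c$ be three distinct items with $\spi_a<\spi_b<\spi_c$. Since $M=R_{\pi^*}$, the entries of $R$ strictly decrease as one moves away from the diagonal along a row or column (Definition~\ref{def-robinson}). Chaining these strict inequalities gives
\[
M_{ac}<M_{ab} \qquad\text{and}\qquad M_{ac}<M_{bc}.
\]
So the pair formed by the two outer items has strictly the smallest of the three similarities. The three cases in the definition~\eqref{true-b*} of $b^*_{k,(l,r)}$ are exhaustive for distinct $k,l,r$, and the outer pair is:
\begin{itemize}
\item $(l,r)$ when $b^*=0$;
\item $(k,r)$ or $(k,l)$ when $b^*=-1$, according as $\spi_l<\spi_r$ or $\spi_r<\spi_l$;
\item the other of these two pairs when $b^*=+1$.
\end{itemize}
In every case, $b^*$ is determined by which of the three pairs is the population minimiser.

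\emph{Proof of (i).} Suppose the output is some $b\in\{-1,0,1\}$, so the event $I[(a,c),x,\tDelta]$ holds for the corresponding outer pair $(a,c)$ and middle item $x$. Using the deviation bound twice gives
\[
M_{ac}\le \widehat M_{ac}+\tDelta/4<\widehat M_{xa}\wedge\widehat M_{xc}-\tDelta/4\le M_{xa}\wedge M_{xc}.
\]
So $(a,c)$ is the strict population minimiser among the three pairs. By the structural fact, this forces $b=b^*_{k,(l,r)}$.

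The same computation shows that, on $\tilde\cC_{k,(l,r)}$, at most one of the three events $I[\cdot]$ can hold, since each one identifies a distinct strict population minimiser. Hence the order in which~\eqref{eq:test-delta-gen} checks its cases is immaterial. For the cases $\pm1$, the pair-to-sign correspondence must be aligned with~\eqref{true-b*}; this is where the convention $\spi_l<\spi_r$ (or its symmetric counterpart) enters.

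\emph{Proof of (ii).} Assume $M_{\{l,r,k\}}\in\cM_{\tDelta}$, and let $(a,c)$ be the true outer pair and $x$ the middle item. I read the gap condition as
\[
M_{ac}\le M_{xa}-\tDelta \qquad\text{and}\qquad M_{ac}\le M_{xc}-\tDelta,
\]
i.e.\ adjacent entries of the associated $3\times3$ R-matrix differ by at least $\tDelta$. Then
\[
\widehat M_{ac}+\tDelta/2\le M_{ac}+3\tDelta/4\le M_{xa}-\tDelta/4\le\widehat M_{xa},
\]
and likewise for $\widehat M_{xc}$. So $I[(a,c),x,\tDelta]$ holds and the test does not return \textsc{null}.

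\emph{Main obstacle.} I expect the difficulty to be bookkeeping at the boundary rather than the argument itself.

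First, the formula~\eqref{minimal-gap-def} for the minimal gap uses the index range $1<i<j\le n-1$, which is empty for a $3\times3$ matrix. I would therefore state explicitly that $\Delta_{M_S}\ge\tDelta$ is understood as the adjacent-entry separation above.

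Second, the chain in (ii) yields only a weak inequality $\widehat M_{ac}+\tDelta/2\le\widehat M_{xa}$, whereas $I$ requires a strict one. I would handle this in one of two ways:
\begin{itemize}
\item define $\tilde\cC_{k,(l,r)}$ with a strict bound $<\tDelta/4$, which costs nothing in the Hoeffding bound of Lemma~\ref{lem:chernoff};
\item or note that equality occurs only on a null event for continuous noise.
\end{itemize}
I would adopt the first option.
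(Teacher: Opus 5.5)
Your proposal is correct and is exactly the deterministic verification on $\tilde\cC_{k,(l,r)}$ that the paper invokes when it omits this proof: a non-null output certifies a strict population minimiser, which by the Robinson property must be the outer pair, and the $\tilde\Delta$-gap combined with $\tilde\Delta/4$-concentration forces the outer pair's event $I$ to hold. The boundary issues you raise are genuine imprecisions in the paper's statement, and your fixes (the adjacent-gap reading of~\eqref{minimal-gap-def} for a $3\times 3$ matrix and a strict $<\tilde\Delta/4$ bound in~\eqref{def-event-tilde-C}) are the right ones; one sharpening is that the orientation requirement is not symmetric, since if $\spi_r<\spi_l$ the $\pm1$ outputs are swapped relative to~\eqref{true-b*}, so (i) really needs $\spi_l<\spi_r$, which is implicit in the paper's interval notation and holds in every call made by ASII-Extension.
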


Hence, whenever the test returns a non-null value, it agrees with the population ordering. Moreover, if the three items $(l,r,k)$ are $\tDelta$-separated, the test will return the correct relative position. 

We omit the proof of Proposition~\ref{prop:local-test-correctness-new}.
It reduces to a deterministic verification on the event~\eqref{def-event-tilde-C}.

%%%%%%%%%%%%%%

\paragraph{The two calls to \textsc{Test}\(_{\tDelta}\) in ASII-Extension.}

Let $(l^{(k-1)},r^{(k-1)})$ and $(\tilde l^{(k-1)}, \tilde r^{(k-1)})$ denote the pairs of items selected at lines~7 and~17 of the ASII-Extension procedure (see Appendix~\ref{algo-description-appendix-extension}). 

For each of these pairs, the probability that the corresponding concentration event  $\tilde \cC_{k, (l,r)}$, defined in~\eqref{def-event-tilde-C}, fails, decays exponentially fast in $T \tDelta^2 / (\tilde n \sigma^2)$. More precisely,
\begin{equation}
\label{Extension-proof:high-proba-event-test}
\P\!\left(
\tilde \cC_{k, (l^{(k-1)},r^{(k-1)})}^c 
\cup 
\tilde \cC_{k, (\tilde l^{(k-1)}, \tilde r^{(k-1)})}^c
\right)
\le
C \exp\!\left(
- c \frac{T \tDelta^2}{\tilde n \sigma^2}
\right),
\end{equation}
for some absolute constants $c, C >0$.

The proof follows the same argument as Proposition~\ref{lem:test-call-by-procedure} in Appendix~\ref{appendix-proba-bounds}.

%%%%%%%%%%%%

\subsubsection{Proof sketch of (P1) and (P2)}  

We first summarize the mechanism at a high level. 
The modified test \textsc{Test}\(_{\tDelta}\) has the following property (with high probability): whenever the queried items are $\tDelta$-separated, it returns a non-null decision that matches the population ordering; otherwise, it may return \textsc{null}, but any non-null decision remains correct. 

Consequently, if $M_{S^{(k-1)}\cup\{k\}}\in \cM_{\tDelta}$, the ASII-Extension procedure behaves as the original \textsc{ASII} algorithm and inserts $k$ at the correct position. If $M_{S^{(k-1)}\cup\{k\}}\notin \cM_{\tDelta}$, the only additional possibility is that a call to \textsc{Test}\(_{\tDelta}\) returns \textsc{null}, in which case $k$ is discarded; otherwise, the insertion remains correct.

We now formalize this argument.

\textit{Proof sketch of (P1) and (P2).} Fix an iteration $k$ and consider the current ordered set $(S^{(k-1)},\pi^{(k-1)})$.

\smallskip

\noindent
$\circ$ \textbf{Case (P1): $M_{S^{(k-1)} \cup \{k\}} \in \cM_{\tDelta}$.}

\noindent
\emph{First call to \textsc{Test}\(_{\tDelta}\).}
Let $b=b_{k,(l^{(k-1)},r^{(k-1)})}$ be the output of the call at line~\ref{ASII-Ext:1st-test}.
By Proposition~\ref{prop:local-test-correctness-new} and~\eqref{Extension-proof:high-proba-event-test}, we have
\begin{equation}
\label{profo-extension-b=b*}
b_{k,(l^{(k-1)},r^{(k-1)})} = b^*_{k,(l^{(k-1)},r^{(k-1)})}
\end{equation}
with probability at least $1- C \exp\!\left(-c \tfrac{T \tDelta^2}{\tilde n \sigma^2}\right)$.

If $b^*_{k,(l^{(k-1)},r^{(k-1)})}\in\{\pm1\}$, ASII-Extension inserts $k$ at the corresponding extremity, hence at its correct position.

\smallskip
\noindent
\emph{Middle case and BBS.}
Assume now that $b^*_{k,(l^{(k-1)},r^{(k-1)})}=0$. The procedure calls \bbs and returns an insertion index $m_k$ (line~\ref{ASII-ext:call-to-bbs}). Since \bbs is unchanged, the analysis of the original \textsc{ASII} procedure applies under $M_{S^{(k-1)} \cup \{k\}} \in \cM_{\tDelta}$. In particular, Proposition~\ref{lem:auto} implies that $m_k$ is the correct insertion location in $\pi^{(k-1)}$, with high probability.

\smallskip
\noindent
\emph{Second call to \textsc{Test}\(_{\tDelta}\).}
Let $b'=b'_{k,(\tilde l^{(k-1)},\tilde r^{(k-1)})}$ be the output of the call at line~\ref{ASII-ext:2nd-test}. Arguing as above,
\begin{equation}
\label{profo-extension-b=b*:bis}
b'_{k,(\tilde l^{(k-1)},\tilde r^{(k-1)})} = b^*_{k,(\tilde l^{(k-1)},\tilde r^{(k-1)})}
\end{equation}
with probability at least $1- C \exp\!\left(-c \tfrac{T \tDelta^2}{\tilde n \sigma^2}\right)$.

On the intersection of the above high-probability events (and the corresponding event controlling \bbs), item $k$ is inserted at position $m_k$, hence at its correct location in the ordering. Consequently, if $\pi^{(k-1)}$ correctly orders $S^{(k-1)}$, then $\pi^{(k)}$ correctly orders $S^{(k)}$ with high probability.

\medskip

\noindent
$\circ$ \textbf{Case (P2): $M_{S^{(k-1)} \cup \{k\}} \notin \cM_{\tDelta}$.}

In this case, at least one of the two calls to \textsc{Test}\(_{\tDelta}\) may return \textsc{null}. If this happens, ASII-Extension discards item $k$ and keeps $(S^{(k)},\pi^{(k)})=(S^{(k-1)},\pi^{(k-1)})$. Otherwise, if both calls return non-null outputs, Proposition~\ref{prop:local-test-correctness-new} ensures that these outputs are correct on the corresponding concentration events
(namely, $\tilde \cC_{k, (l^{(k-1)},r^{(k-1)})}$ and $\tilde \cC_{k, (\tilde l^{(k-1)}, \tilde r^{(k-1)})}$), 
and the same reasoning as in~\eqref{profo-extension-b=b*}--\eqref{profo-extension-b=b*:bis} shows that the insertion (when performed) is at the correct location.

%%%%%%%%%%%

%%%%%%%%%%%%%%%%%%%%%%%%%%%%%

%%%%%%%%%%%%%%%%%%%%%%%
%Lower bounds
%%%%%%%%%%%%%%%%%%%%%%%%%%%%%%%%%%%%%%%ye

\section{Proofs of Theorems~\ref{thm:LB2} and~\ref{thm:UB} (Lower Bounds)}
\label{app:LB}

\paragraph{Model and parametric instance.}
We prove the lower bounds stated in Theorems~\ref{thm:LB2} and~\ref{thm:UB} by analyzing a simple parametric family of active seriation problems.  
Let $\mathcal{S}_n$ denote the set of all permutations of $[n]:=\{1,\ldots,n\}$.  
A learner interacts with an environment defined by a similarity matrix $M \in \cM_\Delta$ over $n$ items.  
At each round $t \in [T]$, the learner selects an unordered pair $\{a_t,b_t\}$ with $a_t \neq b_t$ and observes a random sample 
\[
Y_t \sim \nu_{\{a_t,b_t\}} = \mathcal{N}(M_{a_t,b_t},\, \sigma^2),
\]
where $\sigma>0$ is the (known) noise standard deviation.  
The goal is to recover the underlying ordering of the items based on all $T$ observations.  
This setting can be viewed as a bandit problem with $\tfrac{n(n-1)}{2}$ arms corresponding to the pairs $\{i,j\}$.

To derive explicit lower bounds, we focus on a parametric instance parameterized by a permutation $\pi \in \mathcal{S}_n$, a signal gap $\Delta>0$, and a noise variance $\sigma^2>0$.  
Specifically, we take
\begin{equation}
\label{robinson-mat-LB-parametric}
    M = R_\pi := \left[R_{\pi_i \pi_j}\right]_{i,j \in[n]}, \qquad R_{ij} = \Delta \bigl( n - |i-j| \bigr),
\end{equation}
so that $R$ is a Toeplitz Robinson matrix and $M \in \cM_\Delta$.  
Each observed pair $\{a,b\}$ then follows
\[
\nu^\pi_{\{a,b\}} := \mathcal{N}(R_{\pi_a \pi_b}, \sigma^2).
\]
We write $\nu^{\pi,\Delta,\sigma^2,T}$ for the resulting instance of the active seriation problem, i.e., the joint law of all pairwise observations under parameters $(\pi,\Delta,\sigma^2,T)$. Throughout the remainder of the proof we fix $\Delta$, $\sigma^2$, and $T$; to lighten notation, we set
\[
\nu^{\pi} \coloneqq \nu^{\pi,\Delta,\sigma^2,T},
\]
and simply write $\nu^{\pi}$ for the corresponding instance. After $T$ rounds, the learner outputs an estimated permutation $\hat{\pi}\in\mathcal{S}_n$ (up to reversal), 
and the goal is to lower bound the probability of error $p_{M,T}$ for this instance. %Recall that we denote by $\nu_{\{a,b\}}$ the distribution associated with the unordered pair $\{a,b\}$. 

% Consider the \(n-1\) instances \(\nu^{\pi^1}, \ldots, \nu^{\pi^{n-1}}\) and define
% \begin{align}
% \label{x-simple-def}
% x &:= \frac{1}{n-1} \sum_{k=1}^{n-1} \mathbb{P}_{\nu^{\pi^k}}\!\big( \hat{\pi} \in \{ \pi^k, (\pi^k)_{\mathrm{rev}} \} \big), \\
% \label{y-simple-def}
% y &:= \frac{1}{n-1} \sum_{k=1}^{n-1} \mathbb{P}_{\nu^{\pi}}\!\big( \hat{\pi} \in \{ \pi^k, (\pi^k)_{\mathrm{rev}} \} \big),
% \end{align}
% where \((\pi^k)_{\mathrm{rev}}\) is the reversal of \(\pi^k\).
% Our goal is to upper- and lower-bound \(\kl(x,y)\).
%%%%%%%%%%%%%%%%%%%%%%

\paragraph{Proofs of Theorems~\ref{thm:LB2} and~\ref{thm:UB}.}

Fix $\Delta,\sigma^2,$ and $T$, and consider the class of instances 
$\{\nu^{\pi} : \pi \in \mathcal{S}_n\}$.

\begin{prop}[Impossibility regime]
\label{thm:impossibility-regime}
For any \(n \ge 9\), \(\Delta > 0\), \(\sigma > 0\), and \(T \ge 1\) satisfying 
\[
\frac{T \Delta^2}{n \sigma^2} \le \frac{\ln n}{64},
\]
we have
% \[
% \inf_{\hat{\pi}} \max_{\pi \in \mathcal{S}_n} 
% \bP_{\nu^{\pi}}\!\left( 
% \hat{\pi} \notin \{\pi, \pi^{\mathrm{rev}}\}
% \right)
% \ge \frac{1}{2},
% \]
% where the infimum is taken over all estimators \(\hat{\pi}\).
\[
\inf_{\hat{\pi}} \max_{\pi \in \mathcal{S}_n} 
\bP_{\nu^{\pi}}\!\left( 
\hat{\pi} \notin \{\pi, \pi^{\mathrm{rev}}\}
\right)
\ge \frac{1}{2},
\]
where the infimum is taken over all estimators \(\hat{\pi}\).
\end{prop}

\smallskip 

\begin{prop}[Exponential-rate lower bound]
\label{thm:lower-bound:all-T}
For any \(n \ge 4\), \(\Delta, \sigma > 0\), and \(T \ge 1\) such that 
\[
\frac{T \Delta^2}{n \sigma^2} \ge 3,
\]
we have
\[
\inf_{\hat{\pi}} \max_{\pi \in \mathcal{S}_n} 
\bP_{\nu^{\pi}}\!\left( 
\hat{\pi} \notin \{\pi, \pi^{\mathrm{rev}}\}
\right)
\ge \exp\!\left(- 8 \frac{T \Delta^2}{\sigma^2 n} \right).
\]
\end{prop}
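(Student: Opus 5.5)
We prove Proposition~\ref{thm:lower-bound:all-T} with a two-point (or few-point) change-of-measure argument in the style of fixed-budget best-arm identification lower bounds. Fix a reference permutation $\pi^0=\mathrm{id}$ and consider the $n-1$ alternatives $\pi^k$, $k\in[n-1]$, obtained from $\pi^0$ by transposing the adjacent positions $k$ and $k+1$. Since $n\ge 4$, no $\pi^k$ coincides with $\pi^0$ or $(\pi^0)^{\mathrm{rev}}$. Moreover, the sets $\{\pi^k,(\pi^k)^{\mathrm{rev}}\}$ are pairwise disjoint and disjoint from $\{\pi^0,(\pi^0)^{\mathrm{rev}}\}$. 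Under \eqref{robinson-mat-LB-parametric}, the matrices $R_{\pi^0}$ and $R_{\pi^k}$ differ only on entries $\{a,b\}$ with $a\in\{k,k+1\}$, $b\notin\{k,k+1\}$. On each such entry the mean changes by exactly $\Delta$, since $|i-j|$ shifts by one. Hence the per-observation KL divergence on pair $\{a,b\}$ is at most $\Delta^2/(2\sigma^2)$, and it vanishes off these pairs.

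By the divergence decomposition for adaptive sampling,
\[
\mathrm{KL}\big(\bP_{\nu^{\pi^0}},\bP_{\nu^{\pi^k}}\big)\le \frac{\Delta^2}{2\sigma^2}\,\E_{\nu^{\pi^0}}[N_k],
\]
where $N_k$ is the number of queries involving item $k$ or $k+1$ under $\pi^0$. Each query touches at most two items, so $\sum_{k}\E_{\nu^{\pi^0}}[N_k]\le 4T$. By pigeonhole, some $k$ satisfies $\E_{\nu^{\pi^0}}[N_k]\le 4T/(n-1)\le 8T/n$, and for this $k$ the KL divergence is at most $4T\Delta^2/(\sigma^2 n)$.

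Next, we apply the Bretagnolle–Huber inequality to the event $E=\{\hat\pi\in\{\pi^0,(\pi^0)^{\mathrm{rev}}\}\}$. Its complement under $\pi^0$ is an error, and $E$ itself is an error under $\pi^k$. This gives
\[
\bP_{\nu^{\pi^0}}(E^c)+\bP_{\nu^{\pi^k}}(E)\ge \tfrac12\exp\!\big(-4T\Delta^2/(\sigma^2 n)\big),
\]
so the maximum error over the two instances is at least $\tfrac14\exp(-4T\Delta^2/(\sigma^2 n))$. Finally, we use the assumption $T\Delta^2/(n\sigma^2)\ge 3$ to absorb the constant: $\tfrac14\ge e^{-4\cdot 3}\ge \exp(-4T\Delta^2/(\sigma^2 n))$. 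This yields the bound $\exp(-8T\Delta^2/(\sigma^2 n))$.

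The main point needing care is the divergence decomposition. It has to be stated for adaptive policies with the Gaussian observation model of Appendix~\ref{app:LB}, and $N_k$ must be measured under the reference instance $\pi^0$. The pigeonhole counting, where each query is charged to at most two of the $k$'s, must also be done so that the constant $8$ survives. Beyond that, the computation of the entrywise mean differences for adjacent transpositions in the affine Toeplitz matrix is routine.
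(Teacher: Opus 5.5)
Your proof is correct, but it takes a different route from the paper's.

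\textbf{The paper's route.} It reuses the machinery of Proposition~\ref{thm:impossibility-regime}. It averages over all $\pi\in\mathcal S_n$ and all adjacent transpositions to form $x'$ and $y'$, and bounds $\mathrm{kl}(x',y')\le 2T\Delta^2/((n-1)\sigma^2)$ via Lemma~\ref{lem:fonda}. In that lemma the sampling counts are taken under the alternatives $\nu^{\pi^k}$, which change with $k$; this is why the symmetrization over $\mathcal S_n$ (the bijection $\pi\mapsto\pi^k$) is needed. It then splits into the cases $x'\le 1/2$ and $x'\ge 1/2$, using $\mathrm{kl}(p,q)\ge p\ln(1/q)-\ln 2$ in the second case.

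\textbf{Your route.} You run a genuine two-point argument.
\begin{itemize}
\item Putting the fixed reference $\nu^{\pi^0}$ in the first slot of the KL means every count is taken under a single instance.
\item A direct pigeonhole over $k$, with each query charged to at most four values of $k$, then replaces both the relabeling symmetrization and the case analysis.
\item Bretagnolle--Huber replaces the $\mathrm{kl}$ inequality.
\end{itemize}

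\textbf{What each buys.} Your argument is shorter and actually gives the slightly sharper bound $\tfrac14\exp\bigl(-2T\Delta^2/((n-1)\sigma^2)\bigr)$. The SNR assumption is used only to absorb the factor $1/4$, which needs just $T\Delta^2/(n\sigma^2)\ge \ln(4)/4$. The paper's route gains uniformity with the impossibility-regime proof. There, averaging over $n-1$ hypotheses is genuinely needed to reach the $\ln n$ threshold through $y'\le 1/(n-1)$. For the exponential-rate bound alone, as you show, two hypotheses suffice.
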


Proposition~\ref{thm:impossibility-regime} implies Theorem~\ref{thm:LB2} for any \(c_1 \le 1/64\), 
while Proposition~\ref{thm:lower-bound:all-T} yields Theorem~\ref{thm:UB} for any $c_0 \ge 3$. 

%%%%%%%%%%%%%

\begin{rem}
In Theorem~\ref{thm:UB}, the stronger condition 
\(\tfrac{T \Delta^2}{\sigma^2 n} \ge c_0 \ln n\) 
is imposed only for consistency with Theorem~\ref{thm:UB:insert-n2-into-n1} (the upper bound).  
Theorem~\ref{thm:UB}, corresponding to the lower bound proved here in 
Proposition~\ref{thm:lower-bound:all-T}, 
remains valid under the weaker assumption 
\(\tfrac{T \Delta^2}{\sigma^2 n} \ge 3\).
\end{rem}

The proofs of Propositions~\ref{thm:impossibility-regime} and~\ref{thm:lower-bound:all-T} are given below.  
Before presenting them, we briefly recall standard concepts and notations for active learning problems, which will be used to provide formal statements of Propositions~\ref{thm:impossibility-regime} and~\ref{thm:lower-bound:all-T}. We will also recall some useful information-theoretic lemmas.

%%%%%%%%%%%%%%%%%

%%%%%%%%%%%%%%%

\subsection{Bandit formulation}

We adopt the standard bandit formalism of~\cite[Section~4.6]{lattimore2020bandit}. 
A learner interacts with the environment according to a strategy consisting of an adaptive sampling rule and a recommendation rule.

%\subsection{The canonical bandit model and formal statement of Propositions~\ref{thm:impossibility-regime} and~\ref{thm:lower-bound:all-T}}
%In this section we follow the canonical bandit model - a standard formalism for multi armed bandits, see~\cite[Section~4.6]{lattimore2020bandit}, and provide a formal statement of Propositions~\ref{thm:impossibility-regime} and~\ref{thm:lower-bound:all-T}.

%\paragraph{Strategy of the learner}We demonstrate that the lower bounds of Theorems~\ref{thm:LB2} and~\ref{thm:UB} hold for any possible algorithm, or what we will now term strategy, of the learner. Where by strategy, we refer to the way in which the learner interacts with the environment, i.e. how they choose which pairs of items to draw samples from and eventually specify their recommendation $\hpi$. Formally, a strategy of the learner consists of a pair $\psi = ((\vartheta)_{t<T},\varphi)$, where $(\vartheta)_{t<T}$ is a sequence of sampling rules and $\varphi$ is a recommendation rule, where we specify: 

\paragraph{Strategy.}
A strategy $\psi = ((\vartheta_t)_{t<T}, \varphi)$ is defined as follows.  
For each round $t<T$, the sampling rule
\[
\vartheta_t : (\mathcal{P}_n \times \mathbb{R})^{t-1} \to \mathcal{P}_n ,
\qquad
\mathcal{P}_n = \{ \{a,b\} : 1 \le a < b \le n \},
\]
maps the past history $(\{a_s,b_s\}, Y_s)_{s<t}$ to the next queried pair $\{a_t,b_t\}$ at round $t$.

The recommendation rule
\[
\varphi : (\mathcal{P}_n \times \mathbb{R})^{T} \to \mathcal{S}_n
\]
maps the full history $(\{a_s,b_s\}, Y_s)_{s\le T}$ to an estimated permutation
$\hat{\pi}^\psi = \varphi((\{a_s,b_s\}, Y_s)_{s\le T})$.

\paragraph{Probability space and notation.} 
Let $\Omega_T = (\mathcal{P}_n \times \mathbb{R})^T$ and $\mathcal{F}_T$ be the associated Borel $\sigma$-algebra.  
For an active seriation instance $\nu$ and a strategy $\psi$, we denote by $\bP_\nu^\psi$ the probability measure on $(\Omega_T, \mathcal{F}_T)$  induced by the strategy $\psi$ interacting with $\nu$, and by $\E_\nu^\psi$ the corresponding expectation.  
We omit the superscript $\psi$ when the dependence on the strategy is unambiguous.

We now restate Propositions~\ref{thm:impossibility-regime} and~\ref{thm:lower-bound:all-T} in this formalism.

\newtheorem*{prop1}{Proposition E.1}
\begin{prop1}\label{prop1}
For any \(n \ge 9\), \(\Delta > 0\), \(\sigma > 0\), and \(T \ge 1\) satisfying 
\[
\frac{T \Delta^2}{n \sigma^2} \le \frac{\ln n}{64},
\]
we have
\[
\inf_{\psi} \max_{\pi \in \mathcal{S}_n} 
\bP_{\nu^{\pi}}^\psi\!\left( 
\hat{\pi}^\psi \notin \{\pi, \pi^{\mathrm{rev}}\}
\right)
\ge \frac{1}{2},
\]
where the infimum is taken over all possible strategies \(\psi\).
\end{prop1}

\newtheorem*{prop2}{Proposition E.2}
\begin{prop2}\label{prop2}
For any \(n \ge 4\), \(\Delta, \sigma > 0\), and \(T \ge 1\) such that 
\[
\frac{T \Delta^2}{n \sigma^2} \ge 3,
\]
we have
\[
\inf_{\psi} \max_{\pi \in \mathcal{S}_n} 
\bP_{\nu^{\pi}}^\psi\!\left( 
\hat{\pi}^\psi \notin \{\pi, \pi^{\mathrm{rev}}\}
\right)
\ge \exp\!\left(- 8 \frac{T \Delta^2}{\sigma^2 n} \right).
\]
\end{prop2}

%%%%%%%%%%%%

\subsection{Preliminaries and notation}
We denote by $N_{\{a,b\}}^\psi(T)$ the number of times pair $\{a,b\}$ has been sampled by strategy~$\psi$ up to time~$T$.  
When the dependence on $\psi$ is clear, we write $N_{\{a,b\}}(T)$ or simply $N_{\{a,b\}}$ for brevity.
\paragraph{Useful lemmas.}
To prove Propositions~\ref{thm:impossibility-regime} and~\ref{thm:lower-bound:all-T}, 
we rely on two standard lemmas.

We denote by $\mathrm{kl}$ the Kullback–Leibler divergence between Bernoulli distributions:
\[
\forall\, p,q \in [0,1], 
\quad 
\mathrm{kl}(p,q) = p \ln \frac{p}{q} + (1-p) \ln \frac{1-p}{1-q}.
\]

Lemma~\ref{lem:fonda} is an adaptation of~\cite{garivier2019explore}, derived from the data-processing inequality and joint convexity of KL divergence.  
A proof is provided in Section~\ref{proof-fundamental-lemma}.

\begin{lem}[Fundamental inequality for bandits]
\label{lem:fonda}
Let $\nu^1, \ldots, \nu^N$ and $\tilde{\nu}^1, \ldots, \tilde{\nu}^N$ be two sequences of $N$ seriation problems.  
For any events $(\mathcal{E}_k)_{k \le N}$ with $\mathcal{E}_k \in \mathcal{F}_T$, we have
\[
\mathrm{kl}\!\left(
\frac{1}{N} \sum_{k=1}^N \bP_{\nu^k}(\mathcal{E}_k),
\frac{1}{N} \sum_{k=1}^N \bP_{\tilde{\nu}^k}(\mathcal{E}_k)
\right)
\le
\frac{1}{N} \sum_{k=1}^N \sum_{1 \le a < b \le n}
\E_{\nu^k}[N_{\{a,b\}}(T)] \,
\mathrm{KL}(\nu_{\{a,b\}}^k, \tilde{\nu}_{\{a,b\}}^k).
\]
\end{lem}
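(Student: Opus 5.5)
The plan is to compare the full interaction under the two families of instances and control it through the chain rule for KL divergence along the interaction. Fix a strategy $\psi$ and consider the laws $\bP_{\nu^k}$ and $\bP_{\tilde\nu^k}$ on $(\Omega_T,\mathcal F_T)$. Since the sampling rules $\vartheta_t$ are deterministic functions of the past, the divergence decomposition of \cite[Lemma~15.1]{lattimore2020bandit} gives
\[
\mathrm{KL}\bigl(\bP_{\nu^k},\bP_{\tilde\nu^k}\bigr) = \sum_{1\le a<b\le n} \E_{\nu^k}\bigl[N_{\{a,b\}}(T)\bigr]\, \mathrm{KL}\bigl(\nu^k_{\{a,b\}},\tilde\nu^k_{\{a,b\}}\bigr).
\]
I will establish this by writing the density of the history as a product over rounds. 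Each factor is the density of $Y_t$ given the chosen pair; the pair itself contributes nothing to the likelihood ratio, because it is chosen by the same rule $\vartheta_t$ under both instances. Taking the expectation of the log-likelihood ratio round by round, and conditioning on $\mathcal F_{t-1}$, then yields the identity.

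Next I will apply the data-processing inequality to the measurable map $\omega\mapsto \mathbf 1_{\mathcal E_k}(\omega)$. This gives
\[
\mathrm{kl}\bigl(\bP_{\nu^k}(\mathcal E_k),\bP_{\tilde\nu^k}(\mathcal E_k)\bigr)\le \mathrm{KL}\bigl(\bP_{\nu^k},\bP_{\tilde\nu^k}\bigr).
\]
To pass to averages over $k$, I will use the joint convexity of $(p,q)\mapsto \mathrm{kl}(p,q)$, which is a special case of the joint convexity of KL divergence. This gives $\mathrm{kl}\bigl(\frac1N\sum_k p_k,\frac1N\sum_k q_k\bigr)\le \frac1N\sum_k \mathrm{kl}(p_k,q_k)$. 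Chaining the three displays gives the claim.

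The step needing the most care is the chain-rule identity. In particular, one must check that the Gaussian observation densities are with respect to a common dominating measure, so that all KL terms are well defined. If some $\mathrm{KL}(\nu^k_{\{a,b\}},\tilde\nu^k_{\{a,b\}})$ were infinite the inequality would hold trivially; here all terms are finite, since they equal $(M^k_{ab}-\tilde M^k_{ab})^2/(2\sigma^2)$. Beyond that, the argument is standard.
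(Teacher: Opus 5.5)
Your proposal is correct and follows the paper's proof: it uses the same three ingredients, namely the chain-rule identity for KL in bandit models from \cite[Lemma~15.1]{lattimore2020bandit}, the data-processing inequality applied to $\mathbf 1_{\mathcal E_k}$, and the joint convexity of $\mathrm{kl}$. The only difference is the order of assembly. The paper first combines data processing with convexity and then substitutes the chain rule, which changes nothing of substance.
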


Lemma~\ref{lem:gausskl} provides the closed-form expression of the KL divergence between Gaussian distributions.

\begin{lem}[KL divergence between Gaussians]
\label{lem:gausskl}
Let $\rho_1 = \mathcal{N}(\mu_1, \sigma^2)$ and $\rho_2 = \mathcal{N}(\mu_2, \sigma^2)$. Then,
\[
\mathrm{KL}(\rho_1, \rho_2) = \frac{(\mu_1 - \mu_2)^2}{2 \sigma^2}.
\]
\end{lem}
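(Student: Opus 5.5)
The statement to prove is Lemma~\ref{lem:gausskl}: the Kullback--Leibler divergence between two Gaussians $\rho_1 = \mathcal{N}(\mu_1,\sigma^2)$ and $\rho_2=\mathcal{N}(\mu_2,\sigma^2)$ with a common variance. The approach is a direct computation from the definition
\[
\mathrm{KL}(\rho_1,\rho_2) = \E_{X\sim\rho_1}\!\left[\ln \frac{p_1(X)}{p_2(X)}\right],
\]
where $p_i$ denotes the density of $\rho_i$ with respect to Lebesgue measure.

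The first step is to write down the log-likelihood ratio. Because the variances are equal, the normalizing constants $(2\pi\sigma^2)^{-1/2}$ cancel, leaving
\[
\ln \frac{p_1(x)}{p_2(x)} = \frac{(x-\mu_2)^2 - (x-\mu_1)^2}{2\sigma^2}.
\]
The second step is to expand the numerator. The quadratic terms in $x$ cancel, so the ratio becomes the affine function
\[
\frac{2x(\mu_1-\mu_2) + \mu_2^2 - \mu_1^2}{2\sigma^2}.
\]

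The third step is to take the expectation under $\rho_1$. Only $\E[X]=\mu_1$ is needed, since the integrand is affine in $x$. This gives
\[
\frac{2\mu_1(\mu_1-\mu_2) + \mu_2^2-\mu_1^2}{2\sigma^2} = \frac{\mu_1^2 - 2\mu_1\mu_2 + \mu_2^2}{2\sigma^2} = \frac{(\mu_1-\mu_2)^2}{2\sigma^2},
\]
which is the claimed formula.

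There is no genuine obstacle. The only point that needs a remark is integrability: the log-likelihood ratio is affine in $x$ and Gaussians have finite first moments, so the expectation is finite and the manipulations above are justified.
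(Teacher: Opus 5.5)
Your computation is correct and complete. The normalizing constants cancel, the log-likelihood ratio is affine in $x$, and its mean under $\rho_1$ is $(\mu_1-\mu_2)^2/(2\sigma^2)$.

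The paper gives no proof of its own here; it cites an external reference. That reference does the same direct computation for general variances, $\frac{1}{2}\bigl[(\mu_1-\mu_2)^2/\sigma_2^2 + \sigma_1^2/\sigma_2^2 - \ln(\sigma_1^2/\sigma_2^2) - 1\bigr]$, and then sets $\sigma_1=\sigma_2$. Your equal-variance derivation is shorter because the quadratic terms cancel and only the first moment of $\rho_1$ is needed, which also makes your integrability remark immediate.
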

A proof of Lemma~\ref{lem:gausskl} can be found in~\cite{statisticalproofs}, 
see entry “Kullback–Leibler divergence for the normal distribution”.

%%%%%%%%%%%%%%

%%%%%%%%%%%%%%%%%%%%%%%%%%%%%%%%%%%%%%%%

\subsection{Proof of Proposition~\ref{thm:impossibility-regime}}
\label{proof-1st-lower-bound}

For clarity, we first establish the lower bound for strategies that are invariant under relabeling of the items, 
and then extend the result to arbitrary strategies.

\paragraph{Invariance under relabeling.}
A strategy~$\psi$ is said to be \textit{invariant under permutations of item labels} if, for any active seriation instance~$\nu$ 
and any permutation $\pi' \in \mathcal{S}_n$ acting on the item set~$[n]$,
\begin{equation}
\label{invariance-labeling}
\text{ the distribution of } N_{\{a,b\}} 
\text{ under } \nu^{\pi'} 
\text{ coincides with that of } N_{\{\pi'(a), \pi'(b)\}}
\text{ under } \nu,
\end{equation}
for any $\{a,b\}\subset[n]$.
This symmetry assumption simplifies the proof by allowing us to treat all item indices as exchangeable. 
Once the argument is established for invariant strategies,
we extend the proof to arbitrary strategies by averaging their performance
over all possible label permutations.

\paragraph{Choice of permutations.}
We now focus on a specific family of hypotheses indexed by transpositions of consecutive indices.  
Since the lower bound only requires exhibiting one instance for which the error probability is large,
we may, without loss of generality, fix the underlying permutation as the identity
$\pi = (1,\ldots,n)$. 
For each $k \in [n-1]$, let $\pi^k$ denote the permutation obtained by swapping positions~$k$ and~$k{+}1$, 
while leaving all others unchanged:
\begin{equation}
\label{pi^k-def}
\forall k \in [n-1], \quad \pi^k = (\pi^k_i )_{i\in[n]} \in \mathcal{S}_n , \qquad  \quad 
\pi^k_i =
\begin{cases}
k+1, & \text{if } i = k,\\
k,   & \text{if } i = k+1,\\
i,   & \text{otherwise.}
\end{cases}
\end{equation}
In other words, $\pi^k$ is the transposition $(k, k{+}1)$ acting on the identity permutation.

%%%%%%%%%%%%%%%%%%%%%%%%
%%%%%%%%%%%%%%%%%%%%%

\vspace{1em}
\noindent
\textbf{Proof for invariant-to-labeling strategies.} Recall that, for any permutation $\pi \in \mathcal{S}_n$, 
$\nu^{\pi}$ denotes the active seriation instance in which
each pair $\{a,b\}$ produces i.i.d.\ samples 
$\nu^{\pi}_{\{a,b\}} = \mathcal{N}(M_{a, b}, \sigma^2)$
with mean 
$M_{a, b} = R_{\pi_a, \pi_b} = \Delta \bigl(n - |\pi_a - \pi_b|\bigr)$ as defined earlier in~\eqref{robinson-mat-LB-parametric}.

Consider the \(n-1\) instances \(\nu^{\pi^1}, \ldots, \nu^{\pi^{n-1}}\) and define
\begin{align}
\label{x-simple-def}
x &:= \frac{1}{n-1} \sum_{k=1}^{n-1} \mathbb{P}_{\nu^{\pi^k}}\!\big( \hat{\pi} \in \{ \pi^k, (\pi^k)_{\mathrm{rev}} \} \big), \\
\label{y-simple-def}
y &:= \frac{1}{n-1} \sum_{k=1}^{n-1} \mathbb{P}_{\nu^{\pi}}\!\big( \hat{\pi} \in \{ \pi^k, (\pi^k)_{\mathrm{rev}} \} \big),
\end{align}
where \((\pi^k)_{\mathrm{rev}}\) is the reversal of \(\pi^k\).
Our goal is to upper and lower bound \(\kl(x,y)\).

\smallskip
\noindent
\textit{Upper bound.} By Lemma~\ref{lem:fonda} with \(N=n-1\) and \(\mathcal{E}_k=\{\hat{\pi}\in\{\pi^k,(\pi^k)_{\mathrm{rev}}\}\}\) for $k\in[n-1]$,
\begin{align}
\label{eq:kl}
\kl(x,y)
&\le \frac{1}{n-1} \sum_{k=1}^{n-1} \sum_{1\le a<b\le n} 
\E_{\nu^{\pi^k}}\!\big[N_{\{a,b\}}(T)\big] \, \mathrm{KL}\!\big(\nu^{\pi^k}_{\{a,b\}}, \nu^{\pi}_{\{a,b\}}\big).
\end{align}
Under our Gaussian parametric instance, $\mathrm{KL}(\nu^{\pi^k}_{\{a,b\}}, \nu^{\pi}_{\{a,b\}})=0$ if $\{a,b\}\cap\{k,k+1\}=\emptyset$, and is at most $\Delta^2/(2\sigma^2)$ otherwise (Lemma~\ref{lem:gausskl}).
Hence
\begin{equation}
\label{new-eq-lower-issue}
\kl(x,y) \le \frac{\Delta^2}{2\sigma^2}\cdot \frac{1}{n-1}
\sum_{k=1}^{n-1} \sum_{\substack{1\le a<b\le n\\ \{a,b\}\cap\{k,k+1\}\neq\emptyset}}
\E_{\nu^{\pi^k}}\!\big[N_{\{a,b\}}(T)\big].
\end{equation}
For fixed $k\in[n-1]$, every pair $\{a,b\}$ such that $\{a,b\}\cap\{k,k+1\}\neq\emptyset$ contains either $k$ or $k+1$; hence 
\begin{equation*}
\sum_{\substack{1\le a<b\le n\\ \{a,b\}\cap\{k,k+1\}\neq\emptyset}}
\E_{\nu^{\pi^k}}\!\big[N_{\{a,b\}}(T)\big]
\le
\sum_{a: a\neq k}\E_{\nu^{\pi^k}}\!\big[N_{\{a,k\}}(T)\big]
+
\sum_{a: a\neq k+1}\E_{\nu^{\pi^k}}\!\big[N_{\{a,k+1\}}(T)\big],
\end{equation*}
since $N_{\{a,b\}}(T)\ge 0$ for all $\{a,b\}$. 
Substituting this bound into~\eqref{new-eq-lower-issue} and using the convention $N_{\{i,i\}}(T)=0$, we obtain
\begin{equation}
\label{new-eq-lower-issue2}
\kl(x,y) \le \frac{\Delta^2}{2\sigma^2}\cdot \frac{1}{n-1}
\sum_{k=1}^{n-1} \sum_{a\in [n]}
\E_{\nu^{\pi^k}}\!\Big[ N_{\{a,k\}}(T)+N_{\{a,k+1\}}(T)\Big].
\end{equation}

Throughout the next calculations, we abbreviate $N_{\{a,b\}}:=N_{\{a,b\}}(T)$

By the invariance under relabeling assumption~\eqref{invariance-labeling}, 
for any permutation $\pi'$ and any pair $\{a,b\}$,
\[
\E_{\nu^{\pi'}}[N_{\{a,b\}}]
=
\E_{\nu}[N_{\{\pi'(a),\pi'(b)\}}].
\]
Applying this with $\pi'=\pi^k$ (the transposition exchanging $k$ and $k+1$), we obtain
\[
\E_{\nu^{\pi^k}}\!\big[N_{\{a,k\}} + N_{\{a,k+1\}}\big]
=
\E_{\nu}\!\big[N_{\{\pi^k(a),k\}} + N_{\{\pi^k(a),k+1\}}\big].
\]
Since $\pi^k$ merely swaps $k$ and $k+1$, the set
$\big{\{}\{\pi^k(a),k\},\{\pi^k(a),k+1\}\big{\}}$ coincides with
$\big{\{}\{a,k\},\{a,k+1\}\big{\}}$ for every $a\in[n]$ (recalling that $N_{\{i,i\}}=0$).
Therefore,
\[
\sum_{a\in[n]}
\E_{\nu^{\pi^k}}\!\big[N_{\{a,k\}} + N_{\{a,k+1\}}\big]
=
\sum_{a\in[n]}
\E_{\nu}\!\big[N_{\{a,k\}} + N_{\{a,k+1\}}\big].
\]
Plugging this identity into~\eqref{new-eq-lower-issue2} yields
\begin{equation}
\label{LB:eq-inv-label}
\kl(x,y)
\le
\frac{\Delta^2}{2\sigma^2}\cdot\frac{1}{n-1}
\sum_{k=1}^{n-1}
\sum_{a\in[n]}
\E_{\nu}\!\big[N_{\{a,k\}}(T) + N_{\{a,k+1\}}(T)\big].
\end{equation}
Since $\sum_{1\le a<b\le n} N_{\{a,b\}}(T)=T$, and $N_{\{a,a\}}(T) =0$ for any $a$, we have 
\[
\sum_{k=1}^{n-1}\sum_{a\in[n]}
\big(N_{\{a,k\}}(T)+N_{\{a,k+1\}}(T)\big)
\le
2\sum_{a,b\in[n]} N_{\{a,b\}}(T)
=
4T.
\]
Finally, we conclude that 
\begin{equation}
\label{ccl:upper_bound:proof:LB-inv}
\kl(x,y)
\le
\frac{2 T\,\Delta^2}{(n-1)\sigma^2}.
\end{equation}

\smallskip
\noindent
\textit{Lower bound.} Using the refined Pinsker inequality \cite[eq.~(13)]{gerchinovitz2020fano},
\[
\forall p\in[0,1],\ \forall q\in(0,1):\quad (p-q)^2 \max\!\Big(2,\ \ln \frac{1}{q}\Big) \le \kl(p,q),
\]
with \(p=x\), \(q=y\), and \eqref{ccl:upper_bound:proof:LB-inv}, we get \underline{for \(y\in(0,1)\)}:
\begin{equation}
\label{appli:refined-pinsker-1st}
x \ \le\  y + \sqrt{\frac{2 T \Delta^2}{(n-1)\, \sigma^2 \,\max\!\left(2, \ln \frac{1}{y}\right)}}.
\end{equation}
By definition~\eqref{y-simple-def} of $y$, and by disjointness of the events \(\{\hat{\pi}\in\{\pi^k,(\pi^k)_{\mathrm{rev}}\}\}_{k=1}^{n-1}\),
\begin{equation}
\label{y-bound-simple}
y \le \frac{1}{n-1}.
\end{equation}
By defintion~\eqref{x-simple-def} of $x$, and by averaging,
\begin{equation}
\label{x-bound-simple}
\min_{k\in[n-1]}\Big(1-\P_{\nu^{\pi^k}}(\hat{\pi}\notin\{\pi^k,(\pi^k)_{\mathrm{rev}}\})\Big) \le x.
\end{equation}

Combining \eqref{appli:refined-pinsker-1st}–\eqref{y-bound-simple}–\eqref{x-bound-simple}, for \(n\ge 1+e^2\),
\[
\min_{k\in[n-1]} \Big(1-\P_{\nu^{\pi^k}}(\hat{\pi}\notin\{\pi^k,(\pi^k)_{\mathrm{rev}}\})\Big)
\ \le\ \frac{1}{n-1} + \sqrt{\frac{2 T \Delta^2}{(n-1)\,\sigma^2 \,\ln(n-1)}}.
\]
In particular, if \(n \ge 5\) and \(\frac{2 T\Delta^2}{(n-1)\sigma^2} \le \frac{\ln(n-1)}{16}\), the right-hand side is at most \(1/2\). Hence there exists \(k \in [n-1]\) such that
\[
\mathbb{P}_{\nu^{\pi^k}}\!\big(\hat{\pi}\notin\{\pi^k,(\pi^k)_{\mathrm{rev}}\}\big)\ge \frac{1}{2}.
\]
(Notice that \(\frac{2T \Delta^2}{(n-1)\sigma^2}\le \frac{\ln(n-1)}{16}\)
is ensured by the slightly stronger condition 
\(\frac{T \Delta^2}{n \sigma^2}\le \frac{\ln n}{64}\) for all \(n\ge 3\).)

- \underline{If \(y=0\)} in~\eqref{y-simple-def}, then either \(x=0\), in which case \eqref{x-bound-simple} implies 
\(\max_{k}\mathbb{P}_{\nu^{\pi^k}}(\hat{\pi}\notin\{\pi^k,(\pi^k)_{\mathrm{rev}}\})=1\),
or \(\kl(x,0)=+\infty\), which would contradict \eqref{LB:eq-inv-label} for finite \(T\).

Combining both cases proves Proposition~\ref{thm:impossibility-regime}
for strategies invariant under relabeling. \hfill\(\square\)

%%%%%%%%%%

%%%%%%%%%%%%%%

\bigskip
\noindent
\textbf{Extension to arbitrary strategies.}
We now remove the assumption that the strategy is invariant under relabeling.
For any permutation $\pi \in \cS_n$ and any $k \in [n-1]$, define
\begin{equation}
\label{pi^k-def-general}
\pi^k := (k, k{+}1) \circ \pi,
\end{equation}
that is, $\pi^k$ is obtained by swapping the images $k$ and $k{+}1$ in~$\pi$.
Equivalently,
\[
\forall k \in [n-1], \quad \pi^k = (\pi^k_i )_{i\in[n]} \in \mathcal{S}_n , \qquad  \quad 
\pi^k_i =
\begin{cases}
k{+}1, & \text{if } \pi_i = k,\\
k, & \text{if } \pi_i = k{+}1,\\
\pi_i, & \text{otherwise.}
\end{cases}
\]
Hence, the two instances $\nu^{\pi}$ and $\nu^{\pi^k}$ differ only for comparisons involving items whose true labels belong to $\{k,k{+}1\}$. 
Formally,
\[
\nu^{\pi}_{\{a,b\}} = \nu^{\pi^k}_{\{a,b\}}
\quad\text{if } \{\pi_a,\pi_b\}\cap\{k,k{+}1\}=\emptyset,
\quad\text{and}\quad
\mathrm{KL}\!\big(\nu^{\pi}_{\{a,b\}}, \nu^{\pi^k}_{\{a,b\}}\big)
\le \frac{\Delta^2}{2\sigma^2}
\text{ otherwise.}
\]

Define the symmetrized quantities
\begin{align}
\label{x-def-general}
x' &:= \frac{1}{n!(n-1)} 
\sum_{\pi \in \cS_n} \sum_{k=1}^{n-1} 
\P_{\nu^{\pi^k}}\!\big(\hat{\pi} \in \{\pi^k, (\pi^k)_{\mathrm{rev}}\}\big),\\
\label{y-def-general}
y' &:= \frac{1}{n!(n-1)} 
\sum_{\pi \in \cS_n} \sum_{k=1}^{n-1} 
\P_{\nu^{\pi}}\!\big(\hat{\pi} \in \{\pi^k, (\pi^k)_{\mathrm{rev}}\}\big).
\end{align}

Applying Lemma~\ref{lem:fonda} with $N = n!(n-1)$ and 
$\mathcal{E}_{\pi,k} = \{\hat{\pi}\in\{\pi^k,(\pi^k)_{\mathrm{rev}}\}\}$ for $\pi\in\cS_n$ and $k\in[n-1]$ gives
\begin{equation}
\label{eq:kl-bis}
\kl(x',y')
\le \frac{1}{n!(n-1)} 
\sum_{\pi\in\cS_n} \sum_{k=1}^{n-1} 
\sum_{1\le a<b\le n} 
\E_{\nu^{\pi^k}}\!\big[N_{\{a,b\}}(T)\big] \,
\mathrm{KL}\!\big(\nu^{\pi^k}_{\{a,b\}}, \nu^{\pi}_{\{a,b\}}\big).
\end{equation}

Since the KL term is nonzero only when 
$\{\pi_a,\pi_b\}\cap\{k,k+1\}\neq\emptyset$
(equivalently, $\{\pi_a^k,\pi_b^k\}\cap\{k,k+1\}\neq\emptyset$),
in which case it equals $\Delta^2/(2\sigma^2)$, we get
\begin{equation}\label{eq:second-kl-eq}
 \kl(x',y') \le \frac{\Delta^2}{2\sigma^2}\frac{1}{n!(n-1)} 
\sum_{\pi\in\cS_n} \sum_{k=1}^{n-1} 
\sum_{\substack{1\le a<b\le n \\ \{\pi_a^k,\pi_b^k\}\cap\{k,k{+}1\}\neq\emptyset}} 
\E_{\nu^{\pi^k}}\!\big[N_{\{a,b\}}(T)\big] \,.  
\end{equation}

Since  for fixed $k$,  the mapping $f_k:\pi \mapsto \pi^k$ is a bijection of $\cS_n$, we have for any function $F:\cS_n\to\mathbb R$,
\[
\sum_{\pi\in\cS_n} F(\pi^k)=\sum_{\pi\in\cS_n} F(\pi) .
\]
Applying this with
\[
F(\pi):=\sum_{\substack{1\le a<b\le n\\ \{\pi_a,\pi_b\}\cap\{k,k+1\}\neq\emptyset}}
\E_{\nu^{\pi}}\!\big[N_{\{a,b\}}(T)\big]
\]
yields
\[
\sum_{\pi\in\cS_n}\sum_{\substack{1\le a<b\le n\\ \{\pi_a^k,\pi_b^k\}\cap\{k,k+1\}\neq\emptyset}}
\E_{\nu^{\pi^k}}\!\big[N_{\{a,b\}}(T)\big]
=
\sum_{\pi\in\cS_n}\sum_{\substack{1\le a<b\le n\\ \{\pi_a,\pi_b\}\cap\{k,k+1\}\neq\emptyset}}
\E_{\nu^{\pi}}\!\big[N_{\{a,b\}}(T)\big].
\]

Moreover, for fixed $\pi$ and $\{a,b\}$, the condition 
$\{\pi_a,\pi_b\}\cap\{k,k+1\}\neq\emptyset$ 
can hold for at most four values of $k$, namely 
$k\in\{\pi_a-1,\pi_a,\pi_b-1,\pi_b\}\cap[n-1]$. 
Therefore, using a similar counting argument as in~(\ref{new-eq-lower-issue}-\ref{LB:eq-inv-label}), we obtain
\[
\sum_{k=1}^{n-1}
\sum_{1\le a<b\le n}
\mathbf 1_{\{\pi_a,\pi_b\}\cap\{k,k+1\}\neq\emptyset}
\, N_{\{a,b\}}(T)
\ \le \
4\sum_{1\le a<b\le n} N_{\{a,b\}}(T)
\ = \
4T.
\]

Combining the above two equations with \eqref{eq:second-kl-eq}, then gives 
\begin{equation}
\label{LB:eq-inv-label-general-strat}
\kl(x',y') \le \frac{2T\,\Delta^2}{(n-1)\sigma^2}.
\end{equation}

\smallskip
\noindent
Finally, the analogues of~\eqref{x-bound-simple} and~\eqref{y-bound-simple} hold for $x'$ and $y'$:
\begin{equation}
\label{x-y-ineq}
\min_{\pi\in\cS_n,\,k\in[n-1]} 
\Big(1-\P_{\nu^{\pi^k}}(\hat{\pi}\notin\{\pi^k,(\pi^k)_{\mathrm{rev}}\})\Big) \le x',
\qquad
y' \le \frac{1}{n-1}.
\end{equation}
Combining~\eqref{LB:eq-inv-label-general-strat} and~\eqref{x-y-ineq}
with the refined Pinsker inequality used above yields the same lower bound for arbitrary strategies. 
This completes the proof of Proposition~\ref{thm:impossibility-regime}.
\hfill$\square$

%%%%%%%%%%%%%%%%%%%%%%%%%%%%
% 2nd Lower Bound
%%%%%%%%%%%%%%%%%%%%%%%%%%%%

\subsection{Proof of Proposition~\ref{thm:lower-bound:all-T}}
\label{proof-2nd-lower-bound}

As in the proof of Proposition~\ref{thm:impossibility-regime} (for any strategy), 
we consider the Kullback-Leibler divergence $\kl(x',y')$ for 
$x'$ and $y'$ defined in~\eqref{x-def-general}--\eqref{y-def-general}.
Our analysis splits into two cases depending on the value of~$x'$.

\paragraph{Case~1: $x' \le \tfrac{1}{2}$.}
From~\eqref{x-y-ineq}, we have
\[
\min_{\pi,k}\!\left(1-\mathbb{P}_{\nu^{\pi^k}}\big(\hat{\pi}\notin\{\pi^k,(\pi^k)_{\mathrm{rev}}\}\big)\right)
\le x' \le \frac{1}{2}.
\]
Hence there exists $(\pi,k)$ such that
\[
\mathbb{P}_{\nu^{\pi^k}}\big(\hat{\pi}\notin\{\pi^k,(\pi^k)_{\mathrm{rev}}\}\big)
\ge \frac{1}{2}.
\]
Since the mapping $f_k:\pi\mapsto\pi^k$ is a bijection on $\cS_n$, 
we can equivalently write
\[
\max_{\pi,k}\mathbb{P}_{\nu^{\pi^k}}\big(\hat{\pi}\notin\{\pi^k,(\pi^k)_{\mathrm{rev}}\}\big)
=
\max_{\pi\in\cS_n}\mathbb{P}_{\nu^{\pi}}\big(\hat{\pi}\notin\{\pi,\pi_{\mathrm{rev}}\}\big).
\]
Therefore,
\[
\max_{\pi\in\cS_n}\mathbb{P}_{\nu^{\pi}}\big(\hat{\pi}\notin\{\pi,\pi_{\mathrm{rev}}\}\big)
\ge \frac{1}{2}.
\]
Moreover, whenever $\frac{T\Delta^2}{n\sigma^2}\ge1$,
\[
\frac{1}{2}\ge e^{-1}\ge 
\exp\!\left(-\frac{T\Delta^2}{n\sigma^2}\right),
\]
so that
\[
\max_{\pi\in\cS_n}\mathbb{P}_{\nu^{\pi}}\big(\hat{\pi}\notin\{\pi,\pi_{\mathrm{rev}}\}\big)
\ge 
\exp\!\left(-\frac{T\Delta^2}{n\sigma^2}\right).
\]
This proves the lower bound of Proposition~\ref{thm:lower-bound:all-T} for the case $x'\le\tfrac{1}{2}$.

\medskip
\paragraph{Case~2: $x' \ge \tfrac{1}{2}$.}
For all $k\in[n{-}1]$ and $\pi\in\cS_n$, the sets 
$\{\pi,\pi_{\mathrm{rev}}\}$ and $\{\pi^k,(\pi^k)_{\mathrm{rev}}\}$ 
are disjoint (for $n\ge4$). Hence,
\[
\forall\,k\in[n{-}1],\ \forall\,\pi\in\cS_n:\quad
\mathbb{P}_{\nu^{\pi}}\big(\hat{\pi}\in\{\pi^k,(\pi^k)_{\mathrm{rev}}\}\big)
\le 
\mathbb{P}_{\nu^{\pi}}\big(\hat{\pi}\notin\{\pi,\pi_{\mathrm{rev}}\}\big).
\]
Consequently,
\begin{equation}
\label{y-upper-bound}
y'
\le 
\max_{\pi\in\cS_n,\,k\in[n{-}1]}
\mathbb{P}_{\nu^{\pi}}\big(\hat{\pi}\in\{\pi^k,(\pi^k)_{\mathrm{rev}}\}\big)
\le 
\max_{\pi\in\cS_n}
\mathbb{P}_{\nu^{\pi}}\big(\hat{\pi}\notin\{\pi,\pi_{\mathrm{rev}}\}\big).
\end{equation}

To lower bound $\kl(x',y')$, we apply the numerical inequality
\begin{equation}
\label{eq:fano-simple}
\forall\,p\in[0,1],\ \forall\,q\in(0,1):\quad
p\ln\!\Big(\frac{1}{q}\Big)-\ln(2)\le \kl(p,q),
\end{equation}
whose proof is given in Appendix~\ref{section:small-lem-proof}.

Setting $p=x'$, $q=y'$, and using $x'\ge\tfrac{1}{2}$ together with~\eqref{y-upper-bound}, we obtain
\[
\text{if } y'\in(0,1):\quad
\frac{1}{2}\ln\!\Big(
\frac{1}{\max_{\pi\in\cS_n}\mathbb{P}_{\nu^{\pi}}(\hat{\pi}\notin\{\pi,\pi_{\mathrm{rev}}\})}
\Big)
-\ln(2)
\le 
\kl(x',y').
\]
Combining this with~\eqref{LB:eq-inv-label-general-strat} yields
\[
\text{if } y'\in(0,1):\quad
\exp\!\Big(-\frac{4T\Delta^2}{(n-1)\sigma^2}-2\ln(2)\Big)
\le 
\max_{\pi\in\cS_n}
\mathbb{P}_{\nu^{\pi}}\big(\hat{\pi}\notin\{\pi,\pi_{\mathrm{rev}}\}\big).
\]

Moreover, since $\frac{T\Delta^2}{n\sigma^2}\ge 3$,
\[
\frac{4T\Delta^2}{(n-1)\sigma^2}+2\ln 2
\;\le\;
\frac{8T\Delta^2}{n\sigma^2}
\quad (\text{for all } n\ge 4),
\]
and therefore
\begin{equation}
\label{ccl-thm-LB2-bandit}
\text{if } y' \in (0,1): \quad 
\max_{\pi \in \cS_n} 
\mathbb{P}_{\nu^{\pi}}\big(\hat{\pi} \notin \{\pi, \pi_{\mathrm{rev}}\}\big)
\ge 
\exp\!\left(-\frac{8 T \Delta^2}{n \sigma^2}\right).
\end{equation}

If $y'=1$, then~\eqref{y-upper-bound} gives 
$\max_{\pi\in\cS_n}\mathbb{P}_{\nu^{\pi}}\big(\hat{\pi}\notin\{\pi,\pi_{\mathrm{rev}}\}\big)=1$, 
which also satisfies~\eqref{ccl-thm-LB2-bandit}. 
If $y'=0$, then since $x'\ge\tfrac{1}{2}$, one has $\kl(x',0)=+\infty$, and by~\eqref{LB:eq-inv-label-general-strat} this would imply $T=+\infty$, contradicting the finiteness of~$T$.

This completes the proof of Proposition~\ref{thm:lower-bound:all-T}.
\hfill$\square$

%%%%%%%%%%%%%%

%%%%%%%%%%%%%%%%%%%%%%%%%%%%
%2nfundamental lemma for KL divergences
%%%%%%%%%%%%%%%%%%%%%%%%%%

\section{Proofs of small lemmas and inequalities}
%\ya{I did not read this appendix}
\subsection{Proof of Lemma~\ref{lem:fonda}}\label{proof-fundamental-lemma}

We start by recalling a key inequality from \cite{garivier2019explore}[Lemma 1], which follows from the data processing inequality. Let \(Z\) be an \(\mathcal{F}_T\)-measurable random variable taking values in \([0,1]\), and let \(\mathbb{P}, \mathbb{Q}\) be two distributions on \(\mathcal{F}_T\) with expectations \(\mathbb{E}_P, \mathbb{E}_Q\). Then

\begin{equation}\label{eq:dataproc1}
\mathrm{KL}(\mathbb{P}, \mathbb{Q}) \geq \mathrm{kl}\bigl(\mathbb{E}_P[Z], \mathbb{E}_Q[Z]\bigr).
\end{equation}

Since for any event \(\mathcal{E} \in \mathcal{F}_T\) we have \(\mathbb{P}(\mathcal{E}) = \mathbb{E}_P[\mathbf{1}_{\mathcal{E}}]\), applying \eqref{eq:dataproc1} with \(Z = \mathbf{1}_{\mathcal{E}}\) immediately yields, for any event \(\mathcal{E}\),

\begin{equation}\label{eq:dataproc}
\mathrm{KL}(\mathbb{P}, \mathbb{Q}) \geq \mathrm{kl}\bigl(\mathbb{P}(\mathcal{E}), \mathbb{Q}(\mathcal{E})\bigr).
\end{equation}

Next, for any sequence of events \((\mathcal{E}^i)_{i \leq N}\) with \(\mathcal{E}^i \in \mathcal{F}_T\), and distributions \((\mathbb{P}_i)_{i \leq N}\), \((\mathbb{Q}_i)_{i \leq N}\) on \(\mathcal{F}_T\), the convexity of the \(\mathrm{kl}\)-divergence implies

\begin{equation}\label{eq:conv}
\frac{1}{N} \sum_{i=1}^N \mathrm{kl}\bigl(\mathbb{P}_i(\mathcal{E}^i), \mathbb{Q}_i(\mathcal{E}^i)\bigr) \geq \mathrm{kl}\left(\frac{1}{N} \sum_{i=1}^N \mathbb{P}_i(\mathcal{E}^i), \frac{1}{N} \sum_{i=1}^N \mathbb{Q}_i(\mathcal{E}^i)\right).
\end{equation}

For a detailed proof of this convexity property, see \cite{gerchinovitz2020fano}[Corollary 3].

Now, consider two sequences of seriatoin problems \((\nu^i)_{i \leq N}\), \((\tilde{\nu}^i)_{i \leq N}\) with corresponding distributions \(\mathbb{P}_{\nu^i}\), \(\mathbb{P}_{\tilde{\nu}^i}\). Combining \eqref{eq:dataproc} and \eqref{eq:conv}, we get

\begin{equation}\label{eq:comb}
\frac{1}{N} \sum_{i=1}^N \mathrm{KL}(\mathbb{P}_{\nu^i}, \mathbb{P}_{\tilde{\nu}^i}) \geq \mathrm{kl}\left(\frac{1}{N} \sum_{i=1}^N \mathbb{P}_{\nu^i}(\mathcal{E}^i), \frac{1}{N} \sum_{i=1}^N \mathbb{P}_{\tilde{\nu}^i}(\mathcal{E}^i)\right).
\end{equation}

Finally, we use the well-known chain rule for KL divergences in bandit problems (see \cite{lattimore2020bandit}[Lemma 15.1]),  for any seriation problems \(\nu, \tilde{\nu}\),

\begin{equation}\label{eq:tor}
\sum_{1\le a < b \le n} \mathbb{E}_\nu[N_{\{a,b\}}(T)] \, 
\mathrm{KL}(\nu_{\{a,b\}}, \tilde{\nu}_{\{a,b\}}) 
= \mathrm{KL}(\mathbb{P}_\nu, \mathbb{P}_{\tilde{\nu}}).
\end{equation}
Applying identity~\eqref{eq:tor} to each pair $(\nu^i, \tilde{\nu}^i)$ in~\eqref{eq:comb}
and averaging over $i$ yields the desired inequality.

%%%%%%%%%%%%%%%%%%

%%%%%%%%%%%%%%%

\subsection{Proof of \eqref{eq:klsup}}
\label{appendix:proof-kl-ln}

Define $\psi(\lambda) = \lambda q - \ln(1-p+pe^\lambda)$. 
We have $\psi'(\lambda) = q - \frac{pe^\lambda}{1-p+pe^\lambda}$. Setting $q - \frac{pe^\lambda}{1-p+pe^\lambda} = 0$, gives $\lambda = \ln\!\left(\frac{q(1-p)}{p(1-q)}\right) > 0$ since $q>p$. Now $\frac{d^2}{d\lambda^2} \psi(\lambda) = \frac{p(1-p)e^{\lambda}}{(1-p+pe^{\lambda})^2}(-1) < 0$ since $p<1$,
thus, the function $\psi(\lambda)$ achieves its maximum at 
$\lambda = \ln\!\left(\frac{q(1-p)}{p(1-q)}\right)$.

It remains to note that 
\begin{align*}
\psi\!\left(\ln\!\left(\frac{q(1-p)}{p(1-q)}\right)\right)
&= q\ln\!\left(\frac{q(1-p)}{p(1-q)}\right)
   - \ln\!\left(1-p+ p \frac{q(1-p)}{p(1-q)}\right) \\
&= q\ln\left(\frac{q}{p}\right) +q\ln\left(\frac{1-q}{1-p}\right) - \ln\left(\frac{(1-p)(1-q)}{1-q} + \frac{q(1-p)}{1-q}\right)\\
&= q\ln\left(\frac{q}{p}\right) +q\ln\left(\frac{1-q}{1-p}\right) - \ln\left(\frac{1-p}{1-q}\right) = \kl(q,p)    
\end{align*}

%%%%%%%%%%%%%%
\subsection{Proof of \eqref{eq:fano} and \eqref{eq:fano-simple}}
\label{section:small-lem-proof}

Let \( x \in [0,1] \) and \( y \in (0,1) \). We have
\begin{align*}
    \mathrm{kl}(x,y) 
    &= x \ln\left(\frac{x}{y}\right) + (1-x) \ln\left(\frac{1-x}{1-y}\right) \\
    &= x \ln\left(\frac{1}{y}\right) + (1-x) \ln\left(\frac{1}{1-y}\right) + x \ln(x) + (1-x) \ln(1-x) \\
    &\geq x \ln\left(\frac{1}{y}\right) + x \ln(x) + (1-x) \ln(1-x) \\
    &\geq x \ln\left(\frac{1}{y}\right) - \ln(2),
\end{align*}
where the last inequality follows from the fact that the entropy $H(x) := -x \ln(x) - (1-x) \ln(1-x)$
of a Bernoulli random variable \(X \sim \text{Bern}(x)\) is maximized at \(x = \frac{1}{2}\), with
\[
H(x) \leq H\left(\frac{1}{2}\right) = \ln(2).
\]

\subsection{Hoeffding inequality}
\begin{lem}\label{lem:chernoff}
Let $\sigma>0$, and $X_1,...,X_N$ be $N$ independent zero mean $\sigma$ sub-Gaussian random variables. For all $\epsilon >0$,  we have that
$$\P\left( \Big{|} \frac{1}{N}\sum_{i=1}^N X_i \Big{|} \geq \epsilon\right) \leq 2\exp\left(-\frac{N \epsilon^2}{2\sigma^2}\right)$$
\end{lem}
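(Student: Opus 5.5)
This is the standard sub-Gaussian Hoeffding bound, and I will prove it by a Chernoff argument applied separately to each tail. Set $S_N = \sum_{i=1}^N X_i$ and split the event $\{|S_N/N| \ge \epsilon\}$ into the upper tail $\{S_N \ge N\epsilon\}$ and the lower tail $\{S_N \le -N\epsilon\}$. It suffices to bound each tail by $\exp(-N\epsilon^2/(2\sigma^2))$; a union bound then gives the factor $2$.

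For the upper tail, fix $u>0$ and apply Markov's inequality to $e^{uS_N}$:
\[
\P(S_N \ge N\epsilon) \le e^{-uN\epsilon}\,\E\big[e^{uS_N}\big].
\]
Since the $X_i$ are independent, the moment generating function factorizes as $\E[e^{uS_N}] = \prod_{i=1}^N \E[e^{uX_i}]$. Each $X_i$ is zero mean and $\sigma$-sub-Gaussian in the sense of the paper's footnote, so each factor is at most $e^{u^2\sigma^2/2}$, and the product is at most $e^{Nu^2\sigma^2/2}$. This gives
\[
\P(S_N \ge N\epsilon) \le \exp\!\Big(-uN\epsilon + \tfrac{N u^2 \sigma^2}{2}\Big).
\]
The exponent is minimized at $u = \epsilon/\sigma^2$, where it equals $-N\epsilon^2/(2\sigma^2)$.

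For the lower tail, note that $-X_i$ is also zero mean and $\sigma$-sub-Gaussian, because the defining inequality holds for every $u \in \R$. Running the same argument on $-S_N$ gives the identical bound.

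No step here is a real obstacle. The only points to check are that the sub-Gaussian definition allows negative $u$, which is what justifies the symmetric treatment of the lower tail, and that independence is used exactly once, to factorize the moment generating function.
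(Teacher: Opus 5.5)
Your proof is correct and follows the paper's Chernoff argument step for step: Markov's inequality on the exponential moment, factorization by independence, the sub-Gaussian MGF bound, optimization of the free parameter, symmetry for the lower tail, and a union bound. The only cosmetic difference is the threshold. You work directly at level $N\epsilon$ with $u=\epsilon/\sigma^2$. The paper bounds the tail of $\sum_i X_i$ at level $\epsilon$ with $\lambda=\epsilon/(N\sigma^2)$ and then rescales $\epsilon$ to $N\epsilon$.
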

\begin{proof}
By applying the Markov inequality we have for all $\epsilon,\lambda>0$,
\begin{align*}
\P\left(\sum_{i=1}^N X_i \geq \epsilon\right) &= \P\left(\exp\left(\lambda\sum_{i=1}^N X_i\right) \geq \exp(\lambda \epsilon)\right)  \\
&\leq \frac{\E\left[\exp(\lambda\sum_{i=1}^N X_i)\right]}{\exp(\lambda \epsilon)}\\
&\leq \frac{\prod_{i=1}^N \E\left[\exp(\lambda X_i)\right]}{\exp(\lambda \epsilon)}\\
&\leq \exp\left(\frac{N\lambda^2\sigma^2}{2} - \lambda \epsilon\right)\,,
\end{align*}
where the second inequality follows from independence of the $X_i$'s and the final inequality follows from the definition of sub-Gaussian random variables. Setting $\lambda = \frac{\epsilon}{N\sigma^2}$ we have,
$$\P(\sum_{i=1}^N X_i \geq \epsilon) \leq \exp\left(-\frac{\epsilon^2}{2N\sigma^2}\right)\,.$$
This then implies,
$$\P\left(\frac{1}{N}\sum_{i=1}^N X_i \geq \epsilon\right) \leq \exp\left(-\frac{N\epsilon^2}{2\sigma^2}\right)\,.$$
By symmetry, the same bound holds for 
\[
\P\!\left(\frac{1}{N}\sum_{i=1}^N X_i \le -\epsilon\right) \le \exp\!\left(-\frac{N\epsilon^2}{2\sigma^2}\right).
\]
A union bound completes the proof.
\end{proof}

%%%%%%%%%%%%%%%%%%

%%%%%%%%%%%%%%%%%%%

%%%%%%%%%%%%%%%%%%%%%

%%%%%%%%%%%%%%%%%%%%

%%%%%%%%%%%%%
% appendix simu + real data
%%%%%%%%%%%%%%%%

\section{Numerical Simulations and Real-Data Application}
\label{sec:exp}

This appendix provides additional details on the numerical experiments presented in Section~\ref{sec:empirical} of the main text.
In Sub-appendix~\ref{app:num}, we describe the experimental setup used for the synthetic data experiments. 
Sub-appendix~\ref{app:real} details the real-data analysis based on RNA sequence data. 
Finally, Sub-appendix~\ref{appendix-pseudocode} provides the pseudocode of all competing algorithms used in our experiments.

%%%%%%

\subsection{Numerical simulations}
\label{app:num}

We first evaluate the empirical performance of the \asii procedure on synthetic data: 
%To the best of our knowledge, no seriation algorithms have been specifically designed for the active setting. 
we  compare \textsc{ASII} to established batch benchmarks: the \as algorithm of~\cite{cai2022matrix}, the classical \spec method~\cite{atkins1998spectral}, and a simple \textsc{Naive Insertion} baseline (a binary search without backtracking). 
Pseudocodes for all competitors are provided in Appendix~\ref{appendix-pseudocode}.

Batch algorithms typically operate from a single noisy observation $Y = M + E$, where entries of $E$ are independent centered $\sigma$ sub-Gaussian noise. 
To mirror this setting in our active framework, we distribute the sampling budget $T$ evenly among all pairs $\{i,j\}$, so that $Y_{ij} = \widehat M_{ij}$ represents the empirical mean of $O(T/n^2)$ noisy samples. 
This correspondence allows direct comparison between active and batch methods under a common sampling budget.

\begin{figure}
    \centering
    \includegraphics[width=0.70\linewidth]{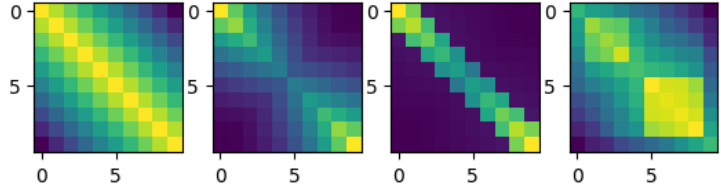}
    \caption{Representation of the Robinson matrices $R^{(s)}$, $s \in \{1, 2, 3, 4\}$, corresponding to the four scenarios. Scenarios (1)–(3) have a minimal gap $\Delta$, while in scenario (4) the minimal gap is random but lower bounded by $\Delta$. The matrix $R^{(1)}$ is Toeplitz, while $R^{(2)}$, $R^{(3)}$, and $R^{(4)}$ are not. Here, $\Delta = 0.2$.}
    \label{fig:scenario}
\end{figure}

\paragraph{Four scenarios.} We consider four scenarios illustrated in Figure~\ref{fig:scenario}. For the first three scenarios, the Robinson matrices are defined as follows for $i > j$: \
\[
R^{(1)}_{i,j} = \Delta n\left(1 - \frac{|i-j|}{n}\right) \quad \textrm{and} \quad R^{(2)}_{i,j} = \Delta(n - |i-j|)\max(j, n-i)^{1.5}
\]
% old scenario 3
% \[
% R^{(3)}_{i,j} = \Delta(n - |i-j|)\max(j, n-i) + 
% \begin{cases}
% 1000 \quad & \text{if } |i-j| \leq n/4, \\
% 0 \quad & \text{otherwise.}
% \end{cases}
% \]

% \[
% R^{(3)}_{i,j} = \Delta(n - |i-j|)+\max(j, n-i)^2 
% \]

\[
R^{(3)}_{i,j} =  + 
\begin{cases}
10\Delta(n - |i-j|)\max(j, n-i) \quad & \text{if } |i-j| \leq n/4, \\
\Delta(n - |i-j|)\max(j, n-i) \quad & \text{otherwise.}
\end{cases}
\]

In the fourth scenario, the matrix {{\small $R^{(4)}$}}  is generated randomly as follows. For $i \in [n]$, the diagonal element {{\small $R^{(4)}_{i,i}$}} is drawn from a Uniform$(1,10)$ distribution. The rest of the matrix is generated sequentially, where for all $i > j$,
\[
R^{(4)}_{i,j} = \min(R^{(4)}_{i-1,j}, R^{(4)}_{i,j+1}) - \mathrm{Uniform}(\Delta, 10\Delta).
\]
All matrices are symmetric, with {{\small $R^{(s)}_{j,i} = R^{(s)}_{i,j}$}} for $j > i$ and $s \in [4]$. The parameter $\Delta > 0$ is varied across scenarios.

We define the four similarity matrices $M = R^{(s)}_\pi$, $s \in [4]$, where $n = 10$ and $\pi$ is uniformly drawn from the set of permutations of $[n]$. We set $\sigma = 1$ and $T = 10000$.

% old exp
% \begin{figure}
% \begin{minipage}[b]{0.47\linewidth}
%     \centering
%     \includegraphics[width=0.75\linewidth]{scen1n10T10000.png}  
% \end{minipage}\hfill
% \begin{minipage}[b]{0.47\linewidth}
%     \centering
%     \includegraphics[width=0.75\linewidth]{scen5n10T1o000.png}
% \end{minipage}\hfill
% \begin{minipage}[b]{0.47\linewidth}
%     \centering
%     \includegraphics[width=0.75\linewidth]{scen55n10T10000.png}
% \end{minipage}\hfill
% \begin{minipage}[b]{0.47\linewidth}
%     \centering
%     \includegraphics[width=0.75\linewidth]{scen8n10T10000.png}
% \end{minipage}
% \caption{ 
% Empirical error probabilities  for \as, \asii, \spec, and \textsc{Naive Insertion}  as the  parameter $\Delta$ varies. Scenarios (1)–(4) are displayed from left to right and top to bottom. Each experiment involves $n=10$ items and $T=10{,}000$ observations. Each point is averaged over 50 Monte Carlo runs.}
% \label{fig:sim}
% \end{figure}    

\begin{figure}
\begin{minipage}[b]{0.47\linewidth}
    \centering
    \includegraphics[width=0.70\linewidth]{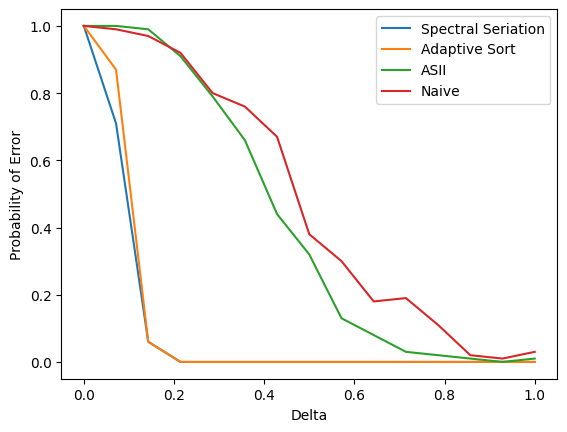}  
\end{minipage}\hfill
\begin{minipage}[b]{0.47\linewidth}
    \centering
    \includegraphics[width=0.70\linewidth]{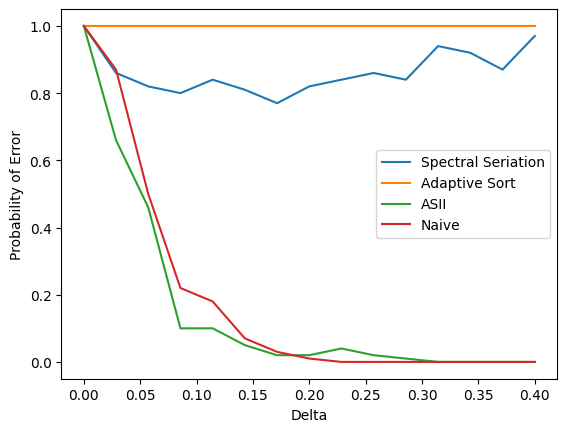}
\end{minipage}\hfill
\begin{minipage}[b]{0.47\linewidth}
    \centering
    \includegraphics[width=0.70\linewidth]{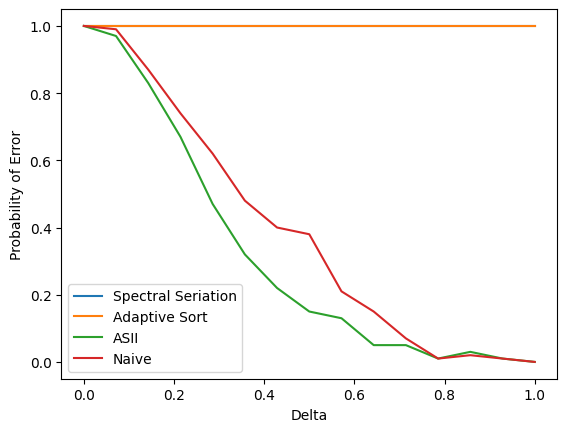}
\end{minipage}\hfill
\begin{minipage}[b]{0.47\linewidth}
    \centering
    \includegraphics[width=0.70\linewidth]{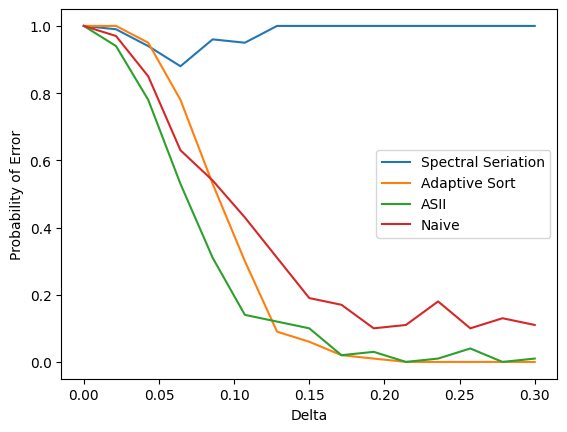}
\end{minipage}
\caption{ 
Empirical error probabilities  for \as, \asii, \spec, and \textsc{Naive Insertion}  as the  parameter $\Delta$ varies. Scenarios (1)–(4) are displayed from left to right and top to bottom. Each experiment involves $n=10$ items and $T=10{,}000$ observations. Each point is averaged over 100 Monte Carlo runs.}
\label{fig:sim}
\end{figure}    

\begin{figure}
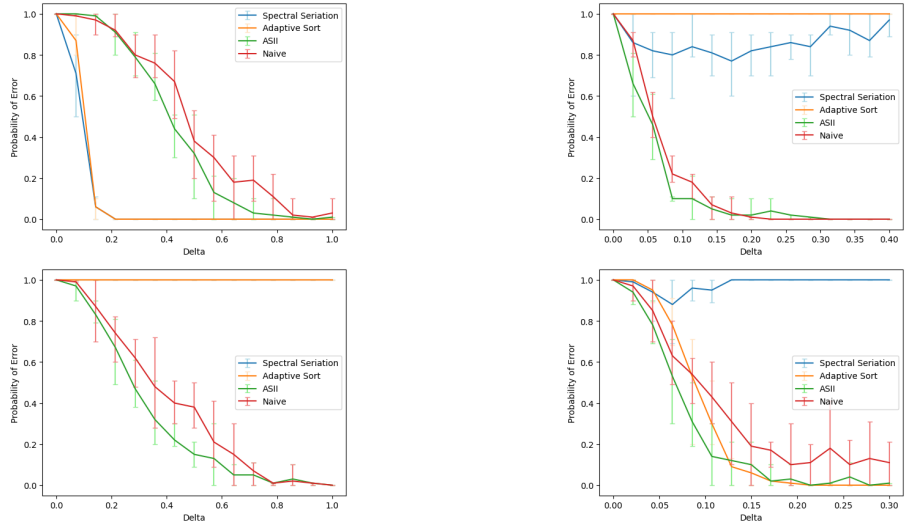

\begin{minipage}[b]{0.47\linewidth}
    \centering
    \includegraphics[width=0.70\linewidth]{newexp/2scen1n10t10000b.png}  
\end{minipage}\hfill
\begin{minipage}[b]{0.47\linewidth}
    \centering
    \includegraphics[width=0.70\linewidth]{newexp/2scen2n10t10000b.png}
\end{minipage}\hfill
\begin{minipage}[b]{0.47\linewidth}
    \centering
    \includegraphics[width=0.70\linewidth]{newexp/newscen3n10t10000b.png}
\end{minipage}\hfill
\begin{minipage}[b]{0.47\linewidth}
    \centering
    \includegraphics[width=0.70\linewidth]{newexp/2scen4n10t10000b.png}
\end{minipage}
\caption{
Same experiment as in Fig.~\ref{fig:sim}, with error bars added. 
Empirical error probabilities for \as, \asii, \spec, and \textsc{Naive Insertion} as the parameter~$\Delta$ varies. 
For each value of~$\Delta$, the 100 Monte Carlo runs are split into 10 equal groups; 
the error bars indicate the 0.1 and 0.9 quantiles of the empirical error across these groups.
}
\label{fig:simer}
\end{figure}    

Above we describe the experimental setup. 
The resulting empirical error curves are shown in Figures~\ref{fig:sim} and~\ref{fig:simer}; discussion of their performance is given in Section~\ref{sec:empirical} of the main paper.

\subsection{Application to real data}\label{app:real}

We now assess the performance of \textsc{ASII} on a real single-cell RNA sequencing dataset, following the biological setup previously studied in~\cite{cai2022matrix}. The goal is to infer the latent temporal ordering of cells during a differentiation process, based on pairwise similarities of gene-expression profiles.

\paragraph{Dataset and preprocessing.}
We use the dataset from~\cite{guo2015transcriptome}, which contains RNA sequencing data for $n=242$ human primordial germ cells (PGCs) collected at developmental ages of 4, 7, 10, 11, and 19~weeks. Each observation corresponds to a high-dimensional vector of gene-expression counts across more than $230{,}000$ genes.  
Following the preprocessing pipeline of~\cite{cai2022matrix}, we employ the \texttt{Seurat} package~\cite{seurat} in \texttt{R} to normalize and reduce the dimensionality of the data:
\begin{enumerate}
    \item normalization using \texttt{NormalizeData};
    \item identification of highly variable genes with \texttt{FindVariableFeatures};
    \item standardization via \texttt{ScaleData};
    \item principal component analysis with $d=10$ components.
\end{enumerate}
Let $X_1,\dots,X_n \in \mathbb{R}^{10}$ denote the PCA embeddings of the cells, and $D$ the resulting pairwise Euclidean distance matrix. The similarity matrix is then defined as
\[
M = c\,\mathbf{1}_n - D,
\qquad \text{where } c = \|D\|_\infty.
\]
This construction ensures that larger similarities correspond to smaller distances between cells.

\paragraph{Results.}
Figure~\ref{fig:rna} shows the similarity matrix $M$ under a random permutation of the cells (left) and after reordering by \textsc{ASII} (right).  
The recovered ordering reveals a clear block-diagonal structure consistent with developmental progression: dissimilar regions (blue) are pushed to the boundaries, while groups of highly similar cells (yellow and green) align along the diagonal.  
Although this dataset is far from satisfying the assumptions of our theoretical model, \textsc{ASII} still recovers biologically meaningful organization, demonstrating robustness to strong model misspecification.

\paragraph{Discussion.}
This experiment illustrates that \textsc{ASII} can yield interpretable orderings even on complex, high-dimensional biological data that deviate substantially from idealized Robinson structures.  
It therefore provides empirical evidence that active seriation remains effective beyond controlled synthetic settings, supporting its potential relevance for practical data-analysis tasks.  Together with the synthetic experiments presented above, these results confirm the robustness of active seriation and its potential relevance for real-world data-analysis tasks.

\begin{figure}[t]
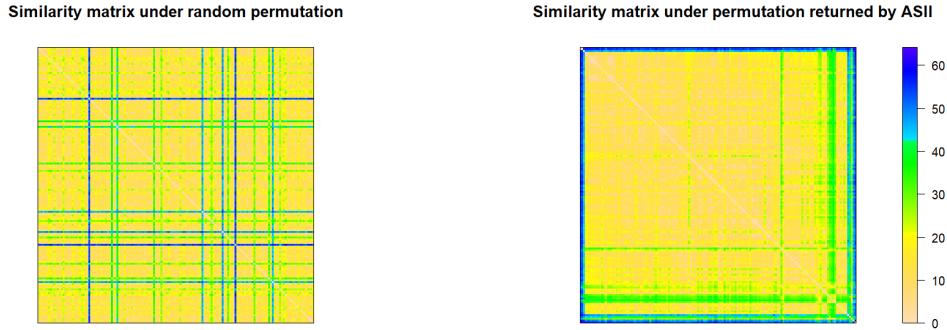

\begin{minipage}[b]{0.47\linewidth}
    \centering
    \includegraphics[width=0.95\linewidth]{simrnd.png}  
\end{minipage}\hfill
\begin{minipage}[b]{0.47\linewidth}
    \centering
    \includegraphics[width=0.95\linewidth]{assim.png}
\end{minipage}
\caption{
PGC similarity matrix before and after reordering by \textsc{ASII}.
Left: random permutation; right: ordering inferred by \textsc{ASII}.
The recovered structure highlights coherent developmental trajectories among cells.}
\label{fig:rna}
\end{figure}

%%%%%%%%%%%%
% Pseudo codes and ????
%%%%%%%%%%%%%%%%

\subsection{Pseudocodes of competitor algorithms}
\label{appendix-pseudocode}

In this appendix, we present the pseudocodes of the competitor algorithms used in the numerical simulations: the Naive Binary Search algorithm, the \as algorithm from \cite{cai2022matrix}, and the \spec algorithm introduced by \cite{atkins1998spectral}.

\paragraph{Naive Insertion Algorithm}  
The naive insertion procedure is identical to the \textsc{ASII} procedure described in Section~\ref{algo-section}, except that whenever \textsc{ASII} invokes the \bbs subroutine, the naive algorithm instead calls the \naive subroutine defined below. Recall that the \textsc{Test} subroutine was introduced at the start of Section~\ref{algo-section}.

\begin{algorithm}[H]
 \caption{\textsc{Naive Binary Search}}\label{algo:nBBS}
\begin{algorithmic}[1]
\REQUIRE $(\pi_1,\ldots,\pi_{k-1})$, a permutation of $[k-1]$
\ENSURE $\pi_k \in [k]$
\STATE Initialize $(l_0, r_0) \in [k-1]^2$ such that $(\pi_{l_0}, \pi_{r_0}) = (1, k-1)$
\WHILE{$\pi_{r_{t-1}} - \pi_{l_{t-1}} > 1$}
    \STATE Let $m_t \in [k-1]$ satisfy $\pi_{m_t} = \left\lfloor \frac{\pi_{l_{t-1}} + \pi_{r_{t-1}}}{2} \right\rfloor$
    \IF{\textsc{Test}$\left(k, l_{t-1}, m_t, \left\lfloor \frac{T}{n \log(k)} \right\rfloor \right) = 0$}
        \STATE $(l_t, r_t) \leftarrow (l_{t-1}, m_t)$
    \ELSE
        \STATE $(l_t, r_t) \leftarrow (m_t, r_{t-1})$
    \ENDIF
\ENDWHILE
\STATE Set $\pi_k \leftarrow \pi_{l_t} + 1$
\end{algorithmic}
\end{algorithm}

\paragraph{Adaptive Sorting Algorithm}  
The \as algorithm from \cite{cai2022matrix} operates in a batch setting, where the learner observes a single noisy matrix  
\begin{equation}
\label{batch-setting}
Y = M + Z,
\end{equation}
with $Z$ a noise matrix having independent zero-mean $\sigma$-sub-Gaussian entries, for $\sigma>0$. Algorithms designed for the batch setting are adapted to our active setting as follows, the algorithm first divides the budget $T$ uniformly across all coefficients of the matrix, i.e. each pair $(i,j)$ is sampled $\lfloor T/n^2 \rfloor$ times, so as to generate the noisy matrix $Y$. When run in the context of our active setting, algorithms that are designed exclusively for the batch setting naturally require $T\geq n^2$. 

For each $i \in [n]$, $Y_{i,-i}$ denotes the $i$th row of $Y$ excluding the $i$th entry, the \as procedure is then as follows:

\begin{algorithm}[H]
 \caption{\textsc{Adaptive Sorting (AS)}}\label{algo:AS}
\begin{algorithmic}[1]
\REQUIRE $Y \in \mathbb{R}^{n \times n}$
\ENSURE $\pi = (\pi_1, \ldots, \pi_n)$
\STATE Compute scores $S_i = \sum_{j\in [n]\setminus{\{i\}}} Y_{ij}$ for each $i \in [n]$
\STATE Set $\pi_1 = \arg\min_{i \in [n]} S_i$
\FOR{$i = 2, \ldots, n-1$}
    \STATE Select $\pi_i \in \arg\min_{j \in [n] \setminus \{\pi_1, \ldots, \pi_{i-1}\}} \|Y_{j,-j} - Y_{\pi_{i-1}, -\pi_{i-1}}\|_1$
\ENDFOR
\end{algorithmic}
\end{algorithm}

\paragraph{Spectral Seriation Algorithm}  
The widely used Spectral Seriation algorithm is also designed for the batch setting described in~\eqref{batch-setting}. It proceeds as follows:

\begin{algorithm}[H]
 \caption{\textsc{Spectral Seriation}}\label{algo:spec}
\begin{algorithmic}[1]
\REQUIRE $Y \in \mathbb{R}^{n \times n}$
\ENSURE $\pi = (\pi_1, \ldots, \pi_n)$
\STATE Compute the graph Laplacian $L = D - Y$, where $D = \mathrm{diag}(d_1, \ldots, d_n)$ with $d_i = \sum_{j=1}^n Y_{ij}$
\STATE Let $\hat{v}$ be the eigenvector associated with the second smallest eigenvalue of $L$
\STATE Define $\pi$ as the permutation that sorts the entries of $\hat{v}$ in ascending order
\end{algorithmic}
\end{algorithm}

%%%%%%%%%%%%%%%%%

\end{document}